\newtheorem{definition}{Definition}
\newtheorem{assumption}{Assumption}
\newtheorem{fact}{Fact}
\newtheorem{lemma}{Lemma}
\newtheorem{corollary}{Corollary}
\newcommand{\hatZunsafe}{\hat{Z}_{\text{unsafe}}}
\newcommand{\hatZsafe}{\hat{Z}_{\text{safe}}}
\newcommand{\Zsafe}{Z_{\text{safe}}}
\newcommand{\Zexplore}{{Z}_{\text{explore}}}
\newcommand{\Znext}{{Z}_{\text{next}}}
\newcommand{\Zreturn}{{Z}_{\text{return}}}
\newcommand{\Zreturnable}{{Z}_{\text{returnable}}}
\newcommand{\Zclosed}{{Z}_{\text{closed}}}
\newcommand{\Zreachable}{{Z}_{\text{reachable}}}
\newcommand{\Zcandidate}{{Z}_{\text{candidate}}}
\newcommand{\Zbad}{{Z}_{\text{bad}}}
\newcommand{\tildeZgood}{\tilde{Z}_{\text{good}}}
\newcommand{\tildeZbad}{\tilde{Z}_{\text{bad}}}
\newcommand{\Zedge}{{Z}_{\text{edge}}}
\newcommand{\optZgoal}{\opt{Z}_{\text{goal}}}
\newcommand{\Hcom}{H_{\text{com}}}
\newcommand{\picom}{\pi_{\text{com}}}
\newcommand{\Mgoal}{M_{\text{goal}}}
\newcommand{\Mexplore}{M_{\text{explore}}}
\newcommand{\Mswitch}{M_{\text{switch}}}
\newcommand{\optMgoal}{\opt{M}_{\text{goal}}}
\newcommand{\optMexplore}{\opt{M}_{\text{explore}}}
\newcommand{\optMswitch}{\opt{M}_{\text{switch}}}
\newcommand{\optTgoal}{\opt{T}_{\text{goal}}}
\newcommand{\pisafe}{{\pi}_{\text{safe}}}
\newcommand{\pireach}{{\pi}_{\text{reach}}}
\newcommand{\pireturn}{{\pi}_{\text{return}}}
\newcommand{\pivisit}{{\pi}_{\text{visit}}}
\newcommand{\pigoal}{\pi_{\text{goal}}}
\newcommand{\optpiexplore}{\opt{\pi}_{\text{explore}}}
\newcommand{\optpigoal}{\opt{\pi}_{\text{goal}}}
\newcommand{\optpiswitch}{\opt{\pi}_{\text{switch}}}
\newcommand{\optpi}{\opt{\pi}}
\newcommand{\optrhogoal}{\opt{\rho}_{\text{goal}}}
\newcommand{\sinit}{s_{\text{init}}}
\newcommand{\optQexplore}{\opt{Q}_{\text{explore}}}
\newcommand{\optQgoal}{\opt{Q}_{\text{goal}}}
\newcommand{\optQswitch}{\opt{Q}_{\text{switch}}}
\newcommand{\Rexplore}{{R}_{\text{explore}}}
\newcommand{\Rgoal}{{R}_{\text{goal}}}
\newcommand{\Rswitch}{{R}_{\text{switch}}}
\renewcommand{\Pr}{\mathrm{P}}
\newcommand{\oset}[3][0.2ex]{%
  \mathrel{\mathop{#3}\limits^{
    \vbox to#1{\kern-2\ex@
    \hbox{$#2$}\vss}}}}
\newcommand{\del}[1]{\oset{\scriptscriptstyle\Delta}{#1}}
\newcommand{\opt}[1]{\overline{#1}} 
\newcommand{\delT}{\del{T}}
\newcommand\numberthis{\addtocounter{equation}{1}\tag{\theequation}}
\newcommand{\pluseq}{\mathrel{{+}{=}}}
\begin{document}

%

%

\twocolumn[

\aistatstitle{Provably Safe PAC-MDP Exploration Using Analogies}

\aistatsauthor{ Melrose Roderick \And Vaishnavh Nagarajan \And J. Zico Kolter }

\aistatsaddress{ Carnegie-Mellon University } ]

\begin{abstract}
A key challenge in applying reinforcement learning to safety-critical domains is understanding how to balance exploration (needed to attain good performance on the task) with safety (needed to avoid catastrophic failure).
Although a growing line of work in reinforcement learning has investigated this area of ``safe exploration,'' most existing techniques either 1) do not guarantee safety during the actual exploration process; and/or 2) limit the problem to a priori known and/or deterministic transition dynamics with strong smoothness assumptions. 
Addressing this gap, we propose Analogous Safe-state Exploration (ASE), an algorithm for provably safe exploration in Markov Decision Processes (MDPs) with unknown, stochastic dynamics. Our method exploits analogies between state-action pairs to safely learn a near-optimal policy in a PAC-MDP (Probably Approximately Correct-MDP) sense.
Additionally, ASE also guides exploration towards the most task-relevant states, which empirically results in significant improvements in terms of sample efficiency, when compared to existing methods.
Source code for the experiments is available at \url{https://github.com/locuslab/ase}.
\end{abstract}

\section{Introduction}
Imagine you are Phillipe Petit in 1974, about to make a tight-rope walk between two thousand-foot-tall buildings.
There is no room for error.
You would want to be certain that you could successfully walk across without falling.
And to do so, you would naturally want to practice walking a tightrope on a similar length of wire, but only a few feet off the ground, where there is no real danger.

This example illustrates one of the key challenges in applying reinforcement learning (RL) to safety-critical domains, such as autonomous driving or healthcare, where a single mistake could cause significant harm or even death.
While RL algorithms have been able to significantly improve over human performance on some tasks in the average case, most of these algorithms do not provide any guarantees of safety either during or after training, making them too risky to be used in real-world, safety-critical domains.

Taking inspiration from the tight-rope example, we propose a new approach to safe exploration in reinforcement learning.
Our approach, Analogous Safe-state Exploration (ASE), seeks to explore state-action pairs that are analogous to those along the path to the goal, but are guaranteed to be safe.
Our work fits broadly into the context of a great deal of recent work in safe reinforcement learning, but compared with past work, our approach is novel in that 1) it guarantees safety during exploration in a stochastic, unknown environment (with high probability), 2) it finds a near-optimal policy in a PAC-MDP (Probably Approximately Correct-MDP) sense, and 3) it guides exploration to focus only on state-action pairs that provide necessary information for learning the optimal policy.
Specifically, in our setting we assume our agent has access to a set of initial state-action pairs that are guaranteed to be safe and a function that indicates the similarity between state-action pairs.
Our agent constructs an optimistic policy, following this policy only when it can establish that this policy won't lead to a dangerous state-action pair.
Otherwise, the agent explores state-action pairs that inform the safety of the optimistic path.

In conjunction with proposing this new approach, we make two main contributions.
First, we prove that ASE guarantees PAC-MDP optimality, and also safety of the {\em entire} training trajectory, with high probability.
To the best of our knowledge, this is the first algorithm with this two-fold guarantee in 
 stochastic environments.
Second, we evaluate ASE on two illustrative MDPs, and show empirically that our proposed approach substantially improves upon existing PAC-MDP methods, either in safety or sample efficiency, as well as existing methods modified to guarantee safety.




\section{Related Work}

\textbf{Safe reinforcement learning.}
Many safe RL techniques either require sufficient prior knowledge to guarantee safety a priori or promise safety only during deployment and not during training/exploration.
Risk-aware control methods \citep{fleming95risk, blackmore10control, ono15riskaware}, for example, can compute safe control policies even in situations where the state is not known exactly, but require the dynamics to be known a priori.
Similar works \citep{perkins02lyapunov, hans08safeexpl} allow for learning unknown dynamics, but assume sufficient prior knowledge to determine safety information before exploring.
Constrained-MDPs (C-MDPs) \citep{altman1999constrained, achiam17cpo, taleghan2018efficient} and Robust RL \citep{wiesemann2013robust, lim2013reinforcement, nilim2005robust, ostafew15robust, aswani13provably}, on the other hand, are able to learn the dynamics and safety information, but promise safety only during deployment.
Additionally, there are complications with using C-MDPs, such as optimal policies being stochastic and the constraints only holding for a subset of states \citep{taleghan2018efficient}.

Other works \citep{ wachi18safeexpl,berkenkamp17stability, turchetta16safeexpl, akametalu14reachability, moldovan12safeexpl} do consider the problem of learning on unknown environments while also ensuring safety throughout training.
Indeed,
both our work and these works
 rely on a notion of similarity between state-action pairs in order to gain critical safety knowledge. For example, \citep{ wachi18safeexpl,berkenkamp17stability, turchetta16safeexpl, akametalu14reachability}
make assumptions about the regularity of the transition or safety functions of the environment,
which allows them to model the uncertainty in these functions using Gaussian Processes (GPs).
Then, by examining the worst-case estimate of this model, they guarantee safety on continuous environments. 
We also refer the reader to a rich line of work outside of the safety literature that has studied similarity metrics in RL, 
in order to improve computation time of planning and sample complexity of exploration \citep{givan2003equivalence, taylor2009bounding, abel2017near, kakade2003exploration,taiga2018approximate} (see Appendix~\ref{sec:metrics} for more discussion).

However, there are two key differences between the above works \citep{ wachi18safeexpl,berkenkamp17stability, turchetta16safeexpl, akametalu14reachability,moldovan12safeexpl}  and ours.
First, (with the exception of \cite{wachi18safeexpl}), these approaches are not reward-directed,
and instead focus on only exploring the state-action space as much as possible. 
Second, although the GP-based methods 
\citep{ wachi18safeexpl,berkenkamp17stability, turchetta16safeexpl, akametalu14reachability}
help capture uncertainty,
they do not model inherent stochasticity in the environment and instead assume the true transition function to be deterministic.
Accounting for this stochasticity presents significant algorithmic challenges (see Appendix~\ref{sec:methodology}).
To the best of our knowledge, \citet{moldovan12safeexpl} is the only work that tackles learning on environments with unknown, stochastic dynamics.
Their method guarantees safety by ensuring there always exists a policy to return to the start state.
They do, however, assume that the agent knows a-priori a function that can compute the transition dynamics given observable attributes of the states.
Our method, however, does not assume known transition functions. 
Instead it learns the dynamics of state-actions that it has established to be safe, and extends this knowledge to potentially unsafe
state-actions.


\textbf{PAC-MDP learning.}
Sample efficiency bounds for RL fall into two main categories: 1) regret \citep{jaksch2010near} and 2) PAC (Probably Approximately Correct) bounds \citep{strehl2009reinforcement, fiechter1994efficient}.
For our analysis we use the PAC, specifically PAC-MDP \citep{strehl2009reinforcement}, framework.
PAC-MDP bounds bound the number of $\epsilon$-suboptimal steps taken by the learning agent.
PAC-MDP bounds have been shown for many popular exploration techniques, including R-Max \citep{brafman2002r} and a slightly modified Q-Learning \citep{strehl2006pac}.
While R-Max is PAC-MDP, it explores the state-action space exhaustively, which can be inefficient in large domains.
Another PAC-MDP algorithm, Model-Based Interval Estimation (MBIE) \citep{strehl2008analysis}, outperforms the sample efficiency of R-Max by only exploring states that are potentially along the path to the goal.
Our work seeks to extend this algorithm to safety-critical domains and guarantee safety during exploration.

\section{Problem Setup}

We model the environment as a Markov Decision Process (MDP), a 5-tuple $\langle S, A, R, T, \gamma \rangle$ with discrete, finite sets of states $S$ and actions $A$, a {\em known, deterministic} reward function $R: S \times A \to \mathbb{R}$,  an {\em unknown, stochastic} dynamics function $T: S \times A \to \mathcal{P}_{S}$ which maps a state-action pair to a probability distribution over next states, and discount factor $\gamma \in (0,1)$.
We assume the environment has a fixed initial state and denote it by $\sinit$.
We also assume that the rewards are known a-priori and bounded between $-1$ and $1$; the rewards that are negative denote dangerous state-actions.\footnote{Note that our work can be easily extended to have a separate safety and reward function, but we have combined them in this work for convenience.}
This of course means that the agent knows a priori what state-actions are ``immediately'' dangerous; but we must emphasize that the agent is still faced with the non-trivial challenge of learning about other a priori unknown state-actions that can be dangerous in the long-term -- we elaborate on this in the ``Safety'' section below.
Also note that while most RL literature does not assume the reward function is known, we think this is a reasonable assumption for many real-world problems where reward functions are constructed by engineers.
Moreover, this assumption is not uncommon in RL theory \citep{szita2010model, lattimore2012pac}.

\textbf{Analogies.}
As highlighted in \citet{turchetta16safeexpl}, some prior knowledge about the environment is required for ensuring the agent never reaches a catastrophic state.
In prior work, this knowledge is often provided as some notion of similarity between state-action pairs; for example, kernel functions used in previous work employing GPs to model dynamics or Lipschitz continuity assumptions placed on the dynamics.
Intuitively, such notions of similarity can be exploited to 
learn about unknown (and potentially dangerous) state-action pairs, by exploring a ``proxy'' state-action pair that is sufficiently similar and known to be safe.
Below, we define a notion of similarity, but the key difference between this and previous formulations is that ours also applies to stochastic environments. Specifically, we introduce the notation of \emph{analogies} between state-action pairs.
More concretely, the agent is given {\em an analogous state function}  
$\alpha: (S \times A \times S) \times (S \times A) \to S$ and a {\em pairwise state-action distance mapping} $\Delta: (S \times A) \times (S \times A) \to [0,2]$ such that, for any $(s,a,\tilde{s},\tilde{a}) \in (S \times A) \times (S \times A)$
\begin{align*}
    \sum_{s' \in S} |T(s,a,s') - T(\tilde{s},\tilde{a}, \alpha(\cdots, s'))| \leq 
    \Delta((s,a), (\tilde{s},\tilde{a}))
\end{align*}
where $\alpha(\cdots, s') = \alpha(s, a, s', \tilde{s}, \tilde{a})$ represents, intuitively, the next state that is ``equivalent'' to $s'$ for $(\tilde{s},\tilde{a})$.
\footnote{
Note that although $\alpha$ is defined for every $(s,a,s',\tilde{s},\tilde{a})$ tuple, not all such pairs of state-actions need be analogous to each other.
In such cases, we can imagine that $\alpha(s,a,s',\tilde{s},\tilde{a})$ maps to a dummy state and $\Delta((s,a), (\tilde{s},\tilde{a})) = 1$.
}
In other words, for any two state-action pairs, we are given a bound on the $L_1$ distance between their dynamics: one that is based on a mapping between analogous next states.
The hope is that $\alpha$ can provide a much more useful analogy than a naive identity mapping between the respective next states.


To provide some intuition, recall the tight-rope walker example mentioned in the introduction:
The tight-rope walker agent must cross a dangerous tight-rope, but wants to guarantee it can do so safely.
In this situation, the agent has a ``practice'' tight-rope (that's only a few feet off the ground) and a ``real'' tight-rope.
In this example, if the agent is in a particular position and takes a particular action on either tight-rope, the change in its position will be the same regardless of what rope it was on.\footnote{For this example, we assume the dynamics of the agent on the practice and real tight-ropes are identical, but small differences could be captured by making the $\Delta(\cdot, \cdot, \cdot, \cdot)$ function non-zero.}
This analogy between the two ropes can be mathematically captured as follows.
Consider representing the agent's state as a tuple of the form
$(\text{prac}, x)$ or $(\text{real}, x)$ where the first element denotes which of the two ropes the agent is on, and the second element denotes the position within that rope. Then for any action $a$, we can say that $\alpha((\text{prac}, x), a, (\text{prac}, x'), (\text{real}, x), a) = (\text{real}, x')$  and $\Delta(((\text{prac}, x), a), ((\text{real}, x), a) = 0$.  This would thus imply that by learning a model of the dynamics in the practice setting, and the corresponding optimal policy, an agent would still be able to learn a policy that guaranteed safety even in the real setting.



\textbf{State-action sets.}
For simplicity, for any set of state-action pairs $Z \subset S \times A$, we say that $s \in Z$ if there exists any $a \in A$ such that $(s,a) \in Z$.
Also, we say that $(s,a)$ is an {\bf edge} of $Z$ if $(s,a) \notin Z$ but $s \in Z$.
We use $\Pr[ \cdot \vert \pi]$ to denote the probability of an event occurring while following a policy $\pi$.
%
%
\begin{definition}
We say that $Z \subset S \times A$ is \textbf{closed} if for every $(s,a) \in Z$ and for every next $s'$ for which $T(s,a,s') > 0$, there exists $a'$ such that $(s',a') \in Z$.
\end{definition}
Intuitively, if a set $Z$ is closed, then we know that if the agent starts at a state in $Z$ and follows a policy $\pi$ such that for all $s \in Z$, $(s,\pi(s)) \in Z$, then we can guarantee that the agent never exits $Z$ (see Fact~\ref{fact:closed_policy} in Appendix~\ref{sec:facts}).
We will use $\pi \in \Pi(Z)$ to denote that, for all $s \in Z$, $(s,\pi(s)) \in Z$.
%



\begin{definition}
\label{def:communicating}
A subset of state-action pairs, $Z \subset S \times A$ is said to be \textbf{communicating} if  $Z$ is closed and for any $s' \in Z$, there exists a policy $\pi_{s} \in \Pi(Z)$ such that
$\forall s, \ \Pr[\exists  t, \ s_t = s' \ | \pi_{s'}, s_0 = s] = 1.$
\end{definition}

In other words, every two states in $Z$ must be reachable through a policy that never exits the subset $Z$.
Note that this definition is equivalent to the standard definition of communicating when $Z$ is the set of all state-action pairs in the MDP (see Appendix~\ref{sec:facts}).

\textbf{Safe-PAC-MDP.}
One of the main objectives of this work is to design an agent that, with high probability (over all possible trajectories the agent takes), learns an optimal policy (in the PAC-MDP sense) while also never taking dangerous actions (i.e. actions with negative rewards) at any point along its arbitrarily long, trajectory.
This is a very strong notion of safety, but critical for assuring safety for long trajectories.
Such a strong notion is necessary in many real-world applications such as health and self-driving cars where a dangerous action spells complete catastrophe.

We formally state this notion of Safe-PAC-MDP below. The main difference between this definition and that of standard PAC-MDP is (a) the safety requirement on all timesteps and (b) instead of competing against an optimal policy (which could potentially be unsafe), the agent now competes with a ``safe-optimal policy'' (that we will define later). To state this formally, as in \citet{strehl2006pac}, let the trajectory of the agent until time $t$ be denoted by $p_t$ and let the value of the algorithm $\mathcal{A}$ be denoted by $V^{\mathcal{A}}(p_t)$ -- this equals the cumulative sum of rewards in expectation over all future trajectories (see Def~\ref{def:alg-value} in Appendix~\ref{sec:facts}). 

\begin{definition}
\label{def:safe_pac_mdp}
We say that an algorithm $\mathcal{A}$ is \textbf{Safe-PAC-MDP} if, for any $\epsilon, \delta \in (0, 1]$, with probability at least $1 - \delta$, $R(s_t, a_t) \geq 0$ for all timesteps $t$ and additionally, the sample complexity of exploration i.e., the number of timesteps $t$ for which $V^{\mathcal{A}}(p_t) > V^{\pisafe^*}(p_t) - \epsilon$,
is bounded by a polynomial in the relevant quantities, $(|S|, |A|, 1/\epsilon, 1/\delta, 1/(1-\gamma), 1/\tau, H_{\text{com}})$. 
Here, $\pisafe^*$ is the safe-optimal policy defined in Def.~\ref{def:pisafestar}, $\tau$ is the minimum non-zero transition probability (see Assumption \ref{as:negligible_transitions}) and $H_{\text{com}}$ is ``communication time'' (see Assumption~\ref{as:h_communicating}).
\end{definition}

We must emphasize that this notion of safety must {\em not} be confused with the weaker notion where one simply guarantees safety with high probability {\em at every step} of the learning process.
In such a case, 
for sufficiently long training trajectories, the agent is guaranteed to take a dangerous action i.e., with probability $1$, the trajectory taken by the agent will lead it to a dangerous action as $t \rightarrow \infty$.

\textbf{Safety.} 
Since our agent is provided the reward mapping, the agent knows a priori which  state-action pairs are ``immediately'' dangerous (namely, those with negative rewards). However, the agent is still faced with the challenge of determining
which actions may be dangerous {\em in the long run}: an action may momentarily yield a non-negative reward, but by taking that action, the agent may be doomed to a next state (or a future state) where all possible actions have negative rewards.
For example, at the instant when a tight-rope walker loses balance, they may experience a zero reward, only to eventually fall down and receive a negative reward.
In order to avoid such ``delayed danger'', below we define a natural notion of a safe set: a closed set of non-negative reward state-action pairs; as long as the agent takes actions within such a safe set, it will never find itself in a position where its only option is to take a dangerous action. Our agent will then aim to learn such a safe set; note that accomplishing this is non-trivial despite knowing the rewards, because of the unknown stochastic dynamics.


\begin{definition}
\label{def:safeset}
We say that $Z \subset S \times A$ is a \textbf{safe set} if $Z$ is {\em closed} and for all $(s,a) \in Z$, $R(s,a) \geq 0$. Informally, we also call every $(s,a) \in Z$ as a safe state-action pair.
\end{definition}


\subsection{Assumptions}


We will dedicate a fairly large part of our discussion below detailing the assumptions we make. Some of these are strong and we will explain why they are in fact required to guarantee PAC-MDP optimality in conjunction with the strong form of safety that we care about (being safe on all actions taken in an infinitely long trajectory) in an environment with unknown stochastic dynamics. 

First, in order to gain any knowledge of the world safely, the agent must be provided some prior knowledge about the safety of the environment.
Without any such knowledge (either in the form of a safe set, prior knowledge of the dynamics, etc) it is impossible to make safety guarantees about the first and subsequent steps of learning.
We provide this to the agent in the form of an initial safe set of state-action pairs, $Z_0$, that is also communicating.
We note that this kind of assumption is common in safe RL literature \citep{berkenkamp17stability, biyik19unknown}.
Additionally, we chose to make this set communicating so that the agent has the freedom to roam and try out actions inside $Z_0$ without getting stuck.

\begin{assumption}[Initial safe set]
\label{as:z_0}
The agent is initially given a safe, communicating set $Z_0 \subset S \times A$ such that $\sinit \in Z_0$.
\end{assumption}

In the PAC-MDP setting, we care about how well the agent's policy compares to an optimal policy over the whole MDP. However, in our setting, that would be an unfair benchmark since such an optimal policy might potentially travel through unsafe state-actions. To this end, we will first suitably characterize a safe set $\Zsafe$ and then set our benchmark to be the optimal policy confined to $\Zsafe$.


We begin by defining $\Zsafe$ to be the set of state-action pairs from which there exists some (non-negative-reward) return path to $Z_0$. 
Indeed, \textit{returnability} is a key aspect in safe reinforcement learning -- it has been similarly assumed in previous work \citep{moldovan12safeexpl, turchetta16safeexpl} and is also very similar to the notion of stability used to define safety in other works \citep{berkenkamp17stability, akametalu14reachability}.
Defining $\Zsafe$ in terms of returnability ensures that $\Zsafe$ does not contain any ``safe islands'' i.e., are safe regions that the agent can venture into, but without a safe way to exit.
At a high-level, this criterion helps prevent the agent from getting stuck in a safe island and acting sub-optimally forever.
The reasoning for why we need this assumption and traditional PAC-MDP algorithm do not is a bit nuanced and we discuss this in detail in Appendix \ref{sec:safe_islands}.

\begin{definition} \label{def:Zsafe}
We define $\Zsafe$ to be the set of state-action pairs $(s,a)$ such that $\exists \pireturn$ for which:
\[
\Pr\left[ \; \substack{\exists t \geq 0 \; s.t. (s_t, a_t) \in Z_0 \; \\ \& \; \forall t, R(s_t, a_t) \geq 0} |\pireturn, (s_0,a_0) = (s,a)\right]=1
\]
\end{definition}

Note that it follows that $\Zsafe$ is a safe set  (see Fact~\ref{fact:Zsafe} in Appendix~\ref{sec:facts}). 
Additionally, we will assume that $\Zsafe$ is communicating; note that given that all actions in $\Zsafe$ satisfy returnability to $Z_0$, this assumption is equivalent to assuming that all actions in $\Zsafe$ are also \textit{reachable} from $Z_0$. This is reasonable since we care only about the space of trajectories beginning from the initial state, which lies in $Z_0$. Having characterized $\Zsafe$ this way, we then define the safe optimal policy using $\Zsafe$.

\begin{assumption} [Communicatingness of safe set]
\label{as:zsafe_communicating} 
We assume $\Zsafe \subset S \times A$ is communicating. 
\end{assumption}





\begin{definition}
\label{def:pisafestar}
$\pisafe^*$ is a \textbf{safe-optimal} policy in that
\[\pisafe^* \in \arg\max_{\pi \in \Pi(\Zsafe)} \mathbb{E} \left[ \sum_{t=0}^\infty \gamma^t R(s_t,a_t) | \pi, s_0 = \sinit \right].\]
\end{definition}


Besides returnability, another important aspect in the safe PAC-MDP setting turns out to be the time it takes to travel between states.
In normal (unsafe) reinforcement learning settings, the agent can gather information from any state-action by experiencing it directly.
In this setting, on the other hand, not all state-actions can be experienced safely so the agent must indirectly gather information on a state-action pair of interest by experiencing an analogous state-action.
Thus, the agent must be able to visit the informative state-action and return to that state-action pair of interest in polynomial time.


To formalize this, we will assume that within any communicating subset of state-action pairs, we can ensure polynomial-time reachability between states, with non-negligible probability.
While this assumption is not made in the normal (unsafe) PAC-MDP setting, we emphasize that this assumption applies to a wide variety of real-world problems and is only violated in contrived examples, such as in a random-walk setting.
Specifically, to violate this assumption, there must be two state-action pairs in the safe set where the expected number of steps to move from one to the next is exponential in the state-action size.
This can only happen in random-walk-like scenarios where moving backwards has an equal (or higher) probability than moving forward, which occur very rarely in the real-world.
To make this last statement concrete, imagine a 1D grid where any action the agent takes leads it to either of the adjacent states with equal probability of $1/2$. Then, to reach a state that is $n$ steps away with probability $1/2$, it would take the agent, in expectation, an exponential number of steps in $n$, thus not satisfying Assumption 3.
However, if the probabilities of moving forward was $3/4$ for one action and moving backward was $1/4$ for the other action, then the agent could reach a state that is $n$ steps away with probability $1/2$ in less than $2n$ steps, satisfying Assumption 3.
\begin{assumption}(Poly-time communicating)
\label{as:h_communicating}
There exists $\Hcom  = \text{poly}(|S|, |A|)$ such that, 
for any communicating set $Z \subset S \times A$, and
 $\forall s' \in Z$, there exists a policy $\pi_{s'} \in \Pi(Z)$ for which
\[
\forall s, \ \Pr_M [\exists t \leq \Hcom , s_t = s' | \pi_{s'}, s_0 = s] \geq 1/2.
\]
\end{assumption}

Our next assumption is about the transition dynamics: we assume that we know a constant such that any transition either has zero probability or is larger than that known constant.
This assumption is {\em necessary} to perform learning under our strict safety constraints in {\em unknown, stochastic} dynamics.
Specifically, given this assumption, we can use finitely many samples to determine the support of a particular state-action pair's next state distribution. This is critical since
the agent cannot take an action unless it knows every possible next state that it could land in.
Note that, in a finite MDP, such a $\tau$ always exists, we simply assume we have a lower-bound on it.

\begin{assumption} [Minimum transition probability]
\label{as:negligible_transitions}
There exists a {\em known} $\tau > 0$ such that $\forall s,s' \in S$ and $a \in A$, $T(s,a,s') \in [0] \cup [\tau, 1]$. 
\end{assumption}

Finally, we make an assumption that will help the agent expand its current estimate of the safe-set along its edges. Specifically, note that to establish safety of an edge state-action pair, it is necessary to establish a return path from it to the current safe-set (see Appendix~\ref{sec:safe_islands} for a more in-depth reasoning behind this).
Motivated by this we assume the following. Consider any safe subset of state-actions $Z$ and a state-action $(s,a)$ at the edge of $Z$ that also belongs to $\Zsafe$. 
Then, we assume that for every state-action pair that is on the path that starts from this edge and returns to $Z$, there exists an element in $Z$ that is sufficiently similar to that pair. In other words, for \textit{any} safe subset $Z$ of $\Zsafe$, this assumption provides us hope that $Z$ can be expanded by exploring suitable state-actions inside $Z$.

The fact that this allows expansion of any safe subset $Z$ might seem like a stringent assumption. But we must emphasize this is  required to show PAC-MDP optimality: to learn a policy is near-optimal with respect to $\smash{\pisafe^*}$, intuitively, our algorithm must {\em necessarily} establish the safety of the set of state-actions that $\smash{\pisafe^*}$ could visit. Now, depending on what the (unknown) $\smash{\pisafe^*}$ looks like, this set could be as large as (the unknown set) $\Zsafe$ itself. To take this into account, our assumption essentially allows the agent to use analogies to expand its safe set to a set as large as $\Zsafe$ if the need arises. Without this assumption, the agent might at some point not be able to expand the safe set and end up acting sub-optimally forever. 
We note earlier works \citep{turchetta16safeexpl,berkenkamp17stability,akametalu14reachability} do not make this assumption because they crucially didn't need to establish PAC-MDP optimality.


\begin{assumption}[Similarity of return paths]
\label{as:sim}
For any safe set $Z$ such that $Z_0 \subset Z$ and for any $(\tilde{s},\tilde{a}) \in \Zsafe$ such that $\tilde{s} \in Z$ and $(\tilde{s},\tilde{a}) \notin Z$, we know from Assumption~\ref{as:zsafe_communicating} that the agent can return to $Z_0$ (and by extension, to $Z$)  from $(\tilde{s},\tilde{a})$ through at least one $\pireturn \in \Pi(\Zsafe)$. Let $\smash{\tilde{Z}}$ denote
the set of state-action pairs $(s,a)$ visited by $\pireturn$ before reaching $Z$, in that,
\[
\Pr_M\left[\left. \exists t \geq 0 \ \substack{(s_t, a_t) = (s,a) \\ \forall t' < t \ s_{t'} \notin Z} \ \right| (s_0, a_0) = (\tilde{s}, \tilde{a}), \pireturn\right] > 0.
\]
We assume that 
for all $(s,a) \in \tilde{Z} \setminus Z$, there exists $(s',a') \in Z$ such that $\Delta((s,a), (s',a')) \leq \tau/4$.
\end{assumption}

To provide some intuition behind this assumption, consider again the tight-rope walker example.
For this assumption to be satisfied, the agent must be able to safely learn how to return from any point along the real tight-rope.
Since the agent has access to a similar practice tight-rope, the agent can learn to safely cross the practice tight-rope, turn around, and return safely.
Once the agent has learned this policy, it has established a safe return path from any point along the real tight-rope.
Thus, this assumption is satisfied.

\section{Analogous Safe-state Exploration}

\begin{algorithm}[t]
\footnotesize
\caption{Analogous Safe-state Exploration $(\alpha, \Delta, m, \delta_T, R, \gamma, \gamma_{\text{explore}}, \gamma_{\text{switch}}, \tau)$}
\label{alg:ase}
\begin{algorithmic}
\State Initialize: $\hatZsafe \gets Z_0$; $n(s,a),n(s,a,s') \gets 0$;
\State \hspace{14mm} $\optZgoal \gets S \times A$; $s_0 \gets \sinit$.
\State Compute confidence intervals using Alg~\ref{alg:tighter_ci} (Appendix~\ref{sec:methodology}) with parameter $\delta_T$ and analogy function $\alpha, \Delta$.
\State Compute $\optpigoal, \optZgoal$, $\Zexplore$ using Alg~\ref{alg:z_goal} and~\ref{alg:z_explore} with parameters $\gamma, \tau$ and reward function $R$.
\State Compute $\optpiexplore, \optpiswitch$ using value iteration (Appendix~\ref{sec:opt_policies}) with parameters $\gamma_{\text{explore}}, \gamma_{\text{switch}}$.
\For{$t = 1, 2, 3, \ldots$} 
    \State $a_t \gets \begin{cases} \optpigoal(s_t) & \textbf{if } s_t \in \optZgoal$ \& $\optZgoal \subset \hatZsafe \\
    \optpiexplore(s_t) & \textbf{if } \optZgoal \not\subset \hatZsafe \\
    \optpiswitch(s_t) & \text{\bf otherwise}.
    \end{cases}$
    \State Take action $a_t$ and observe next state $s_{t+1}$.
    \If{$n(s_t, a_t) < m$}
        \State $n(s_t, a_t) \pluseq 1$,  $n(s_t, a_t, s_{t+1}) \pluseq 1$.
        \State Recompute confidence intervals
        \State Expand $\hatZsafe$ using Alg~\ref{alg:compute_safe_set} with parameter $\tau$.
        \State Recompute $\optpigoal, \optZgoal$, $\Zexplore$, $\optpiexplore, \optpiswitch$.
    \EndIf
\EndFor

\end{algorithmic}
\end{algorithm}

Given these assumptions, we now detail the main algorithmic contribution of the paper, the Analogous Safe-state Exploration (ASE) algorithm, which we later prove is Safe-PAC-MDP.
In addition to safety and optimality, we also do not want to exhaustively explore the state-action space, like R-Max \citep{brafman2002r}, as that can be prohibitively expensive in large domains.
We want to guide our exploration, like MBIE  \citep{strehl2008analysis}, to explore only the state-action pairs that are needed to find the safe-optimal policy. MBIE does this by maintaining confidence intervals of the dynamics of the MDP, and then by following an
``optimistic policy'' computed using the most optimistic model of the MDP that falls within the computed confidence intervals. 
We build on this standard MBIE approach and equip it with a significant amount of machinery to meet our three objectives simultaneously:  safety, guided exploration, and optimality in the PAC-MDP sense.


\textbf{Policies maintained by ASE.} ASE maintains and updates three different policies: (a) an optimistic policy $\optpigoal$ that seeks to maximize reward -- this is the same as the optimistic policy as in standard MBIE computed on $M$ (except some minor differences), (b) an exploration policy $\optpiexplore$ that guides the agent towards states in a set called $\Zexplore$ (described shortly) and finally (c) a ``switching'' policy $\optpiswitch$ that can be thought of as a policy that aids the
agent in switching from $\optpiexplore$ to $\optpigoal$ (by carrying it from $\Zexplore$ to $\optZgoal$ as explained shortly)
See Appendix~\ref{sec:opt_policies} for details on these policies.

\textbf{Sets maintained by ASE.} ASE also maintains and updates three major subsets of state-action pairs.
1) a safe set $\smash{\hatZsafe}$ which is initialized to $Z_0$ and gradually expanded over time (using Alg~\ref{alg:compute_safe_set}).
2) an ``optimistic trajectory'' set $\smash{\optZgoal}$ (computed in Alg~\ref{alg:z_goal}), which contains all state-action pairs that we expect the agent would visit if it were to follow $\optpigoal$ from $\sinit$ under \textit{optimistic} transitions.
3) an ``exploration set'' $\Zexplore$  (computed using Algs~\ref{alg:z_goal} and \ref{alg:z_explore}) which contains state-action pairs that, when explored, will provide information critical to expand the safe set.
Besides these state-action sets, the algorithm also maintains a set of $L_1$ confidence intervals
as detailed in Alg~\ref{alg:tighter_ci} and Appendix~\ref{sec:ci}.
The key detail here is that the interval for a given state-action pair is not only updated using its samples, but also by exploiting the samples seen at any other well-explored state-action pair that is sufficiently similar to the given pair, according to the given analogies.

\textbf{How ASE schedules the policies.}
We discuss how, at any timestep, the agent chooses between one of the above three policies to take an action.
First, the agent follows $\optpigoal$ whenever it can establish that doing so would be safe.
Specifically, whenever (a) $\smash{\optZgoal \subset \hatZsafe}$ and (b) the current state $s_t$ belongs to $\smash{\optZgoal}$, it is easy to argue that following $\optpigoal$ is safe (see proof of safety in Theorem~\ref{thm:pac_and_safe}).
On the other hand, when (a) does not hold, the agent follows $\optpiexplore$.
In doing so, the hope is that, it can explore $\Zexplore$ well and use analogies to expand $\smash{\hatZsafe}$ until it is large enough to subsume $\smash{\optZgoal}$ (which means (a) would hold then). As a final case, 
assume (a) holds, but (b) does not i.e., 
$\smash{s_t \notin \optZgoal}$.
This could happen if the agent has just explored a state-action pair far away from $\optZgoal$, which subsequently helped establish  (a) i.e., $\smash{\optZgoal \subset \hatZsafe}$.
Here, we use $\optpiswitch$ to carry the agent back to $\smash{\optZgoal}$. Once carried there, both (a) and (b) hold, so it can switch to following 
$\smash{\optpigoal}$.

\textbf{Guided exploration.}  We would like the agent to explore only relevant states by using the optimistic policy as a guide, like in MBIE. This is automatically the case whenever the agent explores using the optimistic goal policy $\optpigoal$. 
As a key addition to this, we use the optimistic policy to also guide the exploration policy $\optpiexplore$. As explained below, we do this by only conservatively populating $\Zexplore$ based on the optimistic trajectory set $\smash{\optZgoal}$.
Recall that the hope from exploring $\Zexplore$ is that it can help $\smash{\hatZsafe}$ expand in a way that it is large enough to satisfy $\smash{\optZgoal \subset \hatZsafe}$. 
Keeping this in mind, the naive way to set $\Zexplore$ would be to add all of $\smash{\hatZsafe}$ to it; this will force us to do a brute-force exploration of the safe set, and consequently aggressively expand the safe set in all directions. Instead of doing so, roughly speaking, we compute $\Zexplore$ in a way that it can help establish the safety of only those actions that (a) are on the edge of $\smash{\hatZsafe}$ and  (b) also belong to $\smash{\optZgoal}$. This will help us conservatively expand the safe set in ``the direction of the optimistic goal policy''. (Note that all this entails non-trivial algorithmic challenges since we operate in an unknown stochastic environment. Due to lack of space we discuss these challenges in Appendix~\ref{app:guided-exploration})




\section{Theoretical Results}

Below we state our main theoretical result, that ASE is Safe-PAC-MDP. We must however emphasize that this result does not intend to provide a tight sample complexity bound for ASE; nor does it intend to compete with existing sample complexity results of other (unsafe) PAC-MDP algorithms. In fact, our sample complexity result does not capture the benefits of our guided exploration techniques -- instead, we use practical experiments to demonstrate that these benefits are significant (see Section \ref{sec:experiments}). 
The goal of this theorem is to establish that ASE is indeed PAC-MDP and safe, which in itself is highly non-trivial as ASE has much more machinery than existing PAC-MDP algorithms like MBIE. 
In the interest of space we will give a brief overview of the proof \& algorithm.
A more detailed proof outline can be found in Appendix~\ref{sec:proof_outline}.
The full (lengthy) proof is included in Appendix~\ref{sec:proof}.

\begin{restatable}{theorem}{pacmdp}
\label{thm:pac_and_safe}
%
For any constant $c \in (0,1/4]$, $\epsilon,\delta \in (0,1]$, MDP $M = \langle S, A, T, \mathcal{R}, \gamma \rangle$, for $\delta_T = \delta/(2 |S||A|m)$, $\gamma_{\text{explore}} = \gamma_{\text{switch}} = c^{1/H}$, and $m = O\left((|S|/\tilde{\epsilon}^2) + (1/\tilde{\epsilon}^2) \ln \left(|S| | A |/\tilde{\epsilon}\right) \right)$ where $\tilde{\epsilon} = \min\left(\tau, {\epsilon}(1-\gamma)^2, 1/H^2  \right)$ and
$H = O\left( \max \left \{ \Hcom \log \Hcom,  (1/(1-\gamma)) \ln (1/\epsilon (1-\gamma)) \right \}\right)$ ASE is Safe-PAC-MDP with a sample complexity bounded by $O \left (H m |S| |A|  (1 / \epsilon ( 1 - \gamma)) \ln (1/\delta) \right )$.
%
\end{restatable}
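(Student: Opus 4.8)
The plan is to condition everything on a single high-probability \textbf{good event} $\mathcal{G}$ under which all of the $L_1$ confidence intervals produced by Alg~\ref{alg:tighter_ci} simultaneously contain the true dynamics throughout the (infinite) run, and then to argue safety and the PAC bound deterministically given $\mathcal{G}$. Since each of the $|S||A|$ state-action pairs is updated at most $m$ times and each interval fails with probability at most $\delta_T$, a union bound with $\delta_T = \delta/(2|S||A|m)$ gives $\Pr[\mathcal{G}] \geq 1-\delta/2$; the remaining $\delta/2$ of the budget is reserved for the value-concentration step in the PAC argument below. A crucial preliminary observation, using Assumption~\ref{as:negligible_transitions}, is that once a pair has been sampled $m = \tilde{O}(|S|/\tilde\epsilon^2)$ times with $\tilde\epsilon \le \tau$, the $L_1$ interval is tight enough to exactly recover the \emph{support} of its next-state distribution; this is what lets the agent know every state it might land in before committing to an action.

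For safety I would first show, by induction on the updates in Alg~\ref{alg:compute_safe_set}, that on $\mathcal{G}$ the estimated set $\hatZsafe$ is always a genuine safe set (Def~\ref{def:safeset}) and in fact $\hatZsafe \subseteq \Zsafe$: the expansion only admits an edge pair once a return path to the current safe set has been certified through analogies, and correct support recovery guarantees this certified closedness transfers to the true dynamics. I would then check the three scheduled policies one at a time: $\optpigoal$ is invoked only when $\optZgoal \subseteq \hatZsafe$ and $s_t \in \optZgoal$, and because $\optZgoal$ is built to be closed under the (correctly identified) support it keeps the agent inside $\hatZsafe$; $\optpiexplore$ and $\optpiswitch$ are constructed to lie in $\Pi(\hatZsafe)$, so by Fact~\ref{fact:closed_policy} the agent never exits the closed set $\hatZsafe$. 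Since every pair in $\hatZsafe$ has non-negative reward, no dangerous action is ever taken on $\mathcal{G}$, which gives the all-timesteps safety guarantee.

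For the PAC-MDP bound I would follow the MBIE/Strehl explore-exploit template, adapted to the safe setting. Call a pair \textbf{known} once sampled $m$ times, and consider the induced ``known MDP.'' Optimism (the optimistic model dominates the true confined MDP) together with a simulation-lemma estimate on known pairs shows that whenever (i) $\optZgoal \subseteq \hatZsafe$, (ii) $s_t \in \optZgoal$, and (iii) all of $\optZgoal$ is known, the action $\optpigoal(s_t)$ is $\epsilon$-optimal against $\pisafe^*$. It therefore suffices to bound the number of steps where one of (i)--(iii) fails. Failures of (iii) are escape events into unknown pairs and number at most $m|S||A|$ by the standard pigeonhole argument. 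Failures of (i) trigger $\optpiexplore$: here I would argue via Assumption~\ref{as:sim} and optimism that so long as $\hatZsafe$ is too small to contain $\optZgoal$, the set $\Zexplore$ is non-empty and exploring it necessarily expands $\hatZsafe$, an event that can occur at most $|S||A|$ times; failures of (ii) trigger $\optpiswitch$ and are charged to the preceding expansion. Each such ``learning episode'' is realized within a horizon $H$: Assumption~\ref{as:h_communicating} guarantees the relevant target is reached within $\Hcom$ steps with probability $\ge 1/2$, and repeating over $O(\log \Hcom)$ blocks (the source of the $\Hcom\log\Hcom$ term) boosts this to high probability, so each episode costs $O(H)$ steps with the residual failure absorbed into the $\delta/2$ budget. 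Multiplying the $O(m|S||A|)$ learning events by the per-event horizon and the effective horizon $1/(\epsilon(1-\gamma))$ yields the claimed $O(Hm|S||A|/(\epsilon(1-\gamma)) \cdot \ln(1/\delta))$ bound.

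The main obstacle I anticipate is the coupling between safe-set expansion and the optimism argument: I must show that the agent can never get ``optimistically stuck,'' i.e.\ that whenever $\optpigoal$ is not yet $\epsilon$-optimal there is genuinely a useful pair in $\Zexplore$ whose exploration strictly enlarges $\hatZsafe$ toward the (unknown) portion of $\Zsafe$ that $\pisafe^*$ needs. Pinning this down requires simultaneously tracking the optimistic trajectory, the edges of $\hatZsafe$, and the analogy-based return paths of Assumption~\ref{as:sim}, and it is where the stochastic dynamics make the bookkeeping substantially harder than in deterministic safe-exploration analyses. A secondary technical nuisance is combining the per-episode $1/2$-probability reachability of Assumption~\ref{as:h_communicating} with the concentration of the realized return against $V^{\mathcal{A}}$ without letting the accumulated failure probabilities exceed $\delta$.
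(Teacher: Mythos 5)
Your overall architecture matches the paper's (a global admissibility event via union bound; safety by showing $\hatZsafe$ is always a safe, closed subset of $\Zsafe$ with all three policies confined to it; PAC via the MBIE template plus pigeonhole on unknown pairs), but there are two genuine gaps in the PAC half. The first is your use of Assumption~\ref{as:h_communicating} for the per-episode progress claim: you assert that "the relevant target is reached within $\Hcom$ steps with probability $\geq 1/2$" and that repeating blocks boosts this, but that assumption only guarantees the \emph{existence} of some policy $\pi_{s'} \in \Pi(Z)$ with this reachability property. The agent instead follows $\optpiexplore$ (resp.\ $\optpiswitch$), an optimistic policy computed from estimated dynamics, and nothing in your argument prevents it from wandering inside the already-known part of $\hatZsafe$ forever. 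The paper bridges this in Lemma~\ref{lem:escape} with an optimism sandwich on the \emph{optimistic} value of the followed policy: optimism (Lemma~\ref{lem:optimistic_q}) lower-bounds it by the value of the communicating policy, roughly $(\gamma^\dagger)^{\Hcom}/2$, while a simulation argument on $K$ (Lemmas~\ref{lem:epsilon_close}, \ref{lem:escape_prob}, \ref{lem:horizon_bound}) upper-bounds it by $2H\Pr\left(\exists i \leq H : (s_i,\optpi^\dagger(s_i)) \in Z^\dagger \cup K^c\right) + c/8 + (\gamma^\dagger)^{H+1}/(1-\gamma^\dagger)$; combining the two forces an escape/reach probability of at least $c/(8H)$ per $H$-step window. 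This sandwich is the missing idea; relatedly, the $\Hcom\log\Hcom$ term arises from choosing $H$ so that the discount tail $(\gamma^\dagger)^{H}/(1-\gamma^\dagger)$ is dominated (roughly $(\gamma^\dagger)^{H} = c/(16\Hcom)$ with the given $\gamma^\dagger$), not from independent retry blocks.

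The second gap is the accounting of your condition (iii): steps at which $\optZgoal$ contains an unknown pair are \emph{not} escape events. Pigeonhole bounds the number of actual visits to unknown pairs by $m|S||A|$, but an unknown pair can sit in $\optZgoal$ for arbitrarily many timesteps without ever being visited, so "failures of (iii) number at most $m|S||A|$" does not follow, and requiring all of $\optZgoal$ to be known is also unnecessary for near-optimality. The paper's Case 3 instead uses the escape-probability dichotomy: either $\Pr(A_M) \geq \epsilon(1-\gamma)/4$, in which case a Hoeffding argument converts the at most $m|S||A|$ escapes into a bound on the number of such steps (this is where the $\frac{1}{\epsilon(1-\gamma)}\ln\frac{1}{\delta}$ factor actually enters), or $\Pr(A_M)$ is small and Lemma~\ref{lem:reduction_to_mbie} shows the current action is $\epsilon$-optimal relative to $V^*_{\Mgoal}$ even with unknown pairs still present in $\optZgoal$, after which the benchmark comparison to $\pisafe^*$ goes through $\hatZunsafe \cap \Zsafe = \emptyset$ (Lemma~\ref{lem:correctness_z_explore}). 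A smaller gap on the safety side: $\optpiexplore, \optpiswitch \in \Pi(\hatZsafe)$ is not true "by construction" --- it requires the induction on optimistic Q-values (non-negative inside $\hatZsafe$, $-\infty$ outside) together with the fact that $\hatZsafe$ is closed under every \emph{candidate} transition function (Corollary~\ref{cor:closed_under_candidate}), since the optimistic model could otherwise route probability mass out of $\hatZsafe$.
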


To prove safety, 
we show in Lemma~\ref{lem:hatZsafe_safe} that Alg~\ref{alg:compute_safe_set}, which computes $\smash{\hatZsafe}$, always ensures that $\smash{\hatZsafe}$ is a safe set.
Then, in the main proof of Theorem~\ref{thm:pac_and_safe}, we argue that the agent always picks state-actions inside $\smash{\hatZsafe}$. 
So, it follows that the agent always experiences only positive rewards.
Next, in order to prove PAC-MDP-ness, while we build on the core ideas from the proof for PAC-MDP-ness of MBIE \citep{strehl2008analysis}, our proof is a lot more involved. This is because we need to show that all the added machinery in ASE work in a way that (a) the agent never gets ``stuck'' and (b) whenever the agent takes a series of sub-optimal actions (e.g., while following $\optpiexplore$ or $\optpiswitch$), it can ``make progress'' in some form. 
As an example of (a), Lemma~\ref{lem:hatZsafe_communicating} shows that $\smash{\hatZsafe}$ is always a communicating set, so the agent can always freely move between the states in $\smash{\hatZsafe}$.
This is critical to show that when the agent follows $\optpiexplore$ (or $\optpiswitch$), it can reach $\Zexplore$ (or $\smash{\optZgoal}$) without being stuck anywhere (see Lemma~\ref{lem:escape}).
As an example of (b), Lemma~\ref{lem:correctness_z_explore} and Lemma~\ref{lem:correctness_z_goal} together show that only informative state-action pairs are added to $\Zexplore$  i.e., when explored, they \textit{will} help us expand $\smash{\hatZsafe}$.


\section{Experiments}
\label{sec:experiments}

\begin{figure}[t]
    \centering
    \begin{subfigure}[b]{\columnwidth}
    \centering
        \includegraphics[width=\textwidth, trim=40 0 40 0, clip]{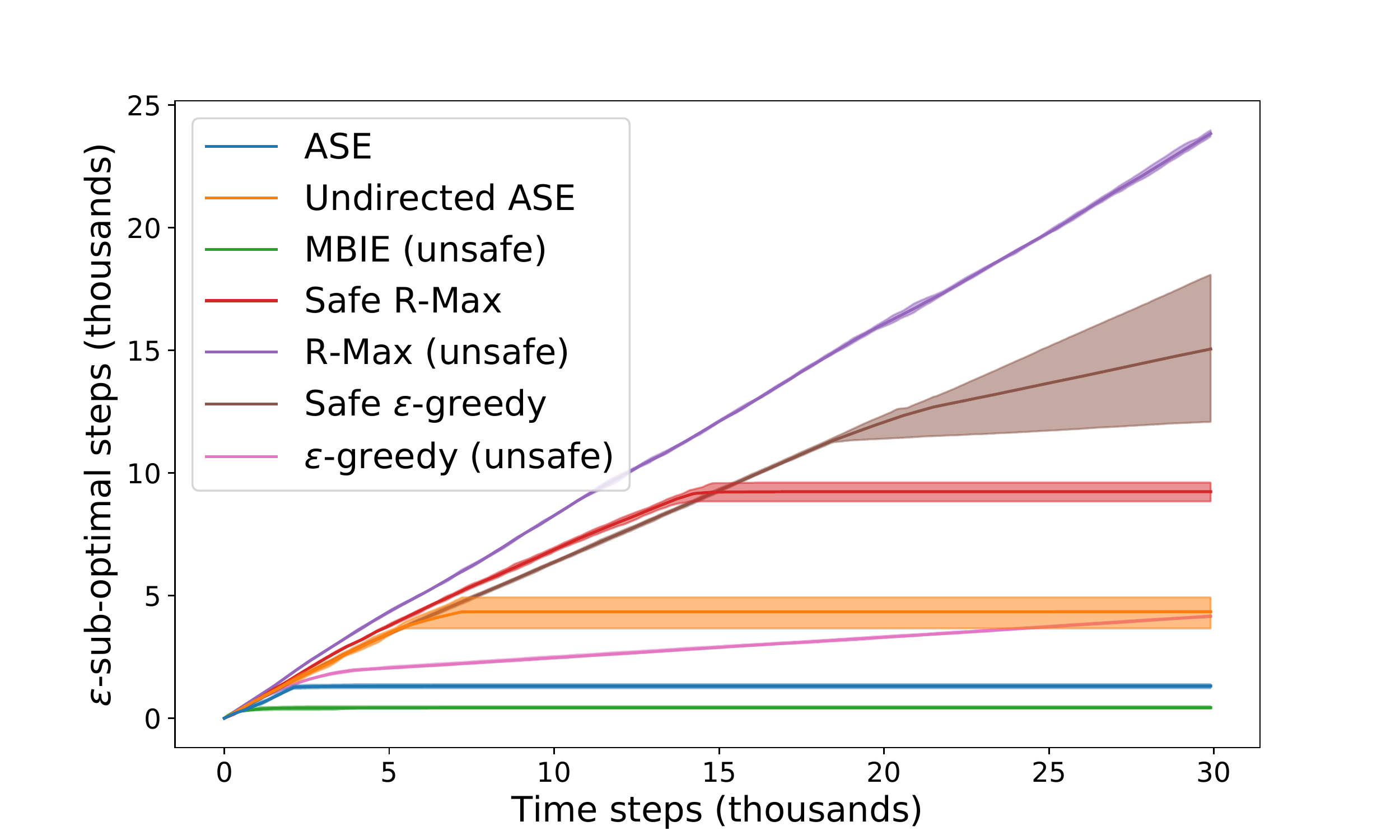}
        \caption{Unsafe Grid World}
        \label{fig:sub_opt:gridworld}
    \end{subfigure}
    \begin{subfigure}[b]{\columnwidth}
        \centering
        \includegraphics[width=\textwidth, trim=40 0 40 0, clip]{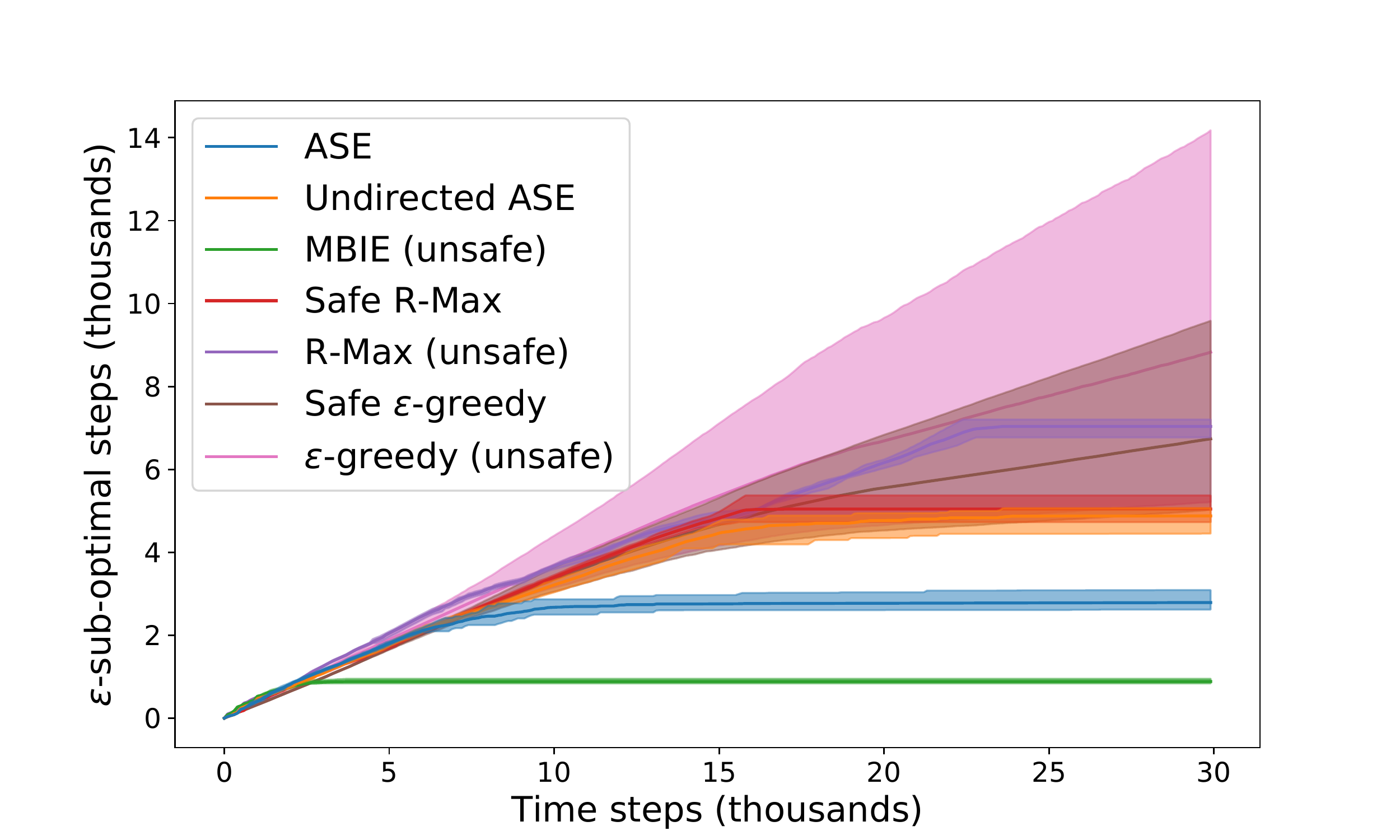}
        \caption{Discrete Platformer}
        \label{fig:sub_opt:platformer}
    \end{subfigure}
    \caption{Number of $\epsilon$-sub-optimal steps taken by each agent throughout training.
    Lines denote averages over five trials and shaded regions mark the max and min.
    }
    \label{fig:sub_opt}
\end{figure}

\begin{figure}[t]
  \centering
  \begin{subfigure}[b]{\columnwidth}
    \includegraphics[width=\textwidth, height=20mm]{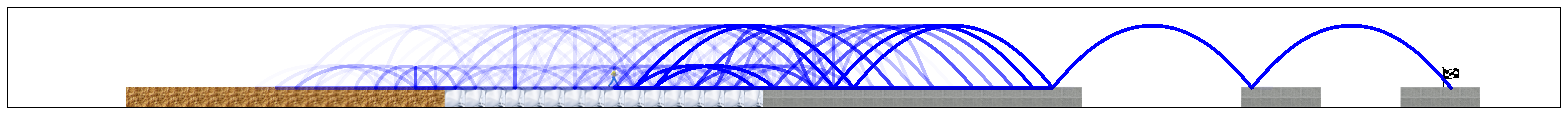}
    \caption{ ASE }
  \end{subfigure}
  \begin{subfigure}[b]{\columnwidth}
    \includegraphics[width=\textwidth, height=20mm]{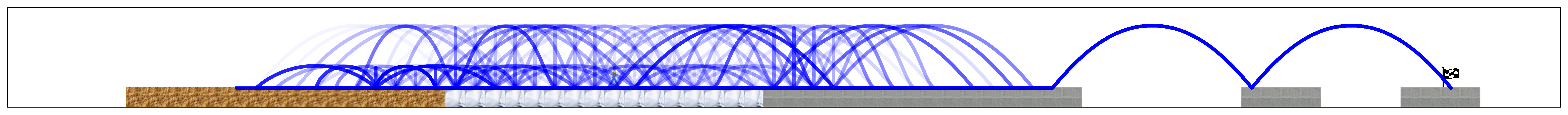}
    \caption{ Safe R-Max }
  \end{subfigure}
  \begin{subfigure}[b]{\columnwidth}
    \includegraphics[width=\textwidth, height=20mm]{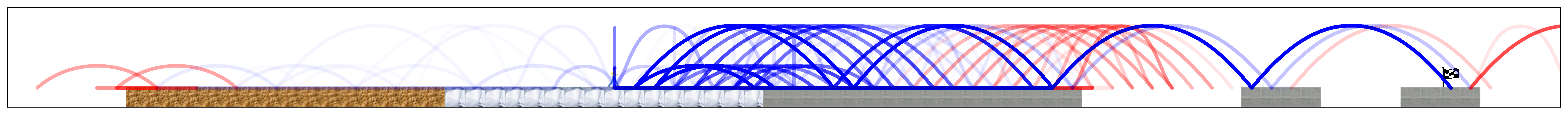}
    \caption{ MBIE (unsafe) }
  \end{subfigure}
  \caption{
  All trajectories of different agents on the Discrete Platformer domain.
  Unsafe trajectories are drawn in red.
  The brown, white, and grey squares correspond to the different surface types: sand, ice, and concrete, respectively.
  The agent starts in the center of the leftmost island.
  The flag represents the goal state.
  }
  \label{fig:heat_map}
\end{figure}

Through experiments, we aim to show that (a) ASE effectively guides exploration, requiring significantly less exploration than exhaustive exploration methods, and (b) the agent indeed never reaches a dangerous state under realistic settings of the parameters (namely $m$ and $\delta_T$ in Alg~\ref{alg:ase}). 
Below, we outline our experiments, deferring the details to Appendix~\ref{appendix:experiment}.
For our experiments, we consider two environments.
The first is a stochastic grid world containing five islands of grid cells surrounded by ``dangerous states'' (i.e., states where all actions result in a negative reward).
The agent can take actions that allow it to jump over dangerous states to transition between islands and reach the goal state.
The second is a stochastic platformer game where certain actions can doom the agent to eventually reaching a dangerous state by jumping off the edge of a platform.
In the grid world environment, two state-actions are analogous if the actions are equivalent and the states are near each other ($L_{\infty}$ distance), and in the stochastic platformer game, if the actions and all attributes but the horizontal position in the state are equivalent and the two states are on the same ``surface type''.


We compare the behavior of our algorithm against both ``unsafe'' and ``safe'' approaches to learning reward-based policies.
For the unsafe baselines, we consider the original (unsafe) MBIE algorithm \citep{strehl2008analysis}, R-Max \citep{brafman2002r}, and $\epsilon$-greedy, all adapted to use the analogy function (without which, exploring would take prohibitively long).
For safe baselines, unfortunately, there is no existing algorithm because no prior work has simultaneously addressed the two objectives of provably safe exploration and learning a reward-based policy in environments with unknown stochastic dynamics.
To this end, we create safe versions of R-Max and $\epsilon$-greedy (by restricting the allowable set of actions the agent can take to $\smash{\hatZsafe}$, and using analogies to expand $\smash{\hatZsafe}$), and also consider an ``Undirected ASE,''  which is a naïver version of ASE that expands $\smash{\hatZsafe}$ in all directions (not just along the goal policy).

Source code for the experiments is available at \url{https://github.com/locuslab/ase}.


\textbf{Results.}
To measure efficiency of exploration, we count the number of $\epsilon$-sub-optimal steps taken by each agent.
To calculate this, we first compute the true safe-optimal $Q$-function, $Q_M^{\pisafe^*}$.
We then count the number of $\epsilon$-sub-optimal actions taken by the agent, namely the number of times the agent is at a state $s_t$ and takes an action $a_t$ such that $Q_M^{\pisafe^*}(s_t, a_t) < \max_{a \in A} Q_M^{\pisafe^*}(s_t, a) - \epsilon$, where $\epsilon = 0.01$.
Figure~\ref{fig:sub_opt} shows our algorithm takes far fewer $\epsilon$-sub-optimal actions before it converges compared to all other \textit{safe} algorithms.
As for safety, during our experiments, we observe that, in both domains, the safe algorithms do not reach any unsafe states.
In the unsafe grid world domain, the MBIE, R-Max, and $\epsilon$-greedy algorithms encounter an average of $\num{85}$, $\num{5016}$, and $\num{915}$ unsafe states, respectively, and in the discrete platformer game encounter $\num{83}$, $\num{542}$, and $\num{768}$ unsafe states.

In the platformer domain, as we can see from Figure~\ref{fig:heat_map}, our method explores only the necessary parts of the initial safe-set, the right side, unlike the Safe R-Max algorithm. 
Although standard MBIE also directs exploration, it has many trajectories that end in unsafe states, which ASE avoids.

\section{Conclusion}

We introduced Analogous Safe Exploration (ASE), an algorithm for safe and guided exploration in unknown, stochastic environments using analogies.
We proved that, with high probability, our algorithm never reaches an unsafe state and converges to the optimal policy, in a PAC-MDP sense.
To the best of our knowledge, this is the first provably safe and optimal learning algorithm for stochastic, unknown environments (specifically, safe during exploration).
Finally, we illustrated empirically that ASE explores more efficiently than other non-guided methods.
Future directions for the this line of work include extensions to continuous state-action spaces, combining the handling of stochasticity we present here with common strategies in these domains such as kernel-based nonlinear dynamics.

\subsubsection*{Acknowledgements}
Melrose Roderick and Vaishnavh Nagarajan were supported by a grant from the Bosch Center for AI.
This material is based upon work supported by the National Science Foundation Graduate Research Fellowship under Grant No.~DGE1745016.

\bibliography{references}
\bibliographystyle{unsrtnat}
\newpage
\onecolumn
\appendix
\section{Notations, definitions and other useful facts}
\label{sec:facts}

In this section, we define certain standard notations and state some facts that were used in the main paper. 


\subparagraph{Policy of an algorithm.} In PAC-MDP models, an algorithm is considered to be a non-stationary policy $\mathcal{A}$ which, at any instant $t$, takes as input the path taken so far $p_t := s_0, a_0, \hdots, a_{t-1}, s_t$ and outputs an action. More formally, $\mathcal{A}: \left\lbrace S \times A \right\rbrace^* \times S \to A$. Note that since the algorithm  is already given the true reward function, we do not provide rewards as input to this non-stationary policy.  Then the value of the policy is formally defined as given below.

\begin{definition}\label{def:alg-value}
For any $p_t$, we define the value of the non-stationary policy $\mathcal{A}$ of our algorithm on the MDP $M$ as:
\[
V^{\mathcal{A}}_{M}(p_t) = \mathbb{E}\left[ \left. \sum_{t' = t}^{\infty} R(s_{t'},a_{t'}) \right| p_t, \mathcal{A} \right].
\]
For any $H > 0$, we denote the truncated value function of $\mathcal{A}$ as:
\[
V^{\mathcal{A}}_{M}(p_t, H) = \mathbb{E}\left[ \left. \sum_{t' = t}^{t+H} R(s_{t'},a_{t'}) \right| p_t, \mathcal{A} \right].
\]
\end{definition}

\subparagraph{Following $\pi \in \Pi(Z)$.} Below we state the fact that for a closed set $Z$, by following $\pi \in \Pi(Z)$, the agent always remains in $Z$ with probability $1$.

\begin{fact}
\label{fact:closed_policy}
For any closed set of state-action pairs $Z$, and for any policy $\pi \in \Pi(Z)$ and for any initial state $s_0 \in Z$:
\[
\Pr[\forall t, s_t \in Z \ | \ \pi, s_0 \in Z] = 1
\]
\end{fact}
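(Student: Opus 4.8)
The plan is to prove this by a straightforward induction on the timestep, supplemented by a countable union bound that upgrades ``probability $1$ at each individual step'' to ``probability $1$ for all steps simultaneously.'' First I would isolate the single-step claim, which is the heart of the argument and which uses both hypotheses: for every $t$, conditioned on $s_t \in Z$, we have $s_{t+1} \in Z$ almost surely. Since $\pi \in \Pi(Z)$, whenever $s_t \in Z$ the action actually taken, $a_t = \pi(s_t)$, satisfies $(s_t, a_t) \in Z$. Because $Z$ is closed, every next state $s'$ with $T(s_t, a_t, s') > 0$ admits some $a'$ with $(s', a') \in Z$, i.e.\ $s' \in Z$. Hence the entire support of $T(s_t, a_t, \cdot)$ lies inside $Z$, giving $\Pr[s_{t+1} \in Z \mid s_t \in Z, \pi] = 1$.

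Next I would turn this conditional claim into an unconditional per-step statement, namely $\Pr[s_t \notin Z] = 0$ for every $t$, by induction on $t$. The base case is immediate: $s_0 \in Z$ by assumption, so $\Pr[s_0 \notin Z] = 0$. For the inductive step, I would condition on $s_t$ and write $\Pr[s_{t+1} \notin Z] = \sum_{s} \Pr[s_t = s]\,\Pr[s_{t+1} \notin Z \mid s_t = s]$. The terms with $s \notin Z$ vanish because $\Pr[s_t = s] = 0$ by the inductive hypothesis, and the terms with $s \in Z$ vanish by the single-step claim above (which forces $\Pr[s_{t+1} \notin Z \mid s_t = s] = 0$). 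Hence $\Pr[s_{t+1} \notin Z] = 0$, closing the induction.

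Finally, to obtain the ``for all $t$'' conclusion I would apply a union bound over the countably many timesteps: $\Pr[\exists t,\, s_t \notin Z] \leq \sum_{t=0}^{\infty} \Pr[s_t \notin Z] = 0$, so the complementary event $\{\forall t,\, s_t \in Z\}$ has probability $1$, as claimed.

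I do not expect any genuine obstacle, since the statement is essentially a definitional consequence of closedness together with $\pi \in \Pi(Z)$. The only point requiring a little care is the last step: establishing probability $1$ for all $t$ \emph{simultaneously} rather than merely for each fixed $t$. This is handled cleanly by the countable union bound above (equivalently, by continuity from above of the probability measure applied to the decreasing sequence of events $\bigcap_{t' \leq t} \{s_{t'} \in Z\}$, each of which has probability $1$).
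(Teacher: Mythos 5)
Your proposal is correct and takes essentially the same approach as the paper: an induction on the timestep, where $\pi \in \Pi(Z)$ gives $(s_t, \pi(s_t)) \in Z$ and closedness forces the entire support of the next-state distribution into $Z$. The only difference is that you spell out the measure-theoretic bookkeeping (the per-step claim $\Pr[s_t \notin Z] = 0$ and the countable union bound over timesteps) that the paper's one-line ``hence, by induction'' leaves implicit.
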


\begin{proof}
 For any $t$, if $s_t \in Z$ (and this is true for $t=0$), we have  that $(s_t,\pi(s_t)) \in Z$ since $\pi \in \Pi(Z)$. Since $Z$ is closed, this means that $s_{t+1} \in Z$. Hence, by induction the above claim is true.
\end{proof}

\subparagraph{Communicatingness.}
We now discuss the standard notion of communicating and argue that it is equivalent to the notion defined in the paper (Definition~\ref{def:communicating}).
Recall that the standard notion of communicatingness of an MDP \citep{puterman2014markov} is that, for any pair of states in the MDP, there exists a stationary policy that takes the agent from one to the other with positive probability in finite steps.
This can be easily generalized to a subset of closed states as follows:

\begin{definition}
\label{def:communicating-standard}
A closed subset of state-action pairs, $Z \subset S \times A$ is said to be \textbf{communicating} if for any two states, $s, s' \in Z$, there exists a stationary policy $\pi_{s \to s'} \in \Pi(Z)$ such that for some $n \geq 1$
\[
\Pr[s_n = s' \ | \pi_{s \to s'}, s_0 = s] > 0
\]
\end{definition}

There are two key differences between this definition and Definition~\ref{def:communicating}.
(1) Recall that in Definition~\ref{def:communicating}, we defined communicating to mean that for a particular destination state, there exists a single stationary policy that can take the agent from {\em any} state inside $Z$ to that destination state, whereas in Definition~\ref{def:communicating-standard} there is a specific policy for every pair of states.
(2) Definition~\ref{def:communicating-standard} only requires the probability of reaching $s'$ to be positive as opposed to $1$.
Below, we note why these definitions are equivalent:





\begin{fact} \label{fact:communicating_single_policy}
Definition~\ref{def:communicating} and Definition~\ref{def:communicating-standard} are equivalent.
\end{fact}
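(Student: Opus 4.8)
The plan is to prove the two implications separately, with essentially all of the work concentrated in showing that the seemingly weaker Definition~\ref{def:communicating-standard} implies the seemingly stronger Definition~\ref{def:communicating}.

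First, the easy direction, Definition~\ref{def:communicating} $\Rightarrow$ Definition~\ref{def:communicating-standard}: given any $s, s' \in Z$, I would simply reuse the single policy $\pi_{s'}$ guaranteed by Definition~\ref{def:communicating}. Since $\Pr[\exists t, s_t = s' \mid \pi_{s'}, s_0 = s] = 1 > 0$, there must be some finite time $n$ at which $s_n = s'$ with positive probability, so taking $\pi_{s \to s'} := \pi_{s'} \in \Pi(Z)$ finishes this direction. The only subtlety is the requirement $n \geq 1$: when $s \neq s'$ the smallest such $n$ is automatically at least $1$, and when $s = s'$ I would take one step under $\pi_{s'}$ to a next state $s''$ reached with positive probability ($n=1$ works if $s''=s'$; otherwise $s''$ reaches $s'$ with probability $1$, hence at some finite $m \geq 1$ with positive probability, giving $n = 1+m$).

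The harder direction is Definition~\ref{def:communicating-standard} $\Rightarrow$ Definition~\ref{def:communicating}: from a family of pairwise policies that each reach their target only with positive probability and at possibly different times, I must build, for each fixed target $s'$, one stationary policy $\pi_{s'} \in \Pi(Z)$ that reaches $s'$ from every state with probability exactly $1$. I would first pass from policies to the transition graph on $Z$, placing an edge $u \to v$ whenever there is an action $a$ with $(u,a) \in Z$ and $T(u,a,v) > 0$. Because each $\pi_{s \to s'} \in \Pi(Z)$ and $Z$ is closed, the positive-probability event $s_n = s'$ is realized by an actual directed path from $s$ to $s'$ in this graph, so every $s \in Z$ has a finite graph-distance $d(s)$ to $s'$ (with $d(s')=0$). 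I would then define $\pi_{s'}$ by greedy descent: at each $s$ with $d(s)=k\geq 1$ choose an action $a$ with $(s,a)\in Z$ and some $v$ satisfying $T(s,a,v)>0$ and $d(v)=k-1$ (such an $a$ exists by the definition of $d$), set $\pi_{s'}(s)=a$, and at $s'$ pick any in-$Z$ action; by construction $\pi_{s'} \in \Pi(Z)$.

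Finally I would boost positive probability to probability one using finiteness. Let $D = \max_{s \in Z} d(s) < \infty$ and let $p := \tau^{D} > 0$, where $\tau$ is the minimum positive transition probability, which exists in a finite MDP (Assumption~\ref{as:negligible_transitions}). From any $s \in Z$ each step of the greedy-descent chain decreases the distance by one with probability at least $\tau$, so the probability of reaching $s'$ within $D$ steps is at least $\tau^{d(s)} \geq p$, uniformly in the starting state. Partitioning time into consecutive blocks of length $D$ and applying the Markov property at block boundaries, the probability of not having reached $s'$ by the end of the first $j$ blocks is at most $(1-p)^j \to 0$. Hence $\Pr[\exists t, s_t = s' \mid \pi_{s'}, s_0 = s] = 1$ for every $s$, which is exactly Definition~\ref{def:communicating}. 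The main obstacle is precisely this last upgrade, converting \emph{positive} probability at uncoordinated times under \emph{different} policies into probability \emph{one} under a \emph{single} stationary policy; the distance-layer construction together with the uniform per-block lower bound is the device that makes it go through.
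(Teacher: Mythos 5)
Your proof is correct, but the hard direction takes a genuinely different route from the paper's. The paper constructs $\pi_{s'}$ non-constructively, as the optimal policy of an auxiliary MDP $M_{s'}$ whose reward is the indicator of $s'$ (with $s'$ terminal and $\gamma = 1$), so that any policy's value there equals its probability of ever reaching $s'$; optimality then dominates each pairwise policy $\pi_{s \to s'}$, giving positive reach probability from every start state, and a separate lemma (Lemma~\ref{lem:communicating_positive_prob}) upgrades positive probability to probability $1$ using closedness of $Z$. You instead build $\pi_{s'}$ explicitly by greedy descent on shortest-path distances in the transition graph of $Z$, and you inline the probability-one upgrade with a quantitative bound: every $D$-step block reaches $s'$ with probability at least $\tau^{D}$ uniformly over starting states in $Z$, so the failure probability decays geometrically across blocks. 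What the paper's route buys is brevity and reuse, since Lemma~\ref{lem:communicating_positive_prob} is needed elsewhere anyway (e.g., in Lemma~\ref{lem:hatZsafe_communicating}); what your route buys is a self-contained, constructive argument with an explicit reach-time rate, and it automatically certifies $\pi_{s'} \in \Pi(Z)$ for an \emph{arbitrary} closed $Z$ because every chosen action lies in $Z$ by construction. The paper's necessary-direction argument, by contrast, is stated only for $Z = S \times A$ (where $\Pi(Z)$ is unconstrained); to cover general closed $Z$ its auxiliary-MDP optimum would need to be restricted to $Z$-actions, a detail your construction sidesteps entirely.
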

\begin{proof}
Informally, we need to show that ``for any pair of states in $Z$, there exists a policy that takes the agent from one to the other with positive probability'' if and only if
``there exists a single policy that reaches a destination state from anywhere in $Z$ with probability 1''.

The sufficient direction is clearly true: if such a $\pi_{s'}$ and $t$ from Definition~\ref{def:communicating} exist, then we know that for any $s \in S$, we can set $\pi_{s \to s'}$ and $n$ in Definition~\ref{def:communicating-standard} to be $\pi_{s'}$ and $t$ to prove communicatingness (since if it holds with probability $1$, it also holds with positive probability).

For the necessary direction, we will show that for a communicating $Z = S \times A$, for any $s' \in Z$, the optimal policy of another MDP satisfies the requirements of $\pi_{s'}$.
For a communicating $Z$ with Definition~\ref{def:communicating-standard}, we know that for any $s, s' \in S$, there exists a policy $\pi_{s \to s'}$ and $n \geq 1$ such that $\Pr [s_n = s' | \pi_{s \to s'}, s_0 = s] > 0$.

Now we construct the MDP.
For a given $s'$, define an MDP $M_{s'} = \langle S, A, T, R_{s'}, \gamma_{s'} \rangle$ where $\gamma_{s'} = 1$, $R(s) = \mathbf{1} \{s = s' \}$, and $s'$ is terminal.
Note that, since $R(s') = 1$ and is zero everywhere else and $s'$ is terminal, for any policy $\pi$, the state value function $V_{M_{s'}}^{\pi}(s) = \Pr[\exists \ t, \ s.t. \ s_t = s' | \pi, s_0 = s]$.
Now define $\pi_{s'}$ to be the optimal policy of this MDP, i.e. the policy that maximizes this value function.
This implies that for any $s$, $V_{M_{s'}}^{\pi_{s'}}(s) \geq V_{M_{s'}}^{\pi_{s, s'}}(s)$. Then, 
\begin{align*}
  \Pr[\exists \ t, \ s.t. \ s_t = s' | \pi_{s'}, s_0 = s]
  &= V_{M_{s'}}^{\pi_{s'}}(s) \\
  &\geq V_{M_{s'}}^{\pi_{s, s'}}(s) \\
  &= \Pr[\exists \ t, \ s.t. \ s_t = s' | \pi_{s, s'}, s_0 = s] \\
  &\geq \Pr[s_n = s' | \pi_{s, s'}, s_0 = s] \\
  &> 0.
\end{align*}
Thus, $\Pr[\exists \ t, \ s.t. \ s_t = s' | \pi_{s'}, s_0 = s] > 0$.
Since this condition is satisfied for all starting states $s \in S$ and since we know that $Z$ is closed, we can use Lemma~\ref{lem:communicating_positive_prob} to show that in fact $\Pr[\exists \ t, \ s.t. \ s_t = s' | \pi_{s'}, s_0 = s] = 1$, proving the existence of $\pi_{s'}$ as claimed.


\end{proof}

\subparagraph{$\Zsafe$ is safe.}
Below, we prove that the set $\Zsafe$ defined in Definition~\ref{def:Zsafe} is indeed a safe set.

\begin{fact}
\label{fact:Zsafe}
$\Zsafe$ is a safe set.
\end{fact}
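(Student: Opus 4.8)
The plan is to verify directly the two defining conditions of a safe set from Definition~\ref{def:safeset}: that every pair in $\Zsafe$ has non-negative reward, and that $\Zsafe$ is closed. The reward condition is essentially free: if $(s,a) \in \Zsafe$, the witnessing policy $\pireturn$ makes the event $\{\exists t\geq 0: (s_t,a_t)\in Z_0\} \cap \{\forall t: R(s_t,a_t)\geq 0\}$ hold with probability $1$ starting from $(s_0,a_0)=(s,a)$. Since $(s_0,a_0)=(s,a)$ is deterministic, the clause $R(s_0,a_0)=R(s,a)\geq 0$ must hold outright, so $R(s,a)\geq 0$. Before turning to closedness I would record the preliminary fact $Z_0 \subseteq \Zsafe$: any $(s,a)\in Z_0$ satisfies the return clause trivially at $t=0$, and since $Z_0$ is safe and communicating (Assumption~\ref{as:z_0}) there is a policy in $\Pi(Z_0)$ keeping the agent in the safe set $Z_0$ forever, so all rewards stay non-negative.

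The substantive part is closedness. Fix $(s,a)\in\Zsafe$ with witness $\pireturn$ and a next state $s'$ with $T(s,a,s')>0$; I must exhibit an $a'$ with $(s',a')\in\Zsafe$. The engine is a ``probability-one on every positive branch'' argument: decomposing the defining event $G$ over the first transition gives $1 = \sum_{s''} T(s,a,s'')\,\Pr[G\mid s_1=s'',\pireturn]$, and since the weights sum to $1$ and each conditional probability is at most $1$, every branch with $T(s,a,s'')>0$ must have $\Pr[G\mid s_1=s'',\pireturn]=1$; in particular this holds for $s''=s'$.

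I would then split into two cases. If $(s,a)\notin Z_0$, the first hitting time of $Z_0$ is at least $1$, so on the branch $s_1=s'$ the return must occur at some $t\geq 1$; by the Markov property the conditional law of $(s_1,a_1,s_2,\ldots)$ given $s_1=s'$ equals the law of the process started from $(s',\pireturn(s'))$, and under this identification both the return clause (now for $t\geq 0$) and the reward clause (for $t\geq 1$, matching the new process's $t\geq 0$) transfer verbatim, so $a':=\pireturn(s')$ witnesses $(s',a')\in\Zsafe$. If instead $(s,a)\in Z_0$, then since $Z_0$ is closed there is some $a'$ with $(s',a')\in Z_0\subseteq\Zsafe$, finishing the case.

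The main obstacle I anticipate is the bookkeeping in the closedness step: carefully transferring a probability-one statement across a single transition via the Markov property, making sure the $t=0$ reward term is handled separately (it is deterministic and already known non-negative) and that the returnability clause is not vacuously satisfied by $(s,a)$ already lying in $Z_0$ --- which is exactly why the case split on $(s,a)\in Z_0$ is needed, and why the preliminary $Z_0\subseteq\Zsafe$ together with closedness of $Z_0$ is invoked. A minor technical point is whether $\pireturn$ may be taken stationary; if not, the shifted continuation policy serves as the witness from $(s',a')$, so the argument is unaffected.
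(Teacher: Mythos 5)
Your proof is correct, and its core is the same as the paper's: closedness of $\Zsafe$ is witnessed by the return policy's own action at the next state, $a' = \pireturn(s')$, justified by conditioning the probability-one defining event on the positive-probability branch $s_1 = s'$ and shifting via the Markov property. Where you go beyond the paper is the case split on whether $(s,a) \in Z_0$, and this is not mere bookkeeping: the paper's one-line argument asserts that ``by definition'' the agent starting from $(s',\pireturn(s'))$ reaches $Z_0$ with probability $1$, but when $(s,a) \in Z_0$ the defining event of $\Zsafe$ is already satisfied at $t=0$, so a perfectly valid witness $\pireturn$ need never return to $Z_0$ after leaving it (it could wander forever in a non-negative-reward region with no safe path back, as in the paper's own discussion of safe islands in Appendix~\ref{sec:safe_islands}); for such a witness, $(s',\pireturn(s'))$ need not lie in $\Zsafe$ at all. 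Your split repairs exactly this: when $(s,a) \in Z_0$, closedness of $Z_0$ (Assumption~\ref{as:z_0}) supplies an action $a'$ with $(s',a') \in Z_0$, and your preliminary observation $Z_0 \subseteq \Zsafe$ (which needs only closedness of $Z_0$, not communicatingness, via Fact~\ref{fact:closed_policy}) finishes; when $(s,a) \notin Z_0$, the return must occur at some $t \geq 1$, so the shifted process argument transfers the defining event verbatim to $(s',\pireturn(s'))$. So: same route as the paper, executed with the extra care needed to make it airtight in the $Z_0$ edge case.
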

\begin{proof}
For any $(s,a) \in \Zsafe$, we have by definition that $R(s,a) \geq 0$.  Now, if for some $s'$ for which $T(s,a,s') > 0$, consider
 $a' = \pireturn(s')$. Then, by definition, if the agent were to start at $(s',a')$  and continue following $\pireturn$, with probability $1$, it would reach $Z_0$ while experiencing only positive rewards. Hence, $(s',a') \in \Zsafe$. Thus, $\Zsafe$ is closed.
\end{proof}

\section{Methodology}
\label{sec:methodology}
In this section, we provide detailed definitions of our algorithms and the notation used for our proofs.
We start by giving detailed a description of the main algorithm and the algorithms which it calls.
We then detail how our confidence intervals are computed and how we use these confidence intervals to construct the set of all candidate transition functions.
The next section details how these candidate transition functions are used for computing optimistic policies $\optpigoal$, $\optpiexplore$ and $\optpiswitch$.
Specifically, we define new MDPs for each of these policies and define how an optimistic policy is computed on an arbitrary MDP.
Finally, we formalize the discounted state distribution and discuss how this is computed in practice.

Here we provide a useful reference for some of the notation used throughout the proofs.

\begin{center}
\begin{tabular}{ |c|c| } 
 \hline
 True MDP & $M = \langle S, A, T, R, \gamma\rangle$  \\  \hline
 True safe, optimal goal policy & $\pisafe^* $ \\ \hline
 Empirical transition probability & $\hat{T}$ \\ \hline
 Empirical $L1$ confidence interval width &  $\hat{\epsilon}_T$ \\ \hline
 Arbitrary MDP & $M^\dagger = \langle S, A, T^\dagger, R^\dagger, \gamma^\dagger \rangle$ \\ \hline
 Optimal Q-function on the optimistic MDP $\optMgoal $ & $\optQgoal$ \\ \hline
 Optimal Q-function on the optimistic MDP $\optMexplore$ & $\optQexplore$ \\ \hline
 Optimal Q-function on the optimistic MDP $\optMswitch$ & $\optQswitch$ \\ \hline
 Optimistic goal policy (defined by $\optQgoal$) & $\optpigoal$ \\ \hline
 Optimistic explore policy (defined by $\optQgoal$) & $\optpiexplore$ \\ \hline
 Optimistic switching policy (defined by $\optQswitch$) & $\optpiswitch$ \\ \hline
 Analogy-based empirical probabilities and confidence intervals   & $\del{T}, \del{\epsilon}_T$ \\ \hline
 Optimal value function for some MDP $M^\dagger$ & $V^*_{M^\dagger}$ \\ \hline
\end{tabular}
\end{center}

\subsection{Algorithms}
\label{sec:algs}

\setcounter{algorithm}{0}

\begin{algorithm}[t]
\caption{Analogous Safe-state Exploration $(\alpha, \Delta, m, \delta_T, R, \gamma, \gamma_{\text{explore}}, \gamma_{\text{switch}}, \tau)$}
\begin{algorithmic}
\State Initialize: $\hatZsafe \gets Z_0$; $n(s,a),n(s,a,s') \gets 0$; $\optZgoal \gets S \times A$; $s_0 \gets \sinit$.
\State Initialize: $\hatZunsafe \gets \{(s, a) \in S \times A : R(s, a) < 0 \}$.
\State Compute confidence intervals, $\del{T}$ and $\del{\epsilon}_T$, using Alg~\ref{alg:tighter_ci} (Appendix~\ref{sec:methodology}) with 
\State \hspace{\algorithmicindent} state-action-state counts $n$, parameter $\delta_T$, and analogy function $\alpha, \Delta$.
\State Compute $\optpigoal, \optZgoal$, $\Zexplore$, $\hatZunsafe$ using Alg~\ref{alg:z_goal} and~\ref{alg:z_explore} with parameters $\gamma, \tau$
\State \hspace{\algorithmicindent} and reward function $R$.
\State Compute $\optpiexplore, \optpiswitch$ using value iteration (Appendix~\ref{sec:opt_policies}) with parameters $\gamma_{\text{explore}}, \gamma_{\text{switch}}$.
\For{$t = 1, 2, 3, \ldots$} 
    \State $a_t \gets \begin{cases} \optpigoal(s_t) & \textbf{if } s_t \in \optZgoal$ \& $\optZgoal \subset \hatZsafe \\
    \optpiexplore(s_t) & \textbf{if } \optZgoal \not\subset \hatZsafe \\
    \optpiswitch(s_t) & \text{\bf otherwise}.
    \end{cases}$
    \State Take action $a_t$ and observe next state $s_{t+1}$.
    \If{$n(s_t, a_t) < m$}
        \State $n(s_t, a_t) \pluseq 1$,  $n(s_t, a_t, s_{t+1}) \pluseq 1$.
        \State Recompute confidence intervals, then expand $\hatZsafe$ using Alg~\ref{alg:compute_safe_set} with parameter $\tau$.
        \State Recompute $\optpigoal, \optZgoal$, $\Zexplore$, $\hatZunsafe$, $\optpiexplore, \optpiswitch$ as above.
    \EndIf
\EndFor

\end{algorithmic}
\end{algorithm}

\begin{algorithm}[t]
\caption{Compute Safe Set $(R, \tau)$}
\label{alg:compute_safe_set}
\begin{algorithmic}
\Require Estimated safe set $\hatZsafe$ and confidence intervals $\del{T}$, $\del{\epsilon}_T$.
\State $Z_{\text{candidate}} \gets \{(s,a) \in (S \times A) \setminus \hatZsafe \ s.t. \del{\epsilon}_T(s, a) < \tau/2, R(s,a) \geq 0 \}$.

\While{$Z_{\text{candidate}} \neq \Zclosed$ in the last iteration}
\State $\Zreachable \gets \{ (s,a) \in \Zcandidate: s \in \hatZsafe \}$.
\While{$\Zreachable$ changed in the last iteration}
  \For{$(s, a) \in \Zreachable \cup \hatZsafe$}
    \State Add $\{(s', a') \in Z_{\text{candidate}} \ s.t. \ \del{T}(s, a, s') > 0\}$ to $\Zreachable$.
  \EndFor
\EndWhile
\State $\Zreturnable \gets \emptyset$.
\While{$\Zreturnable$ changed in the last iteration}
  \For{$(s, a) \in \Zreachable$}
    \If{$\exists \ (s', a') \in \Zreturnable \cup \hatZsafe \ s.t. \ \del{T}(s, a, s') > 0$}
        \State Add $(s, a)$ to $\Zreturnable$.
    \EndIf
  \EndFor
\EndWhile
\State $\Zclosed \gets \Zreturnable$.
\While{$\Zclosed$ changed in the last iteration}
  \For{$(s, a) \in \Zclosed$}
    \If{$\exists \ s' \in S \ s.t. \ \del{T}(s, a, s') > 0$ and \ $\forall \ a' \in A, \ (s', a') \not \in \Zclosed \cup \hatZsafe$}
        \State Remove $(s, a)$ from $\Zclosed$.
    \EndIf
  \EndFor
\EndWhile
\State $Z_{\text{candidate}} \gets \Zclosed$.
\EndWhile
\State $\hatZsafe \gets \Zclosed \cup \hatZsafe$.
\end{algorithmic}
\end{algorithm}

\begin{algorithm}[t]
\caption{Compute $\optpigoal$, $\optZgoal$ and $\Zexplore$ $(\alpha, \Delta, \tau)$}
\label{alg:z_goal}
\begin{algorithmic}
\Require Estimated safe and unsafe sets $\hatZsafe, \hatZunsafe$ and confidence intervals $\del{T}$, $\del{\epsilon}_T$.
\State Initialize: $\Zexplore \gets \emptyset$.
\For{$i = 1, 2, \ldots$}
    \State Compute $\optpigoal$ using Eq~\ref{eq:q-values}.
    \State Compute $\optZgoal$ using Alg~\ref{alg:compute_z_goal}.
    \If{$\optZgoal \subset \hatZsafe$}
      \State Break.
    \EndIf
    
    \State $Z_{\text{edge}} \gets \{(s,a) \in \optZgoal\setminus \hatZsafe \; | s \in \hatZsafe \}$.
    \State Compute $\Zexplore$ using Alg~\ref{alg:z_explore}.
    
    \If{$\Zexplore = \emptyset$}
      \State Add $Z_{\text{edge}}$ to $\hatZunsafe$.
    \Else
      \State Break.
\EndIf
\EndFor

\end{algorithmic}
\end{algorithm}

\begin{algorithm}[t]
\caption{Compute $\Zexplore$ $(\alpha, \Delta, \tau)$}
\label{alg:z_explore}
\begin{algorithmic}
\Require Sets of state-action pairs on the edge of the safe set, $Z_{\text{edge}}$, and on the goal path, $\optZgoal$, \\
\hspace{\algorithmicindent}\hspace{\algorithmicindent} along with $\hatZsafe, \hatZunsafe$ and $\del{T}$, $\del{\epsilon}_T$.
\State Initialize: $\Zexplore, \Zreturn \gets \emptyset$, $Z_{\text{candidate}} \gets \{ (s,a) \in \optZgoal : s \in \hatZsafe \}$, $L \gets 0$, $\Znext^0 \gets Z_{\text{edge}}$.
\While{$\Zexplore = \emptyset$ and $\Znext^L \neq \emptyset$}
    \State $\Znext^{L+1} \gets \emptyset$ .
    \For{$(s,a) \in \Znext^L$}
        \State Add $(s,a)$ to $\Zreturn$
        \If{$\del{\epsilon}_T(s, a) > \tau/2$}
            \State Add $\{\tilde{s}, \tilde{a} \in \hatZsafe : \Delta((s, a), (\tilde{s}, \tilde{a})) < \tau / 4 \}$ to $\Zexplore$.
        \Else
            \State Add $\{s', a' \in S \times A : \del{T}(s, a, s') > 0 \}\setminus (\Zreturn \cup \hatZsafe \cup \hatZunsafe)$ to $\Znext^{L+1}$.
        \EndIf
    \EndFor
    \State $L \gets L+1$.
\EndWhile
\end{algorithmic}
\end{algorithm}

\begin{algorithm}[t]
\caption{Compute $\optZgoal$ $(\alpha, \Delta, \tau)$}
\label{alg:compute_z_goal}
\begin{algorithmic}
\Require Confidence intervals $\del{T}$, $\del{\epsilon}_T$.
\State For convenience, denote $\rho_{\optpigoal, \sinit}^{\opt{M}_{\text{goal}}}$ as $\optrhogoal$. 
\State Initialize: $\optZgoal \gets \emptyset$, $\optrhogoal(\cdot, \cdot, 0)$ as in Eq~\ref{eq:init_state_distribution}.
\State Using dynamic programming, compute $\optrhogoal(\cdot, \cdot, t)$ as in Eq~\ref{eq:computing_state_distribution} for $t=1, 2, \hdots, |S|$ .
\State Add all $s, a \in S \times A$ where $\optrhogoal(s, a, |S|) > 0$ to $\optZgoal$
\end{algorithmic}
\end{algorithm}

We first restate Algorithm~\ref{alg:ase}, the full overview of our algorithm, with a bit more detail (most notably including $\hatZunsafe$).
Next, Algorithm~\ref{alg:compute_safe_set} details how the agent computes the current estimate of the safe set while ensuring reachability, returnability, and closedness.
The correctness and efficiency of this algorithm is proven in Section \ref{sec:proof:compute_safe_set}.
Algorithms \ref{alg:z_goal}, \ref{alg:z_explore}, and \ref{alg:compute_z_goal} together provide an overview of how ASE computes the goal and explore policies.

\subsection{Confidence intervals}
\label{sec:ci}
We let $\hat{T}$ denote the empirical transition probabilities. We then let $\hat{\epsilon}_T(s,a)$ denote the width of the $L1$ confidence interval for the empirical transition probability $\hat{T}(s,a)$.
As shown in \citet{strehl2008analysis}, by using the Hoeffding bound, we can ensure that if
\begin{equation}
    \label{eq:eps_T}
    \hat{\epsilon}_T(s,a) = \sqrt{\frac{2[\ln{(2^{|S|} -2)} - \ln{(\delta_T)}]}{n(s, a)}}
\end{equation}
where $n(s, a)$ is the number of times we have experienced state-action $(s, a)$, our $L1$ confidence interval hold with probability $\delta_T$.
Using the given analogies, we can then derive tighter confidence intervals of width $\del{\epsilon}_T$ (centered around an estimated $\delT$), especially for state-action pairs we have not experienced, as in Algorithm~\ref{alg:tighter_ci}.
The algorithm essentially transfers the confidence interval from a sufficiently similar, well-explored state-action pair to the under-explored state-action pair, using analogies.

\begin{algorithm}[t]
\caption{Compute Analogy-based Confidence Intervals $(\alpha, \Delta)$}
\label{alg:tighter_ci}
\begin{algorithmic}
\Require State-action and state-action-state counts $n(\cdot, \cdot)$, $n(\cdot, \cdot, \cdot)$.
\State Construct $\hat{T}$ (the empirical transition probabilities) using $n(\cdot, \cdot)$, $n(\cdot, \cdot, \cdot)$.
\State Compute $\hat{\epsilon}_T(s,a)$ with Eq~\ref{eq:eps_T} using $\delta_T$.
  \For{$(s, a) \in S \times A$}
    \State $(\tilde{s},\tilde{a}) \gets \arg\min \{ \hat{\epsilon}_T(s,a) , \ \min_{\tilde{s}, \tilde{a}} \epsilon_T(\tilde{s}, \tilde{a}) + \Delta((s, a), (\tilde{s}, \tilde{a})) \}$.
    \State $\del{\epsilon}_T(s,a) := \hat{\epsilon}_T(\tilde{s},\tilde{a})$.
    \For{$s' \in S$}
    \State $\tilde{s}' \gets \alpha((s,a,s'),(\tilde{s},\tilde{a}))$.
    \State $\delT(s,a,s') := \hat{T}(\tilde{s},\tilde{a},\tilde{s}')$.
    \EndFor
  \EndFor
 \end{algorithmic}
\end{algorithm}

Given these analogy-based $L1$ confidence intervals, we now define a slightly narrower space of candidate transition probabilities than the space defined by these confidence intervals in order to fully establish the support of certain transitions.
Specifically, we take into account Assumption~\ref{as:negligible_transitions}, to rule out candidates which do not have sufficiently large transition probabilities.
We also make sure that a transition probability is a candidate only if $Z_0$ is closed under it, as assumed in Assumption~\ref{as:z_0}.

\begin{definition}
\label{def:ci}
Given the transition probabilities $\delT$ and confidence interval widths $\del{\epsilon}_T:S \times A \to \mathbb{R}$, we say that $T^\dagger$ is a \textbf{candidate} transition if it satisfies the following for all $(s,a) \in S \times A$:
\begin{enumerate}
    \item $\|T^\dagger(s,a) - \del{T}(s,a)\|_1 \leq \del{\epsilon}_T(s,a)$.
    \item if for some $s'$, $\del{T}(s,a,s') = 0$ and $\del{\epsilon}_T(s,a) < \tau$, then  $T^\dagger(s,a,s') = 0$.
    \item if $(s,a) \in Z_0$, then $\forall s' \notin Z_0$, $T^\dagger(s,a,s') = 0$
\end{enumerate}
 Furthermore, we let $CI(\del{T})$ denote the space of all candidate transition probabilities.
\end{definition}

\subsection{Discounted future state distribution.}

Below we define the notion of a discounted future state distribution (originally defined in \citet{sutton2000policy}), and then describe how we compute it in practice.
We will need this notion in order to compute $\optZgoal$ (discussed in the section titled ``Goal MDP'').

Given an MDP $M^{\dagger}$, policy $\pi$, and state $s$, the discounted future state distribution is defined as follows:
\begin{equation} \label{eq:discounted_state_distribution}
  \rho_{\pi, s}^{M^\dagger}(s', a')= (1-\gamma) \sum_{t=0}^{\infty} \gamma^{t} \Pr\left(s_{t}=s', a_{t}=a' | \pi, s_0 = s \right)
\end{equation}

In words, for any state action pair $(s',a')$, $\rho_{\pi, s}(s', a')$ denotes the sum of discounted probabilities that $(s',a')$ is taken at any $t \geq 0$ following policy $\pi$ from $s$ in $M^\dagger$.

\subparagraph{Computing discounted future state distributions.}
We use a dynamic programming approach to approximate the discounted future state distribution.
Note that we are assuming that the policy $\pi$ is deterministic.

First, for all $\tilde{s} \in S$ and $\tilde{a} \in A$, we set:
\begin{align} \label{eq:init_state_distribution}
  \rho_{\pi, s}^{M^\dagger}(\tilde{s}, \tilde{a}, 0) =
  \begin{cases}
    1 - \gamma & \tilde{s} = s \text{ and } \tilde{a} = \pi(s) \\
    0 & \text{otherwise} \\
  \end{cases}
\end{align}

Then, at each step, $t+1$, we will set:
\begin{equation} \label{eq:computing_state_distribution}
  \rho_{\pi, s}^{M^\dagger}(\tilde{s}, \tilde{a}, t+1) = \rho_{\pi, s}^{M^\dagger}(\tilde{s}, \tilde{a}, t) + \gamma \sum_{\tilde{s}' \in S} T(\tilde{s}', \pi(\tilde{s}'), \tilde{s}) \rho_{\pi, s}^{M^\dagger}(\tilde{s}', \pi(\tilde{s}'), t)
\end{equation}

\subsection{Computing optimistic policies} 
\label{sec:opt_policies}

$\optpigoal$, $\optpiexplore$, and $\optpiswitch$ are the optimisitc policies for three different MDPs, $\Mgoal$, $\Mexplore$, and $\Mswitch$ (described below).
For the theory, we assume that these policies are the true optimistic policy, but in practice this is computed using finite-horizon optimistic form of value iteration introduced in \citet{strehl2008analysis}.
Here we describe this optimistic value iteration procedure.

\subparagraph{Optimistic Value Iteration}

Let $M^\dagger$ be an MDP that is the same as $M$ but with an arbitrary reward function ${R}^\dagger$ and discount factor $\gamma^\dagger$. Then, the optimistic state-action value function is computed as follows.
\begin{align*}
\opt{Q}^{\dagger}(s,a,0) &= 0 \\
\opt{Q}^{\dagger}(s,a,1) &= R^\dagger(s,a) \\
\opt{Q}^{\dagger}(s,a,t) & = R^\dagger(s,a) + \gamma^\dagger \max_{{T}^\dagger \in CI(\delT)} \sum_{s' \in S} T^\dagger(s,a,s') \max_{a' \in A} \opt{Q}^{\dagger}(s',a',t-1), \; \forall t > 0. \numberthis\label{eq:q-values}
\end{align*}

As $t \to \infty$, $\opt{Q}^{\dagger}(s,a, t)$ converges to a value $\opt{Q}^\dagger(s,a)$ since the above mapping is a contraction mapping.  For our theoretical discussion, we assume that we compute these  values for an infinite horizon i.e., we compute $\opt{Q}^{\dagger}(s,a)$.

We then let $\opt{T}^\dagger$ denote the transition probability from $CI(\del{T})$ that corresponds to the optimistic transitions that maximize $\opt{Q}^\dagger$ in Equation~\ref{eq:q-values}. Also, we let $\opt{M}^\dagger$ denote the `optimistic' MDP, $\langle S, A, \opt{T}^\dagger, R^\dagger, \gamma^\dagger \rangle$.


\subparagraph{Goal MDP.}

We define $\Mgoal$ to be an MDP that is the same as $M$, but without the state-action pairs from $\hatZunsafe$ (which is a set of state-action pairs that we will mark as unsafe). More concretely, $\Mgoal = \langle S, A, T, R_{\text{goal}}, \gamma_{\text{goal}} \rangle$, where:
\[
\Rgoal(s,a) =
\begin{cases}
-\infty & (s, a) \in \hatZunsafe \\
R(s,a) & \text{otherwise}.
\end{cases}
\]

We then define $\opt{Q}_{\text{goal}}$ to be the finite-horizon optimistic $Q$-value computed on $\Mgoal$, and $\optpigoal$ the policy dictated by the estimate of $\optQgoal$. Also, let $\opt{T}_{\text{goal}}$ denote the optimistic transition probability and $\opt{M}_{\text{goal}}$ the optimistic MDP.


Using the above quantities, we now describe \textbf{how to compute $\optZgoal$} (which we also summarize in Alg~\ref{alg:compute_z_goal}). Recall that we want $\optZgoal$ to be the set of all state-actions that would be visited with some non-zero probability by following $\optpigoal$ under the optimistic MDP $\opt{M}_{\text{goal}}$. More concretely, for  convenience, first define  $\optrhogoal:= \rho_{\optpigoal, \sinit}^{\opt{M}_{\text{goal}}}$, where $\rho_{\optpigoal, \sinit}^{\opt{M}_{\text{goal}}}$ is as defined in Equation~\ref{eq:discounted_state_distribution}. Then, we would like $\optZgoal$ to be the set $\{(s, a) \in S \times A : \optrhogoal(s, a) > 0 \}$. 

However, directly computing this infinite-horizon estimate in practice is impractical. Instead, here we make use of Lemma \ref{lem:rho_horizon_bound} and Corollary~\ref{cor:optzgoal_correctness}, which allow us to exactly compute $\optZgoal$ through a finite-horizon estimate of $\optrhogoal$.
Specifically, consider the finite-horizon estimate of $\optrhogoal$ (i.e., the finite horizon estimate of $\rho_{\optpigoal, \sinit}^{\opt{M}_{\text{goal}}}$ as defined in Equation~\ref{eq:init_state_distribution} and Equation~\ref{eq:computing_state_distribution}) which can be computed using dynamic programming. In Lemma \ref{lem:rho_horizon_bound}, we show that if we set the horizon $H \geq |S|$, then $\optrhogoal(s, a, H) > 0$ if and only if $\optrhogoal (s, a) > 0$. Hence, we fix $H$ to be any value greater than or equal to $|S|$ and then compute
\begin{equation}\label{eq:optzgoal}
\optZgoal := \{(s, a) \in S \times A : \optrhogoal(s, a, H) > 0. \}
\end{equation}

As stated in Corollary~\ref{cor:optzgoal_correctness}, this will guarantee what we need, namely that $\optZgoal = \{(s, a) \in S \times A : \optrhogoal(s, a) > 0 \}$. We summarize this algorithm in Algorithm~\ref{alg:compute_z_goal}

\subparagraph{Explore MDP.}

We define $\Mexplore = \langle S, A, T, \Rexplore, \gamma_{\text{explore}} \rangle$ to be an MDP with the same states, actions, and transition function as $M$, but with a different reward function, $\Rexplore$ (computed in Algorithm \ref{alg:z_explore}), and discount factor, $\gamma_{\text{explore}}$.
$\Rexplore$ is defined as follows:

\[
\Rexplore(s,a) = \begin{cases}
  1 & (s,a) \in \Zexplore \\
  0 & (s,a) \in \hatZsafe \setminus \Zexplore \\
  -\infty & \text{otherwise}.
\end{cases}
\]

\subparagraph{Switch MDP.}

We define $\Mswitch = \langle S, A, T, \Rswitch, \gamma_{\text{switch}} \rangle$ to be an MDP with the same states, actions, and transition function as $M$, but with a different reward function, $\Rswitch$, and discount factor, $\gamma_{\text{switch}}$.
More specifically, $\Rswitch$ is defined as follows:

\[
\Rswitch(s,a) = \begin{cases}
  1 & (s,a) \in \optZgoal \\
  0 & (s,a) \in \hatZsafe \setminus \optZgoal \\
  -\infty & \text{otherwise}.
\end{cases}
\]

\subsection{Regarding Safe Islands}
\label{sec:safe_islands}
Here, we elaborate on the motivation behind involving the notion of returnability (a) in Assumption~\ref{as:sim} and (b) in the definition of $\Zsafe$ in Definition~\ref{def:Zsafe}. 

\textbf{Returnability in Assumption~\ref{as:sim}.} Recall that a key motivation behind Assumption~\ref{as:sim} was that in order to add a safe subset of state-actions to the current safe set, it is necessary for the agent to establish that subset's returnability i.e., establish a return path from the to-be-added subset to the current safe set. 
Here we explain why this is necessary.
Consider a hypothetical agent that tries to expand its safe set without ensuring that whatever it adds to the safe set is returnable.  Such an agent might venture into a safe island: although the agent knows that the subset of state-action pairs it has entered into is safe, the agent does not know of any safe path from that subset back to the original safe set.
There are two distinct kinds of such safe islands.
The first is where there is truly no safe return path; the second is where there does, in fact, exist some safe return path, but the agent has not yet established that this path is indeed safe.
We will refer to these islands as True Safe Islands and False Safe Islands. 

Although entering into a True Safe Island is not a problem for ensuring optimality in the PAC-MDP sense, entering into a False Safe Island creates trouble. More concretely, in a True Safe Island, since there is no safe way to leave such an island, even the safe-optimal policy {\em must} remain on this True Safe Island. Thus, the agent that has ventured into a True Safe Island, can potentially find the $\epsilon$-optimal policy, even though it may be forever stuck in this island.
However, in a False Safe Island, 
since there is indeed a safe path to leave this island, it can be the case that the safe-optimal policy from this island will leave the island (and then achieve far higher future reward, than a policy confined to the False Safe Island). Hence, for the agent to be PAC-MDP optimal, it must first establish safety of this path.  However,
for an agent stuck inside this island, there may be no means to establish safety of that path simply by exploring that island --  unless the island is rich enough with analogous states like $Z_0$ is (which may not be the case if this happens to be a tiny island). 
Thus, the agent could be forever stuck in the False Safe Island and even worse, it might act $\epsilon$-sub-optimally forever (by choosing to remain instead of exiting). 
 Hence, it's necessary for the agent to establish returnability of any state-action pair before adding it (and Assumption~\ref{as:sim} enables us to do this).

\textbf{Returnability in Definition~\ref{def:Zsafe}.} Next, we explain the motivation behind defining  $\Zsafe$ in Definition~\ref{def:Zsafe} to be a ``returnable'' set. Specifically, recall that $\Zsafe$ is a safe subset of state-actions, and we would like to compete against the optimal policy on this subset; more importantly, we defined  this in a way that any state-action pair in this set is returnable, meaning that it has a return path to $Z_0$. 

Consider the hypothetical scenario where $\Zsafe$ is defined to allow non-returnable state-actions. 
Here, we argue that the agent will have to navigate some impractical complications. To begin with, this alternative definition of $\Zsafe$ could mean that the safe-optimal policy may lead one into True Safe Islands i.e., safe subsets of state-actions from which there is no safe path back to $Z_0$. This in turn could potentially require the agent to enter into a True Safe Island in order to be PAC-MDP-optimal. Therefore, when the agent expands its safe set, it is necessary for it to find True Safe Islands and add them to the safe set; while doing so, crucially, as discussed earlier, the agent must also avoid adding False Safe Islands to the safe set. Then, in order to meet these two objectives, the agent should consider every possible safe island and consider all its possible return paths, and establish their safety. If it can be established that no safe return paths exist for a particular safe island, the agent can label the island as a True Safe Island and add it to its safe set.

Thus, in theory, the above fairly exhaustive algorithm can address the more liberal definition of $\Zsafe$; however, in many practical settings, it may be expensive to fully determine the safety of every state-action pair. Hence, we choose to ignore this situation by enforcing that $\Zsafe$ is returnable. With this framework, our algorithm can grow the safe set by establishing return paths from the edges of the safe set (as against having to also look for safe islands and establish safety of all their possible return paths).

\subsection{MDP metrics and our analogy function}
\label{sec:metrics}
State-action similarities have been used outside of the safety literature in order to improve computation time of planning and sample complexity of exploration.
Bisimulation seeks to aggregate states into groupings of states that have similar dynamics or similar Q-values \citep{givan2003equivalence, taylor2009bounding, abel2017near, kakade2003exploration}.
These state aggregations allow for more efficient planning and exploration \citep{kakade2003exploration}.
Other work has used pseudo-counts to learn approximate state aggregations \citep{taiga2018approximate}.
The reason we do not use state-aggregation methods for transferring dynamics knowledge is that we want to include environments where similarities cannot easily partition the state space, such as situations where the similarity between two states is proportional to their distance.

\section{Proof Outline}
\label{sec:proof_outline}

The following subsections describe the overall techniques and intuition, and serve as a rough sketch of the proof of Theorem~\ref{thm:pac_and_safe}.

\subsection{Establishing Safety} \label{sec:hatZsafe}

We now highlight the key algorithmic aspects which ensure provably safe learning, in other words, that (w.h.p) the agent always experiences only non-negative rewards. Recall that our agent maintains a safe set $\hatZsafe$, and in order to add new state-action pairs to $\hatZsafe$ while ensuring that $\hatZsafe$ is closed, we must be able to determine a ``safe return policy'' to $\hatZsafe$.
However, doing this in a setting with unknown stochastic dynamics poses a significant challenge:  we must be able to find a return policy where, for every state-action pair in the return path, we know the exact support of its next state; furthermore, all these state-action pairs should return to $\hatZsafe$ with probability $1$.  Below, we lay out the key aspects of our approach to tackling this.



\textbf{``Transfer'' of confidence intervals.} As a first step, we start by establishing confidence intervals on the transition distributions of all state-action pairs as described below. 
Let $\hat{T}$ denote the empirical transition probabilities.
Just as in \citet{strehl2008analysis}, we can compute $L1$ confidence intervals of these estimates using the Hoeffding bound (details in Appendix~\ref{sec:ci}).
Let $\hat{\epsilon}_T(s,a)$ denote the $L1$ confidence interval for the empirical transition probability $\hat{T}(s,a)$.
Using the provided distance and analogy function $\Delta$ and $\alpha$, and using simple triangle inequalities, we can then derive tighter confidence intervals $\del{\epsilon}_T$ (centered around an estimated $\delT$) as in Alg~\ref{alg:tighter_ci}.
The idea here is to ``transfer'' the confidence interval from a sufficiently similar, well-explored state-action pair to an under-explored state-action pair, using analogies.

\textbf{Learning the next-state support.}
Crucially, we can use these transferred confidence intervals to infer the support of state-action pairs we have not experienced. More concretely, in Lemma~\ref{lem:del-support} we show that, when a confidence interval is sufficiently tight, specifically when $\del{\epsilon}(s,a) \leq \tau/2$ for some $(s,a)$ (where $\tau$ is the smallest non-zero transition probability defined in Assumption~\ref{as:negligible_transitions}), we can exactly recover the support of the next state distribution of $(s,a)$.
This fact is then exploited by Algorithm~\ref{alg:compute_safe_set} to expand the safe set whenever the confidence intervals are updated.

\textbf{Correctness of $\hatZsafe$.}
To expand $\hatZsafe$ while ensuring that it is safe and communicating, Algorithm~\ref{alg:compute_safe_set} first creates a candidate set, $\Zcandidate$, of all state-action pairs $(s, a)$ with sufficiently tight confidence intervals and non-negative rewards (and so, we know their next state supports).
The algorithm then executes three (inner) loops each of which prunes this candidate set. To ensure communicatingness, the first loop
eliminates candidates that have no probability of reachability from $\hatZsafe$, and the second loop eliminates those from which there is no probability of return to $\hatZsafe$. In order to ensure closedness, the third loop eliminates those that potentially lead us outside of $\hatZsafe$ or the remaining candidates.  We repeat these three loops until convergence. We prove in Lemmas~\ref{lem:hatZsafe_communicating} and \ref{lem:hatZsafe_safe} that Algorithm~\ref{alg:compute_safe_set} correctly maintains the safety and communicatingness of $\hatZsafe$ and in Lemma~\ref{lem:compute_safeset_terminates} that the algorithm terminates in polynomial time. Note that in Theorem~\ref{thm:pac_and_safe}, we prove that (w.h.p.) our agent always picks actions only from $\hatZsafe$.


\textbf{Completeness of $\hatZsafe$.}
While the above aspects ensure correctness of Algorithm~\ref{alg:compute_safe_set}, these would be satisfied even by a trivial algorithm that always only returns $Z_0$. Hence, it is important to establish that for given set of confidence intervals, $\hatZsafe$ is ``as large as it can be''. More concretely, consider any state on the edge of $\hatZsafe$ for which there exists a return policy to $\hatZsafe$ which passes only through (non-negative reward) state-action pairs with confidence intervals at most $\tau/2$; this means that we know all possible trajectories in this policy, and all of these lead to $\hatZsafe$. In such a case, we show in Lemma~\ref{lem:safe_set_large_enough} that Algorithm~\ref{alg:compute_safe_set} does indeed add this edge action {\em and all of the actions in every possible return trajectory} to $\hatZsafe$.

\subsection{Guided Exploration}
\label{app:guided-exploration}

To be able to add a state-action pair to our conservative estimate of the safe-set, $\hatZsafe$, we not only need to tighten the confidence intervals of that state-action pair but also that of every state-action in all its return trajectories to $\hatZsafe$. Observe that this can be accomplished by exploring state-action pairs inside $\hatZsafe$ that are similar to this return path, and using analogies to transfer their confidence intervals. However, this raises two main algorithmic challenges.

\textbf{Selecting unexplored actions for establishing safety.}
First, for which unexplored state-action pairs outside $\hatZsafe$ do we want to establish safety? Instead of expanding $\hatZsafe$ arbitrarily, we will keep in mind the objective mentioned in our outline of ASE: we want to expand $\hatZsafe$ so that we can get to a stage where every possible trajectory when following the optimistic goal policy, $\optpigoal$, is guaranteed to be safe, allowing the agent to safely follow $\optpigoal$.
By letting $\optZgoal$ denote the set of all state-action pairs on any path following $\optpigoal$ from the initial state $\sinit$, this condition can be equivalently stated as $\optZgoal \subset \hatZsafe$.

So, to carefully select such unexplored state-action pairs, ASE calls Algorithm~\ref{alg:z_goal}, which is an iterative procedure: in each iteration, it first (re)computes the optimistic goal policy $\optpigoal$ and the set $\optZgoal$. Using this, it then creates a set $\Zedge$, which is the intersection of $\optZgoal$ and the set of all edge state-action pairs of $\hatZsafe$. We then hope to establish safety of $\Zedge$, so that, intuitively, we can {\em expand the frontier of our safe set only along the direction of the optimistic path}. To this end, Algorithm~\ref{alg:z_goal} calls Algorithm~\ref{alg:z_explore} to compute a corresponding $\Zexplore \subset \hatZsafe$ to explore (we will describe Algorithm~\ref{alg:z_explore} shortly).

Now, in the case $\Zexplore$ is non-empty, Algorithm~\ref{alg:z_goal} returns control back to ASE, for it to pursue $\optpiexplore$ -- and Lemma~\ref{lem:escape} shows that $\optpiexplore$ indeed explores $\Zexplore$ in poly-time. But if $\Zexplore$ is empty, Algorithm~\ref{alg:z_goal} adds all of $\Zedge$ to $\hatZunsafe$; in the next iteration, $\optpigoal$ is updated to ignore $\hatZunsafe$.
In Lemma~\ref{lem:correctness_z_explore}, we use Assumption~\ref{as:sim}, to prove that the elements added to $\hatZunsafe$ are indeed elements that do not belong to $\Zsafe$ (and so we can confidently ignore $\hatZunsafe$ while computing $\optpigoal$).
In Lemma~\ref{lem:correctness_z_goal}, we show that this iterative approach terminates in poly-time and either returns a non-empty $\Zexplore$ that can be explored by $\optpiexplore$, or updates $\optpigoal$ in a way that $\optZgoal \subset \hatZsafe$.
In the case that 
 $\optZgoal \subset \hatZsafe$, using Lemma~\ref{lem:escape}, we show that the agent first takes $\optpiswitch$ to enter into $\optZgoal$ in finite time, so that the agent can pursue $\optpigoal$.




\textbf{Selecting safe actions for exploration.}
To establish safety of an unexplored $(s,a) \in \Zedge$, we must explore state-action pairs from $\hatZsafe$ that are similar to state-actions along an {\em unknown} return policy from $(s,a)$ in order to learn that {\em unknown} policy. While such a policy does exist if $(s, a) \in \Zsafe$ (according to Assumption~\ref{as:sim}), the challenge is to resolve this circularity, without exploring $\hatZsafe$ exhaustively.



Instead, Algorithm~\ref{alg:z_explore} uses a breadth-first-search (BFS) from $(s,a)$ which essentially enumerates a superset of trajectories that contains the true return trajectories. Specifically, it first enumerates a list of state-action pairs that are a 1-hop distance away and if any of them have a loose confidence interval, it adds  to $\Zexplore$ a corresponding similar state-action pair from $\hatZsafe$ (if any exist). If $\Zexplore$ is empty at this point, Algorithm~\ref{alg:z_explore} repeats this process for 2-hop distance, 3-hop distance and so on, until either $\Zexplore$ is non-empty or the BFS tree cannot be grown any further.  Lemma~\ref{lem:correctness_z_explore} argues that this procedure does populate $\Zexplore$ with all the state-action pairs necessary to establish the required return paths;  Lemma~\ref{lem:z_explore_terminates} demonstrates its polynomial run-time. Although we cannot guarantee that this method does not explore all of $\hatZsafe$, we do see this empirically, as we show in our experiments (see Section~\ref{sec:experiments}).

\section{Proofs}
\label{sec:proof}

This section details our proof of Theorem~\ref{thm:pac_and_safe}, that ASE is guaranteed to be safe with high probability and is optimal in the PAC-MDP sense.
We start by restating Theorem~\ref{thm:pac_and_safe} and proving it.
We then examine proofs for the correctness and polynomial computation time of Algorithm~\ref{alg:compute_safe_set}.
Specifically, we show that the computed $\hatZsafe$ is closed and communicating.
Next we show that Algorithms~\ref{alg:z_explore} and \ref{alg:z_goal} correctly compute the desired $\Zexplore$ and an estimate of $\optZgoal$ in polynomial time.
The following section shows that the computed estimate of $\optZgoal$ is in fact correct, under certain conditions.
Subsection \ref{sec:proofs:policies} provides the key lemmas for proving PAC-MDP, namely that that our agent, following $\optpigoal$, $\optpiexplore$, or $\optpiswitch$ either performs the desired behavior (acting optimally or reaching certain state-action pairs) or learns something new about the transition function.
By bounding the number of times our agent learns something new, we can show that the agent follows the optimal after a polynomial number of steps.
The final subsection provides and proves additional supporting lemmas.

\pacmdp*

  
   
   

\begin{proof}
\item 
\subparagraph{Proof of admissibility.}
We first establish the probability with which our confidence intervals remain admissible throughout the entire execution of the algorithm.
Note that we only calculate each confidence interval $m$ times for every state-action pair. 
Thus, by the union bound and our choice of $\delta_T$, the confidence intervals defined by $\hat{T}$ and $\hat{\epsilon}_T$ hold with probability $1 - \delta/2$. Then, by the triangle inequality, even the tighter confidence intervals computed by Algorithm~\ref{alg:tighter_ci}  -- defined by $\del{T}$ and $\del{\epsilon}_{T}$ -- are admissible. 

\subparagraph{Proof of safety.}
Next we will show that, given that the confidence intervals are admissible, the algorithm never takes a state-action pair outside of $\Zsafe$.
Corollary~\ref{cor:hatZsafe_subset}, Lemma~\ref{lem:hatZsafe_safe} and \ref{lem:hatZsafe_communicating} together show that $\hatZsafe$ is a safe (which also implies, closed), communicating subset of $\Zsafe$. 
Using this, we will inductively show that the agent is always safe under our algorithm. 
Specifically, assume that at any time instant, starting from $s_0$, the agent has so far only taken actions from $\hatZsafe$. Since $\hatZsafe$ is closed and safe, this means that the agent has so far been safe, and is currently at $s_t \in \hatZsafe$. We must establish that even now the agent takes an action $a_t$ such that $(s_t,a_t) \in \hatZsafe$.

Now, at each step, recall that according to Algorithm~\ref{alg:ase}, the agent follows either $\optpiexplore$, $\optpiswitch$, or $\optpigoal$. Consider the case when the agent follows either $\optpiexplore$ or $\optpiswitch$. 

In this case, 
for all $(s,a) \notin \hatZsafe$ the rewards are set to be $-\infty$, and for all $(s,a) \in \hatZsafe$ the rewards are set to be non-negative. To address both $\optpiexplore$ and $\optpiswitch$ together, let the assigned rewards  be $R^\dagger$.

Now, for any $t \geq 0$, recall that $\opt{Q}^\dagger(s,a,t)$ denotes the estimate of $\opt{Q}^\dagger(s,a)$ after $t$ iterations of dynamic programming (not to be confused with the finite-horizon value of the optimistic policy).
We first claim that, for all $(s,a) \in \hatZsafe$ and for all iterations $t \geq 1$, the resulting optimistic Q-values are such that $\opt{Q}^\dagger(s,a,t) \geq 0$ if $(s,a) \in \hatZsafe$ and $\opt{Q}^\dagger(s,a,t) = -\infty$ otherwise. 

We prove this claim by induction on $t$, assuming that $\opt{Q}^\dagger(s,a,t)$ is initialized to some non-negative value for $t=0$.
For $t=1$, our claim is satisfied because the Q-values equal to the sum of the reward function and some positive quantity. 

Consider any $t > 1$ and $(s,a) \in \hatZsafe$. We know from Equation~\ref{eq:q-values}
that the Q-value for this horizon can be decomposed into a sum of the reward  and the maximum Q-value of the next states (with a positive, multiplicative discount factor).
For $(s,a) \in \hatZsafe$, in Equation~\ref{eq:q-values}, we will have that the first term, which is the reward function, is non-negative. The second term is an expectation over the maximum $Q$-values (for a horizon of $t-1$), where the expectation corresponds to the probability distribution of $\opt{T}^\dagger(s,a,\cdot)$ over the next states.
Since $(s,a)\in \hatZsafe$ and since $\opt{T}^\dagger$ is a candidate transition function, by Corollary~\ref{cor:closed_under_candidate}, all the next states according to this transition function, belong to $\hatZsafe$.
Now, for any $s' \in \hatZsafe$, there exists $a'$ such that $(s',a') \in \hatZsafe$. By induction, we have $\opt{Q}^\dagger(s', a', t-1) \geq 0$, and hence $\max_{a''} \opt{Q}^\dagger(s', a'', t-1) \geq 0$.
Hence, even the second term in the expansion of $\opt{Q}^\dagger(s,a,t)$ is non-negative, implying that $\opt{Q}^\dagger(s,a,t) \geq 0$. Now, for any $(s,a) \notin \hatZsafe$, it follows trivially from Equation~\ref{eq:q-values} that $\opt{Q}^\dagger(s,a,t) = -\infty$ since the reward is set to be $-\infty$.

Thus, for any state $s \in \hatZsafe$, we have established that there exists $a$ such that $(s,a) \in \hatZsafe$ and $\opt{Q}^\dagger(s,a) \geq 0$ . Furthermore, for any action $a'$ such that $(s,a') \notin \hatZsafe$, $\opt{Q}^\dagger(s,a') = -\infty$. Therefore, $\optpi^\dagger(s)$ must be an action $a$ such that $(s,a) \in \hatZsafe$. This proves that $\optpi^\dagger \in \Pi(\hatZsafe)$. In other words, this means that at $s \in \hatZsafe$ the agent takes an action $a$ such that $(s,a) \in \hatZsafe$. This completes our argument for $\optpiexplore$ and $\optpiswitch$.

As the final case, consider a time instant when the agent follows $\pigoal$. By design of Algorithm~\ref{alg:ase}, we know that this happens only if $\optZgoal \subset \hatZsafe$ and $s \in \optZgoal$. Now, by definition of $\optZgoal$,  since $s \in \optZgoal$, we know that $(s, \optpigoal(s)) \in \optZgoal$. Then, since $\optZgoal \subset \hatZsafe$, $(s,\optpigoal(s)) \in \hatZsafe$, implying that the algorithm picks only safe actions, even when it follows $\optpigoal$.

\subparagraph{Proof of PAC-MDP.}

Now we will show that ASE is PAC-MDP.
To do this, we will show that at any step of the algorithm, assuming our confidence intervals are admissible, the agent will either act $\epsilon$-optimally or reach a state outside of the known set $K = \{(s, a) \in S \times A : n(s, a) \geq m \}$ in some polynomial number of steps with some positive polynomial probability.
To prove this, recall the agent follows either $\optpiexplore$, $\optpiswitch$, or $\optpigoal$ in three mutually exclusive cases; let us examine each of these three cases. 

\subparagraph{Case 1: $\optZgoal \not \subset \hatZsafe$. }
If $\optZgoal \not \subset \hatZsafe$, then the agent follows $\optpiexplore$. In this case, we will show that the agent will experience a state-action pair from $K^c$ (the complement of $K$) in the first $H_1$ steps, where $H_1 = O(H^2/c)$.

First note that the condition $\optZgoal \not \subset \hatZsafe$ can only change if $\optZgoal$ or $\hatZsafe$ are modified, which can only happen if the agent experiences a state action pair outside $K$; so, if before the agent takes its $H_1$th step, this condition changes, we know that the agent has experienced a state-action pair outside $K$, and hence, we are done. 

Consider the case when the agent does not experience any element of $K^c$ in the first $H_1-1$ steps;  hence the agent follows a fixed $\optpiexplore$ for these steps. By Lemma \ref{lem:correctness_z_goal}, since $\optZgoal \not \subset \hatZsafe$, there must exist some element in $\Zexplore$ (where $\Zexplore \subset \hatZsafe$ by design of Algorithm~\ref{alg:z_explore}). Now, recall that $\optpiexplore$ is computed using rewards $\Rexplore$, which are set to $1$ on $\Zexplore$, and either $0$ or $-\infty$ otherwise, depending on whether the state-action pair is in $\hatZsafe$ or not. If we define $\Mexplore = \langle S, A, T, \Rexplore, \gamma_{\text{explore}}\rangle$, then we can invoke Lemma~\ref{lem:escape} for $\Mexplore$ to establish that the agent reaches $\Zexplore$ or $K^c$. 

To do this, we must establish that all requirements of Lemma~\ref{lem:escape} hold.
In particular, we have $\Zexplore \subset \hatZsafe$ by design of Algorithm~\ref{alg:z_explore}.
We also have $\hatZsafe$ is communicating (Lemma~\ref{lem:hatZsafe_communicating}) and closed (Lemma~\ref{lem:hatZsafe_safe}) and that $\optpiexplore \in \Pi(\hatZsafe)$ (from our proof for safety of Algorithm~\ref{alg:ase}).
Finally, since $m$ is sufficiently large,
by Lemma \ref{lem:escape}, the agent will reach $\Zexplore$ or $K^c$ in $H_1 = O(H^2/c)$ steps with probability at least $1/2$ as long as $H \geq \Hcom {\log \frac{16\Hcom}{c}}/{\log \frac{1}{c}}$.
Note that although Lemma~\ref{lem:escape} guarantees this for the behavior of the agent on $\Mexplore$, the same would apply for $M$ as well, since both these MDPs share the same transition. Finally, note that since $\Zexplore \notin K$ (by Lemma~\ref{lem:z_explore_K}), this means that the agent escapes $K$ in $H_1$ steps.

\subparagraph{Case 2: $\optZgoal \subset \hatZsafe$, $s_t \not \in \optZgoal$. }
Now consider the next mutually exclusive case where $\optZgoal \subset \hatZsafe$ but the current state $s \not \in \optZgoal$. In this case our agent will attempt to return to $\optZgoal$ by following $\optpiswitch$. In this case, we argue that, in the next $H_2 = O(H^2/c)$ steps, the agent either does reach $\optZgoal$ or experiences a state-action pair in $K^c$. To see why, note that the current condition can only change if $\optZgoal$ or $\hatZsafe$ change or if the agent reaches a state $s \in \optZgoal$. As noted before, $\optZgoal$ or $\hatZsafe$ are modified only if the agent experiences a state action pair outside $K$; so, if before the agent takes its $H_2$th step, this condition changes, we know that the agent has either
experienced a state-action pair outside $K$ or has reached $\optZgoal$, and hence, we are done. 

Consider the case when the agent does not experience any element of $K^c$ in the first $H_2-1$ steps, and hence follows a fixed $\optpiswitch$ for these steps. Since $\optZgoal \ne \emptyset$ (which is trivially true since $s_0 \in \optZgoal$ always), using the same reasoning as the previous case, we can again use Lemma \ref{lem:escape} to show that the agent will reach a state-action pair in $\optZgoal$ or outside of $K$ in $H_2 = O(H^2/c)$ steps with probability at least $1/2$, since $m$ is sufficiently large and $H \geq \Hcom {\log \frac{16\Hcom}{c}}/{\log \frac{1}{c}}$.

\subparagraph{Case 3: $\optZgoal \subset \hatZsafe$, $s_t \in \optZgoal$. }
Finally, we consider the last case where $\optZgoal \subset \hatZsafe$ and the current state $s \in \optZgoal$. In this case, we argue that the agent either takes an action that is near-optimal, or in the next $H$ steps, it reaches $K^c$ with sufficiently large probability. 



Let $\Pr(A_M)$ be the probability that starting at this step, the Algorithm~\ref{alg:ase} leads the agent out of $K$ in $H$ steps, conditioned on the history $p_t$. 
Now, if $\Pr(A_M) \geq \epsilon(1-\gamma)/4$, the agent will escape $K$ in $H$ steps with sufficient probability. 
Hence, consider the case when $\Pr(A_M) \leq \epsilon(1-\gamma)/4$.

Then, assuming $H \geq O\left(\frac{1}{1-\gamma} \ln \frac{1}{\epsilon (1-\gamma)}\right)$ and sufficiently large $m$, we can use the above probability bounds and Lemma \ref{lem:reduction_to_mbie} to show that in this
case 
the state $s_t$ that the agent currently is in satisfies:
\begin{equation}
    V_{M}^{\mathcal{A}}(p_t) \geq V_{\Mgoal}^*(s_t) - {\epsilon}.
    \label{eq:thm1-A}
\end{equation}

To complete our discussion of this case, we need to lower bound the right hand side in terms of the value of the safe-optimal policy $\pisafe^*$ on the true MDP $M$.
Recall that $\pisafe^*$ is a policy that maximizes $V_{M}^{\pisafe^\star}$ subject to the constraint that $\pisafe^* \in \Pi(\Zsafe)$.
Now, consider an MDP $\Mgoal^*$ with the same transitions as $M$.
However it has rewards $\Rgoal^*$ such that for all $(s,a) \notin \Zsafe$, $\Rgoal^*(s,a) = -\infty$ and everywhere else $\Rgoal^*(s,a) = R(s,a)$.  Now, since $\pisafe^* \in \Pi(\Zsafe)$, from Fact~\ref{fact:closed_policy}, we have that following $\pisafe^*$ from any $s \in \Zsafe$, the agent would never exit $\Zsafe$. Hence, for any $s \in \Zsafe$, $V_{\Mgoal^*}^{\pisafe^*}(s) = V_{M}^{\pisafe^*}(s)$. Note that this equality applies to the current state $s_t$ since it is inside $\optZgoal$, and we know that $\optZgoal \subset \hatZsafe$ and $\hatZsafe \subset \Zsafe$ (by Corollary~\ref{cor:hatZsafe_subset}).

Now, let us compare $\Mgoal^*$ and $\Mgoal$. Recall that $\Mgoal$ has its rewards set to $-\infty$ only on $\hatZunsafe$ (and everywhere else, it equals $R(s,a)$). By Lemma~\ref{lem:correctness_z_explore}, we know that $\hatZunsafe \cap \Zsafe = \emptyset$, and therefore $\hatZunsafe \subset \Zsafe^c$. Thus, both $\Mgoal$ and $\Mgoal^\star$ have the same rewards as $M$, except $\Mgoal$ has the rewards set to $-\infty$ on a set $\hatZunsafe$, while $\Mgoal^\star$ has rewards set to $-\infty$ on a superset of $\hatZunsafe$, $\Zsafe^c$. In other words, the rewards of $\Mgoal$ are greater than or equal to the rewards of $\Mgoal^*$. Thus, the value of the optimal policy on $\Mgoal$ cannot be less than that of $\Mgoal^*$. Formally, for all $s$, $V^*_{\Mgoal}(s) \geq V_{\Mgoal^*}^{\pisafe^*}(s)$. Since we also have $V_{\Mgoal^*}^{\pisafe^*}(s_t) = V_{M}^{\pisafe^*}(s_t)$, we get:
\begin{equation}
    V_{\Mgoal}^*(s_t) \geq V_{M}^{\pisafe^*}(s_t).
        \label{eq:thm1-B}
\end{equation}

Thus, from Equations~\ref{eq:thm1-A} and ~\ref{eq:thm1-B}, we get that at this state, $    V_{\Mgoal}^{\mathcal{A}}(p_t) \geq V_{M}^{\pisafe^*}(s_t) - \epsilon.$\\


In summary, the agent does at least one of the following at any timestep:\\

\begin{enumerate}
    \item reach $K^c$ in $H_1$ steps (starting from Case 1) with probability at least $1/2$
    \item reach $K^c$ in $H_2$ steps (starting from Case 2),  with probability at least $1/4$
    \item reach $K^c$ in $H$ steps (starting from Case 3) with probability at least $\Omega(\epsilon ( 1 - \gamma))$.
    \item reach $\optZgoal$ in $H_2$ steps (starting from Case 2) with probability at least $1/4$.
    \item take a nearly-optimal action (in Case 3).\\
\end{enumerate}

Note that in the above list, we have slightly modified the guarantee from Case 2. In particular,
Case 2 guaranteed that with a probability of $1/2$ the agent would reach either $K^c$ or $\optZgoal$; from this we have concluded that at least one of these events would have a probability of at least $1/4$. 

We first upper bound the number of sub-optimal steps corresponding to the first three events. Observe that the agent can only experience a state-action pair outside of $K$ a total of $m |S| |A|$ times.
Now, by the Hoeffding bound, we have that 
it takes at most  $O \left (m |S| |A|  \frac{1}{ \epsilon ( 1 - \gamma) } \ln \frac{1}{\delta} \right )$ independent trials to  see $m |S| |A|$ heads in a coin that has a probability of at least $\Omega(\min\left(1/4, \epsilon(1-\gamma)\right))=\Omega(\epsilon(1-\gamma))$ as turning out to be  heads. Hence, these trajectories would correspond to at most  $O \left (\max(H_1, H_2,H) \cdot m |S| |A| \cdot  \frac{1}{ \epsilon ( 1 - \gamma) } \ln \frac{1}{\delta} \right )$ many sub-optimal steps.

Next, we upper bound the number of sub-optimal steps corresponding to the fourth event. Consider two successful occurrences of this event. That is, in both instances,  the agent did indeed reach $\optZgoal$. In such a case, there must be at least one time instant between these two occurrences when the agent experienced $K^c$; if not, after the first occurrence, by design of Algorithm~\ref{alg:ase}, the agent would have remained in $\optZgoal$, thereby precluding the second occurrence of the event from happening. Thus, there can be at most as many successful occurrences of this event as $m|S| |A| +1$. Again, by a Hoeffding bound, this corresponds to $O \left (H_2 \times m |S| |A|  \frac{1}{ \epsilon ( 1 - \gamma) } \ln \frac{1}{\delta} \right )$ many sub-optimal steps, with probability $1 - \delta/2$. 

Combining the above two bounds and plugging in the bound on $m$ gives our final bound on the number of sub-optimal steps.

\end{proof}

\subsection{Proofs about Algorithm~\ref{alg:compute_safe_set}}
\label{sec:proof:compute_safe_set}
Recall that Algorithm~\ref{alg:compute_safe_set} expands the set of safe state-action pairs $\hatZsafe$ (by making use of an updated set of confidence intervals) by first creating a candidate set and then iteratively pruning the set until it stops changing.


First, in Lemma~\ref{lem:compute_safeset_terminates}, we establish this procedure terminates in polynomial time. 
In the lemmas that follow after that, we prove soundness and completeness. In particular, in Lemma~\ref{lem:hatZsafe_safe}, we establish that the updated $\hatZsafe$ is indeed safe. In Lemma~\ref{lem:hatZsafe_communicating}, we establish that $\hatZsafe$ is communicating and as a result of which in Corollary~\ref{cor:hatZsafe_subset}, we establish that $\hatZsafe \subset \Zsafe$. Finally, in Lemma~\ref{lem:safe_set_large_enough} we prove completeness in that, if there exists a state-action pair on the edge of $\hatZsafe$, and if there exists a return path from that edge that only takes state-action pairs whose confidence intervals are sufficiently small, then that state-action pair and all of that return path is added to $\hatZsafe$.



\begin{lemma}
  \label{lem:compute_safeset_terminates}
  Algorithm~\ref{alg:compute_safe_set} terminates in $poly(|S|, |A|)$ time.
\end{lemma}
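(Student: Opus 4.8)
The plan is to bound separately the number of passes of the three inner \texttt{while} loops and the number of iterations of the single outer \texttt{while} loop, and then argue that each individual pass/iteration does only polynomial work.

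First I would dispatch the three inner loops. Each repeatedly modifies exactly one set ($\Zreachable$, then $\Zreturnable$, then $\Zclosed$) and continues only as long as that set changed on the previous pass. The reachability and returnability loops are \emph{monotone growing} (elements are only added), while the closedness loop is \emph{monotone shrinking} (elements are only removed). In every case the set lives inside $S \times A$, so its cardinality is bounded by $|S||A|$; since any pass that does not terminate the loop must change the cardinality by at least one, each inner loop executes at most $|S||A| + 1$ passes. A single pass scans at most $|S||A|$ state-action pairs and, for each, inspects its candidate next states (checking $\del{T}(s,a,s') > 0$ and membership in the relevant set), so a pass costs $\text{poly}(|S|,|A|)$ and hence so does each inner loop.

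The crux is bounding the outer loop. Here I would establish the subset chain $\Zclosed \subseteq \Zreturnable \subseteq \Zreachable \subseteq \Zcandidate$ that holds at the end of every outer iteration: the reachability loop only adds elements already in $\Zcandidate$, so $\Zreachable \subseteq \Zcandidate$; the returnability loop draws its members from $\Zreachable$, so $\Zreturnable \subseteq \Zreachable$; and $\Zclosed$ is initialized to $\Zreturnable$ and then only pruned, so $\Zclosed \subseteq \Zreturnable$. Because the outer loop reassigns $\Zcandidate \gets \Zclosed$ at the end of each iteration, $\Zcandidate$ is monotonically non-increasing across outer iterations. The guard halts exactly when $\Zclosed = \Zcandidate$, i.e.\ at the first iteration that removes nothing; until then $\Zcandidate$ strictly shrinks. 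Hence there are at most $|S||A| + 1$ outer iterations.

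Multiplying the outer-iteration count by the per-iteration cost (the initial construction of $\Zcandidate$, plus the three inner loops, each polynomial as above) yields a total running time of $\text{poly}(|S|,|A|)$, as claimed. The main obstacle is getting the outer-loop bound right: one must verify the subset chain carefully so that the reassignment $\Zcandidate \gets \Zclosed$ is genuinely a (weak) contraction, and confirm that the termination condition fires precisely at the fixpoint rather than permitting the candidate set to oscillate. The monotone-shrinking property of $\Zcandidate$ across outer iterations is exactly what rules out any such oscillation.
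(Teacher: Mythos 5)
Your proof is correct and follows essentially the same route as the paper's: bound the outer loop by the monotone shrinkage of $\Zcandidate$ (terminating at the first fixpoint $\Zcandidate = \Zclosed$), and bound each inner loop by the monotone growth of $\Zreachable$ and $\Zreturnable$ or the monotone shrinkage of $\Zclosed$, with each pass costing polynomial work. Your explicit verification of the subset chain $\Zclosed \subseteq \Zreturnable \subseteq \Zreachable \subseteq \Zcandidate$ is a nice touch that the paper leaves implicit when asserting that $\Zcandidate$ can only lose elements across outer iterations.
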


\begin{proof}
The algorithm begins with a set $\Zcandidate$ with $|S| \cdot |A|$  many elements. Now in each outer iteration, $\Zcandidate$ either ends up losing some elements, or remains the same. If it does remain the same, then we break out of the outer loop. Thus, there can be at most $\mathcal{O}(|S| \cdot |A|)$ many iterations of the outer loop. 

Next, consider the first inner while block. In each iteration of the while loop, we either add an element to $\Zreachable$ or break out of the loop if $\Zreachable$ does not change. Thus, there can be at most $\mathcal{O}(|S| \cdot |A|)$ many iterations of this while loop. As for the time complexity of the inner while loops, since $\hatZsafe \cup \Zreachable$ is a finite set, it is easy to see that these iterations terminate in polynomial time. A similar argument holds for the next while block too. For the third while block, we must apply a slightly different version of this argument where we make use of the fact that in each run of the while loop, we either remove a state-action pair from $\Zclosed$ or break out of the loop. From the above arguments, it follows that the algorithm terminates in polynomial time.
\end{proof}

\textbf{Note:} In the following discussions, unless otherwise specified, $\Zclosed$ denotes the set as it is in the last step of Algorithm \ref{alg:compute_safe_set}.

Before we prove our other lemmas about Algorithm~\ref{alg:compute_safe_set}, we first establish a result about Algorithm~\ref{alg:tighter_ci} that computes the tighter confidence intervals. Specifically we show that these confidence intervals are computed in a way that if a particular interval is sufficiently tight, then every candidate transition probability in that interval has the same support of next state-action pairs as the true support.

\begin{lemma}
\label{lem:del-support}
  Assume the confidence intervals are admissible. Then, for any $(s,a)$ such that $\del{\epsilon}_T(s,a) < \tau/2$, for all $s' \in S$, $\del{T}(s,a,s') > 0$ if and only if $T(s,a,s') > 0$. 
\end{lemma}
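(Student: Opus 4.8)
The plan is to prove the two implications of the biconditional separately, because they draw on different facts. Write $(\tilde{s},\tilde{a})$ for the analogous source that Algorithm~\ref{alg:tighter_ci} assigns to $(s,a)$, so that $\del{T}(s,a,s') = \hat{T}(\tilde{s},\tilde{a},\tilde{s}')$ with $\tilde{s}' = \alpha(s,a,s',\tilde{s},\tilde{a})$. By construction the source is chosen so that the analogy distance enters the transferred width; in particular $\Delta((s,a),(\tilde{s},\tilde{a})) \le \del{\epsilon}_T(s,a) < \tau/2$ and likewise $\hat{\epsilon}_T(\tilde{s},\tilde{a}) \le \del{\epsilon}_T(s,a) < \tau/2$. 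Admissibility then gives the coordinate-wise bound $|T(s,a,s') - \del{T}(s,a,s')| \le \|T(s,a)-\del{T}(s,a)\|_1 \le \del{\epsilon}_T(s,a) < \tau/2$ for every $s'$, and similarly at the source.

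The easy direction is $T(s,a,s')>0 \Rightarrow \del{T}(s,a,s')>0$. First I would apply Assumption~\ref{as:negligible_transitions} to upgrade $T(s,a,s')>0$ to $T(s,a,s') \ge \tau$; the coordinate-wise admissibility bound then forces $\del{T}(s,a,s') \ge T(s,a,s') - \del{\epsilon}_T(s,a) > \tau - \tau/2 = \tau/2 > 0$. This uses nothing beyond the assumed admissibility of $\del{T}$ and the single transition gap $\tau$.

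The converse, $T(s,a,s')=0 \Rightarrow \del{T}(s,a,s')=0$, is where I expect the real work, since admissibility alone is too weak: it only bounds $\del{T}(s,a,s')$ by $\del{\epsilon}_T(s,a)<\tau/2$, not by $0$, and a transferred distribution could in principle carry a tiny sub-$\tau/2$ mass on a coordinate that is truly zero. To close this gap I would argue by contradiction using the provenance of $\del{T}$. Assume $T(s,a,s')=0$ but $\del{T}(s,a,s')>0$. Then $\hat{T}(\tilde{s},\tilde{a},\tilde{s}')>0$, so a transition $(\tilde{s},\tilde{a}) \to \tilde{s}'$ was actually observed, which is possible only if $T(\tilde{s},\tilde{a},\tilde{s}')>0$; Assumption~\ref{as:negligible_transitions} then yields $T(\tilde{s},\tilde{a},\tilde{s}') \ge \tau$. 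On the other hand, specializing the defining inequality of the analogy to the single next state $s'$ gives $|T(s,a,s') - T(\tilde{s},\tilde{a},\tilde{s}')| \le \Delta((s,a),(\tilde{s},\tilde{a}))$, and since $T(s,a,s')=0$ this forces $T(\tilde{s},\tilde{a},\tilde{s}') \le \Delta((s,a),(\tilde{s},\tilde{a})) < \tau/2 < \tau$, contradicting $T(\tilde{s},\tilde{a},\tilde{s}') \ge \tau$. Hence $\del{T}(s,a,s')=0$. (Alternatively, the same contradiction follows from admissibility at the source: $\hat{T}(\tilde{s},\tilde{a},\tilde{s}') \ge T(\tilde{s},\tilde{a},\tilde{s}') - \hat{\epsilon}_T(\tilde{s},\tilde{a}) > \tau - \tau/2 = \tau/2$, so $\del{T}(s,a,s') > \tau/2$, which contradicts $\del{T}(s,a,s') \le \del{\epsilon}_T(s,a) < \tau/2$.)

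The one step to state with care is the bound $\Delta((s,a),(\tilde{s},\tilde{a})) < \tau$ (equivalently $\hat{\epsilon}_T(\tilde{s},\tilde{a}) < \tau/2$), which I would justify from the cost minimized in Algorithm~\ref{alg:tighter_ci} -- the analogy distance being one summand of the transferred width $\del{\epsilon}_T(s,a)$ -- together with the hypothesis $\del{\epsilon}_T(s,a)<\tau/2$; the degenerate case in which the source is $(s,a)$ itself is subsumed, as then $\Delta=0$ and $\del{T}(s,a,\cdot)=\hat{T}(s,a,\cdot)$. Everything else is a routine combination of the triangle inequality, the single known gap $\tau$ from Assumption~\ref{as:negligible_transitions}, and the observation that any observed transition must have positive true probability.
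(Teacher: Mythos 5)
Your proof is correct and takes essentially the same route as the paper's: both directions rest on the analogy inequality specialized to a single next state, the transition gap $\tau$ from Assumption~\ref{as:negligible_transitions}, and the provenance of the empirical estimate at the analogous source pair (an observed transition forces $T(\tilde{s},\tilde{a},\tilde{s}')>0$, hence $\geq\tau$; admissibility bounds the deviation in the positive case). The only cosmetic difference is that for the direction $T(s,a,s')>0 \Rightarrow \del{T}(s,a,s')>0$ you invoke coordinate-wise admissibility of the transferred interval directly, whereas the paper routes explicitly through the source (first $T(\tilde{s},\tilde{a},\tilde{s}')\geq\tau$, then $\hat{T}(\tilde{s},\tilde{a},\tilde{s}')>\tau/2$); these are the same triangle inequality, merely packaged differently.
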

\begin{proof}
First consider the case when $T(s,a,s') = 0$; we will show that $\del{T}(s,a,s') = 0$. 
To see why, consider the $(\tilde{s}, \tilde{a})$ that contributed to this confidence interval as computed in the step of Algorithm~\ref{alg:tighter_ci}.
Since $\del{\epsilon}_T(s,a) < \tau/2$, from Algorithm~\ref{alg:tighter_ci}, we have that $\Delta((s,a), (\tilde{s}, \tilde{a})) \leq \tau/2$.
This implies that if $\tilde{s}' := \alpha((s,a,s'),(\tilde{s},\tilde{a}))$, then $|T(\tilde{s}, \tilde{a}, \tilde{s}') - T(s,a,s')| \leq \tau/2$.
Since we assumed $T(s,a,s') = 0$, we have that $|T(\tilde{s},\tilde{a},\tilde{s}')| \leq \tau/2$. However, by Assumption~\ref{as:negligible_transitions}, we have that the range of $T$ lies in $[0] \cup [\tau,1]$, therefore, to satisfy the above inequality, we must have that $T(\tilde{s},\tilde{a},\tilde{s}') = 0$. This would imply that the empirical probability $\hat{T}(\tilde{s},\tilde{a},\tilde{s}')$ too is zero, which then is assigned to $\delT(s,a,s')$ in Algorithm~\ref{alg:tighter_ci}. Thus,  $\delT(s,a,s')=0$.

Now consider the case when $T(s,a,s') > 0$. We will show that $\delT(s,a,s') > 0$.
First, since $T(s,a,s') > 0$, by Assumption~\ref{as:negligible_transitions}, it means that $T(s,a,s') \geq \tau$. As argued in the previous case, since $|T(\tilde{s}, \tilde{a}, \tilde{s}') - T(s,a,s')| \leq \tau/2$, we will also have $T(\tilde{s}, \tilde{a}, \tilde{s}') > \tau/2$. Again, by Assumption~\ref{as:negligible_transitions}, this would imply $T(\tilde{s}, \tilde{a}, \tilde{s}') \geq \tau$. Note that, from Algorithm~\ref{alg:tighter_ci}, we have that, since $\del{\epsilon}_T(s,a) < \tau/2$, $\hat{\epsilon}_T(\tilde{s},\tilde{a}) < \tau/2$. And since confidence intervals are admissible, this means that $|T(\tilde{s}, \tilde{a}, \tilde{s}') - \hat{T}(\tilde{s}, \tilde{a}, \tilde{s}')| \leq \tau/2$. This, together with the fact that $T(\tilde{s}, \tilde{a}, \tilde{s}') > \tau/2$, means that $\hat{T}(\tilde{s}, \tilde{a}, \tilde{s}') > \tau/2$. In Algorithm~\ref{alg:tighter_ci}, we would assign this value to $\del{T}(\tilde{s}, \tilde{a}, \tilde{s}')$, thus, resulting in $\delT(s,a,s') > 0$. 
\end{proof}

This lemma allows us to compute the support of state-action pairs we have never experienced.
Algorithm~\ref{alg:compute_safe_set} uses this idea to expand $\hatZsafe$ in a way that retains closedness, thus, safety.

\begin{lemma} \label{lem:hatZsafe_safe}
Assume our confidence intervals are admissible. Whenever Algorithm~\ref{alg:ase} calls Algorithm \ref{alg:compute_safe_set}, in the final step of Algorithm \ref{alg:compute_safe_set}, $\hatZsafe \cup \Zclosed$ is a safe set.
\end{lemma}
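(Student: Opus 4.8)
The plan is to verify the two defining properties of a safe set from Definition~\ref{def:safeset} for $\hatZsafe \cup \Zclosed$, namely that it is closed and that every pair in it has non-negative reward. I would set this up as an induction over the successive calls that Algorithm~\ref{alg:ase} makes to Algorithm~\ref{alg:compute_safe_set}. For the base case, before the first call $\hatZsafe = Z_0$, which is a safe set by Assumption~\ref{as:z_0}. For the inductive step I assume the incoming $\hatZsafe$ is already a safe set (the output of the previous call) and show the updated set $\hatZsafe \cup \Zclosed$ remains one.

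The non-negative-reward property is the easy half. Observe that every set produced inside Algorithm~\ref{alg:compute_safe_set} ($\Zreachable$, $\Zreturnable$, $\Zclosed$) is obtained from $\Zcandidate$ only by adding candidate elements or removing elements, so the final $\Zclosed$ is a subset of the initial candidate set $\{(s,a) \notin \hatZsafe : \del{\epsilon}_T(s,a) < \tau/2,\ R(s,a) \geq 0\}$. In particular every $(s,a) \in \Zclosed$ satisfies $R(s,a) \geq 0$ and $\del{\epsilon}_T(s,a) < \tau/2$, while every $(s,a) \in \hatZsafe$ satisfies $R(s,a) \geq 0$ by the inductive hypothesis; hence all of $\hatZsafe \cup \Zclosed$ has non-negative reward.

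The closedness property is where the work lies, and I would split it by cases. For $(s,a) \in \hatZsafe$, closedness within $\hatZsafe$ (inductive hypothesis) already gives, for each $s'$ with $T(s,a,s')>0$, some $a'$ with $(s',a') \in \hatZsafe \subseteq \hatZsafe \cup \Zclosed$. For $(s,a) \in \Zclosed$, I would combine two facts. First, since $\del{\epsilon}_T(s,a) < \tau/2$, Lemma~\ref{lem:del-support} yields the support equivalence $T(s,a,s') > 0 \iff \del{T}(s,a,s') > 0$. Second, $(s,a)$ survives the third (closedness-enforcing) while loop of Algorithm~\ref{alg:compute_safe_set} at its fixpoint, so the removal condition fails for it: there is no $s'$ with $\del{T}(s,a,s') > 0$ for which every $a'$ satisfies $(s',a') \notin \Zclosed \cup \hatZsafe$. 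Combining these, for every $s'$ with $T(s,a,s') > 0$ (equivalently $\del{T}(s,a,s') > 0$) there exists $a'$ with $(s',a') \in \Zclosed \cup \hatZsafe$, which is exactly closedness. Both cases together show $\hatZsafe \cup \Zclosed$ is closed.

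I expect the main obstacle to be pinning down the fixpoint reasoning for the final $\Zclosed$ precisely: the third while loop terminates only when no further element can be removed, and I must argue that this terminal state — together with the outer loop having reached a fixpoint ($\Zcandidate = \Zclosed$) — is exactly the $\Zclosed$ referenced in the statement (as fixed by the Note preceding the lemma). The bridge that makes the argument go through is Lemma~\ref{lem:del-support}, which lets me translate the algorithm's internal condition (stated in terms of $\del{T}$, which is all the agent can compute) into a statement about the true dynamics $T$; without the hypothesis $\del{\epsilon}_T(s,a) < \tau/2$ enforced by the candidate set, this translation would fail and closedness with respect to the true transitions could break.
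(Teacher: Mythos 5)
Your proposal is correct and follows essentially the same route as the paper's proof: induction over calls to Algorithm~\ref{alg:compute_safe_set} with $Z_0$ as the base case, non-negativity of rewards read off directly from the construction of $\Zcandidate$, and closedness split into the $\hatZsafe$ case (inductive hypothesis) and the $\Zclosed$ case, where Lemma~\ref{lem:del-support} translates the third while loop's fixpoint condition on $\del{T}$ into closedness under the true dynamics $T$. No gaps.
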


\begin{proof}

We will prove this statement using induction. That is we assume that, before every call to Algorithm \ref{alg:compute_safe_set}, $\hatZsafe$ is a safe set. As the base case, this is satisfied in the first call because then, $\hatZsafe = Z_0$ and we have assumed $Z_0$ to be a safe set in Assumption~\ref{as:z_0}.

Since we populate $Z_{\text{candidate}}$ with only those state-action pairs with non-negative rewards, it follows directly from the run of the algorithm that all state-action pairs that are eventually found in $\Zclosed$ have non-negative rewards. Thus, to show that $\Zclosed \cup \hatZsafe$ is safe, we only need to show that $\Zclosed \cup \hatZsafe$ is a closed set.  That is, we need to show that for any $(s,a) \in \Zclosed \cup \hatZsafe$, 
every possible next state has an action in $ \hatZsafe \cup \Zclosed$ i.e., for all $s' \in \{s' \in S : T(s, a, s') > 0 \}$, there exists an $a' \in A$ where $(s', a') \in \hatZsafe \cup \Zclosed$. From the induction assumption (that $\hatZsafe$ is closed), this trivially holds for all $(s,a) \in \hatZsafe$.

Hence, consider any $(s, a) \in \Zclosed$. From our choice of $Z_{\text{candidate}}$ in the first step of the algorithm, we know that $\del{\epsilon}_T(s, a) < \tau/2$. Then from Lemma~\ref{lem:del-support}, we know that the set $\del{S}' = \{s' \in S : \del{T}(s, a, s') > 0 \}$ is identical to the true support of the next state-action pairs of $(s,a)$. 
But from the third inner while loop of our algorithm, we have that for all $s' \in \del{S}'$, we ensure that there exists an $a' \in A$ where $(s', a') \in \Zclosed \cup \hatZsafe$, implying that all possible next states of $(s,a)$ have a corresponding action in $\Zclosed \cup \hatZsafe$. 
Thus, $\Zclosed \cup \hatZsafe$ is closed, and also, safe.
\end{proof}

As a corollary of the above result, we can also show that $\hatZsafe$ is closed even if we replaced the true transitions by some candidate transition.

\begin{corollary}
\label{cor:closed_under_candidate}
Assume the confidence intervals are admissible. During the run of Algorithm~\ref{alg:ase}, we always have that, for any $T^\dagger \in CI(\del{T})$ and for any $(s,a) \in \hatZsafe$, if there exists $s' \in S$ such that $T^\dagger(s,a,s') > 0$, then $s' \in \hatZsafe$.
\end{corollary}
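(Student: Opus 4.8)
The plan is to derive this corollary directly from Lemma~\ref{lem:hatZsafe_safe}, which already establishes that $\hatZsafe$ is closed under the \emph{true} transition function $T$. The only additional work is to show that, for every $(s,a) \in \hatZsafe$, the support of any candidate transition $T^\dagger(s,a,\cdot)$ is contained in the support of $T(s,a,\cdot)$; once this containment holds, the claim follows immediately, since $T^\dagger(s,a,s') > 0$ then forces $T(s,a,s') > 0$, and closedness of $\hatZsafe$ under $T$ gives $s' \in \hatZsafe$. I would split the argument into the two kinds of state-action pairs that can occur in $\hatZsafe$: those inherited from the initial set $Z_0$, and those added by subsequent calls to Algorithm~\ref{alg:compute_safe_set}.

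For a pair $(s,a) \in \hatZsafe \setminus Z_0$, I would argue the support containment using the definition of a candidate transition (Definition~\ref{def:ci}) together with Lemma~\ref{lem:del-support}. Such a pair was added to $\hatZsafe$ only after it entered $\Zcandidate$ in Algorithm~\ref{alg:compute_safe_set}, which requires $\del{\epsilon}_T(s,a) < \tau/2 < \tau$. Under this premise, condition~2 of Definition~\ref{def:ci} (in contrapositive form) gives that $T^\dagger(s,a,s') > 0$ implies $\del{T}(s,a,s') > 0$; and Lemma~\ref{lem:del-support}, which applies precisely when $\del{\epsilon}_T(s,a) < \tau/2$, then yields $\del{T}(s,a,s') > 0 \iff T(s,a,s') > 0$. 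Chaining these gives $T^\dagger(s,a,s') > 0 \Rightarrow T(s,a,s') > 0$, the desired support containment, after which closedness under $T$ closes the case.

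For a pair $(s,a) \in Z_0$ the tightness of the confidence interval may fail, so I would instead invoke condition~3 of Definition~\ref{def:ci} directly: any candidate $T^\dagger$ satisfies $T^\dagger(s,a,s') = 0$ for all $s' \notin Z_0$, so $T^\dagger(s,a,s') > 0$ forces $s' \in Z_0 \subseteq \hatZsafe$, and we are done without reference to $T$ at all.

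I expect the main subtlety --- the one easily glossed over --- to be justifying that $\del{\epsilon}_T(s,a) < \tau/2$ still holds \emph{at the current timestep} for a pair that was added at some earlier point. This requires observing that the counts $n(s,a)$ only increase over the run, so each Hoeffding width $\hat{\epsilon}_T$ is non-increasing, and hence the transferred width $\del{\epsilon}_T$ (a minimum of such widths shifted by the fixed distances $\Delta$) is also non-increasing; thus once a pair satisfies $\del{\epsilon}_T(s,a) < \tau/2$ it continues to do so, while $\hatZsafe$ itself only grows. With this monotonicity, together with the fact that between successive calls to Algorithm~\ref{alg:compute_safe_set} the set $\hatZsafe$ --- and hence its closedness under $T$ from Lemma~\ref{lem:hatZsafe_safe} --- is unchanged, the two cases above cover every $(s,a) \in \hatZsafe$ and the corollary follows.
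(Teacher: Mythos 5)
Your proposal is correct and takes essentially the same route as the paper's proof: the same case split on $(s,a) \in Z_0$ (handled by condition~3 of Definition~\ref{def:ci}) versus $\del{\epsilon}_T(s,a) < \tau/2$ for pairs added by Algorithm~\ref{alg:compute_safe_set} (handled by Lemma~\ref{lem:del-support} plus closedness from Lemma~\ref{lem:hatZsafe_safe}). If anything, you are more explicit than the paper, which jumps from $T^\dagger(s,a,s')>0$ to $T(s,a,s')>0$ without spelling out the bridge through condition~2 of Definition~\ref{def:ci}, and which asserts "by design of the algorithm" the persistence of the tight confidence interval that you justify via monotonicity of the widths.
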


\begin{proof}
By design of Algorithm~\ref{alg:compute_safe_set}, we know that for any $(s,a) \in \hatZsafe$, either $(s,a) \in Z_0$ or 
$\del{\epsilon}(s,a) < \tau/2$. 

Consider the case where $(s,a) \in Z_0$. Since the confidence intervals are admissible, by the third requirement in the definition of the candidate transition set (Definition~\ref{def:ci}), we have that if $T^\dagger(s,a,s') > 0$, then $s' \in Z_0$.
Since $Z_0 \subset \hatZsafe$, $s' \in \hatZsafe$.

Consider the case where $\del{\epsilon}(s,a) < \tau/2$.
Since the confidence intervals are admissible, we have from Lemma~\ref{lem:del-support} that if $T^\dagger(s,a,s') > 0$, then $T(s,a,s') > 0$. Since we have established in Lemma~\ref{lem:hatZsafe_safe} that $\hatZsafe$ is closed, this means that $s' \in \hatZsafe$.
\end{proof}

In order to make sure that our agent can continue exploring without ever getting stuck, we must ensure that whenever Algorithm~\ref{alg:compute_safe_set} expands $\hatZsafe$, it remains communicating.

\begin{lemma} \label{lem:hatZsafe_communicating}
Assume our confidence intervals are admissible. Whenever Algorithm~\ref{alg:ase} calls Algorithm \ref{alg:compute_safe_set}, in the final step of Algorithm \ref{alg:compute_safe_set}, $\hatZsafe \cup \Zclosed$ is communicating.
\end{lemma}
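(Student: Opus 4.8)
The plan is to establish the standard notion of communicatingness (Definition~\ref{def:communicating-standard}) for $Z^{\mathrm{new}} := \hatZsafe \cup \Zclosed$ --- namely that between every ordered pair of states $s,s' \in Z^{\mathrm{new}}$ there is a stationary policy in $\Pi(Z^{\mathrm{new}})$ reaching $s'$ from $s$ with positive probability in finitely many steps --- and then invoke Fact~\ref{fact:communicating_single_policy} to upgrade this to Definition~\ref{def:communicating}. Closedness of $Z^{\mathrm{new}}$ is already given by Lemma~\ref{lem:hatZsafe_safe}, so only the reachability condition remains. I would argue by induction, assuming $\hatZsafe$ is communicating before the call (the base case $\hatZsafe = Z_0$ holds by Assumption~\ref{as:z_0}).

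The key structural observation is a fixed-point property of Algorithm~\ref{alg:compute_safe_set} at termination. Since seeding and growth guarantee $\Zclosed \subseteq \Zreturnable \subseteq \Zreachable \subseteq \Zcandidate$, and the outer loop only exits once the final pass leaves $\Zcandidate$ unchanged (i.e.\ $\Zclosed = \Zcandidate$), all four sets must coincide at termination: $\Zcandidate = \Zreachable = \Zreturnable = \Zclosed$. I would prove this first, as it is precisely what lets me read off genuine reachability and returnability for \emph{every} element of $\Zclosed$, rather than for some pruned subset.

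Next I would convert the combinatorial set constructions into honest positive-probability paths. Every pair in $\Zcandidate$ has $\del{\epsilon}_T < \tau/2$, so by Lemma~\ref{lem:del-support} a recorded transition $\del{T}(s,a,s') > 0$ along such a pair corresponds to a true transition $T(s,a,s') > 0$. Using $\Zreachable = \Zclosed$, each $(s,a) \in \Zclosed$ is connected to $\hatZsafe$ by a forward chain of positive-$\del{T}$ transitions whose first hop leaves a seed pair (a candidate pair whose \emph{state} lies in $\hatZsafe$) and whose subsequent pairs are all candidates; since these are all tight-CI pairs, the chain is realizable with positive true probability and stays inside the closed set $Z^{\mathrm{new}}$. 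Symmetrically, $\Zreturnable = \Zclosed$ gives each $(s,a) \in \Zclosed$ a positive-probability return chain into $\hatZsafe$ through $Z^{\mathrm{new}}$. Combined with the induction hypothesis (mutual reachability within $\hatZsafe$, whose witnessing policies lie in $\Pi(\hatZsafe) \subseteq \Pi(Z^{\mathrm{new}})$ after being extended arbitrarily-but-validly on $\Zclosed \setminus \hatZsafe$), this produces a directed path inside $Z^{\mathrm{new}}$ between any two of its states, routing through $\hatZsafe$ as a hub (return into $\hatZsafe$, travel inside it, then forward out to the target). Extracting a simple (loop-free) such path and assigning at each of its states the action that realizes the next edge --- and any valid $Z^{\mathrm{new}}$-action elsewhere --- yields a single stationary policy in $\Pi(Z^{\mathrm{new}})$ reaching the destination with positive probability; taking the path simple is exactly what avoids conflicting action assignments at a repeated state.

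I expect the main obstacle to be this third step: cleanly justifying that the purely combinatorial connectivity recorded by $\Zreachable$ and $\Zreturnable$ (phrased through $\del{T}$) corresponds to genuine positive-probability transitions of the true MDP, and in particular handling the interface with $\hatZsafe$, since the very first forward hop must emanate from a seed pair that is \emph{simultaneously} tight-CI and rooted at a $\hatZsafe$ state so that Lemma~\ref{lem:del-support} applies and the preceding intra-$\hatZsafe$ travel can be glued on. Once positive-probability reachability between all pairs is in hand, Fact~\ref{fact:communicating_single_policy} (together with the positive-to-probability-one upgrade it relies on, Lemma~\ref{lem:communicating_positive_prob}, which needs closedness) immediately gives that $Z^{\mathrm{new}}$ satisfies Definition~\ref{def:communicating}, closing the induction.
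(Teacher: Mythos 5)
Your proposal is correct, and it shares the paper's overall skeleton---induction on the communicatingness of $\hatZsafe$ with base case $Z_0$, the last-outer-iteration fixed point, Lemma~\ref{lem:del-support} to turn recorded $\del{T}$-edges into true positive-probability transitions, routing through $\hatZsafe$ as a hub, and a final positive-probability-to-probability-one upgrade---but your assembly of these pieces is genuinely different. The paper never forms per-pair paths: for each destination state it constructs a \emph{single} stationary policy in $\Pi(\hatZsafe \cup \Zclosed)$ reaching that destination from everywhere, by an induction synchronized with the inner while loops (each time a pair $(s,a)$ enters $\Zreturnable$ or $\Zreachable$, the current policy is modified at the single state $s$, and a fixed-random-seed coupling argument shows the earlier guarantees survive the surgery); it then applies Lemma~\ref{lem:communicating_positive_prob} directly. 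You instead make the termination identity $\Zcandidate=\Zreachable=\Zreturnable=\Zclosed$ explicit, extract a per-ordered-pair walk (return chain, intra-$\hatZsafe$ travel, forward chain), loop-erase it into a simple path defining a per-pair stationary policy---i.e., Definition~\ref{def:communicating-standard}---and then invoke Fact~\ref{fact:communicating_single_policy}. Your loop-erasure device neatly sidesteps the paper's most delicate verification (that modifying a policy at one state preserves previously established reachability), which is a real simplification. The costs are minor but worth noting: first, Fact~\ref{fact:communicating_single_policy} is proved in the paper only for $Z = S\times A$, so to use it on the proper closed subset $\hatZsafe\cup\Zclosed$ you must rerun its argument on the MDP whose actions are restricted to the set (closedness makes this well-defined), since otherwise the auxiliary optimal policy need not lie in $\Pi(\hatZsafe\cup\Zclosed)$ as Lemma~\ref{lem:communicating_positive_prob} requires; second, your description of forward chains (``first hop from a seed, subsequent pairs all candidates'') omits that Algorithm~\ref{alg:compute_safe_set} also grows $\Zreachable$ from predecessors lying in $\hatZsafe$ itself, which may be unexplored $Z_0$ pairs with loose confidence intervals where Lemma~\ref{lem:del-support} does not apply---but the paper's own proof asserts tightness in exactly this case with the same justification, so this is a gap you share with, rather than introduce relative to, the published argument.
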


\begin{proof}
We will prove this statement using induction. We first assume that, before every call to Algorithm \ref{alg:compute_safe_set}, $\hatZsafe$ is an communicating set, and using this, prove that the updated safe set, namely $\hatZsafe \cup \Zclosed$ computed at the end of Algorithm \ref{alg:compute_safe_set}, is also communicating. As the base case, this is satisfied in the first call because $\hatZsafe = Z_0$ and we have assumed $Z_0$ to be an communicating set in Assumption~\ref{as:z_0}.

Informally, to show that $\hatZsafe$ is communicating, we will first show that for every state in $\hatZsafe$, the agent has a return policy which ensures that from anywhere in $\hatZsafe \cup \Zclosed$, it can reach that state with non-zero probability. As a second step, we will show that for every state in $\Zclosed$, the agent has a `reach' policy which ensures that from anywhere in $\hatZsafe \cup \Zclosed$
it can reach that state with non-zero probability. Finally, we will put these together to establish communicatingness.


\subparagraph{Proof that $\Zclosed$ is returnable.} 
As the first part of the proof, we will show that, 
\begin{equation}
 \forall \tilde{s} \in \hatZsafe \ \exists \pireturn \in \Pi(\hatZsafe \cup \Zclosed), \  \ s.t. \ \forall s \in \hatZsafe \cup \Zclosed \ \Pr\left[\left. {\exists t, \; s_t=\tilde{s}}\right| \pireturn, s_0=s\right] > 0.
\label{eq:returnable}
\end{equation}


Fix an $\tilde{s} \in \hatZsafe$. 
We will prove the existence of a suitable $\pireturn$ by induction. 
Consider the last outer iteration of our algorithm during which we know that $\Zreturnable$ at the end of the second inner while block is identical to $\Zclosed$ at the end of the third inner while block. In this round, consider some $(s,a)$ that is about to be added to $\Zreturnable$. For the induction hypothesis, we will consider a hypothesis that is stronger than the one above. 
In particular, assume that there exists $\pireturn$ such that for every initial state $s'$ currently in $\hatZsafe \cup \Zreturnable$, there is non-zero probability of returning to $\tilde{s}$, {\em while visiting only those states in $\hatZsafe \cup \Zreturnable$ on the way to $\tilde{s}$.} Formally, assume that $\exists \pireturn \in \Pi(\hatZsafe \cup \Zclosed)$ such that:
\begin{equation}
  \forall s' \in \hatZsafe \cup \Zreturnable, \ \Pr\left[\left. {\exists t, \; \substack{s_t=\tilde{s}, \\ \forall t' < t \ s_{t'} \in \hatZsafe \cup \Zreturnable}}\right| \pireturn, s_0=s'\right] > 0.
\end{equation}


This assumption is of course true initially when $\hatZsafe \cup \Zreturnable = \hatZsafe$, by communicatingness of $\hatZsafe$.
 
Now, by the manner in which the second while block works, we know that there exists an $s'$ such that $\del{T}(s,a,s') > 0$ and there exists $a'$ such that $(s',a') \in \hatZsafe \cup \Zreturnable$. Note that since $\del{\epsilon}_T(s,a) < \tau/2$ (by our initial choice of $\Zcandidate$) and
since $\del{T}(s,a,s') > 0$, it follows from Lemma~\ref{lem:del-support} that ${T}(s,a,s') > 0$. 



Next, consider $\pireturn'$ that is identical to $\pi$ everywhere, except $\pireturn'(s) = a$.  Note that since $(s,a) \in \hatZsafe \cup \Zclosed$, $\pireturn' \in \Pi(\hatZsafe \cup \Zclosed)$.

First, we have that the induction assumption still holds for every $s' \in \hatZsafe \cup \Zreturnable$. That is, for every $s' \in \hatZsafe \cup \Zreturnable$, with non-zero probability, $\pireturn'$ starts from $s'$ to return to $\tilde{s}$, without visiting any state outside $\hatZsafe \cup \Zreturnable$. This is because, for a given random seed, if the agent were to follow $\pireturn$ from $s$ to return to $s''$, without visiting any states outside of $\hatZsafe \cup \Zreturnable$, the agent would do the same under $\pireturn'$ since the two policies would agree on all the visited states. 

Next, we have the following when starting from $s$:
\begin{align*}
\Pr\left[\left. {\exists t, \; \substack{s_t=\tilde{s} \\ \forall t' < t \ s_{t'} \in \hatZsafe \cup \Zreturnable}}\right| \pireturn', s_0 = s\right]& \geq T(s,a,s ') \Pr\left[\left. {\exists t, \; \substack{s_t=\tilde{s} \\ \forall t' < t \ s_{t'} \in \hatZsafe \cup \Zreturnable }}\right| \pireturn', s_0 = s'\right] \\
& \geq T(s,a,s') \Pr\left[\left. {\exists t, \; \substack{s_t=\tilde{s} \\ \forall t' < t \ s_{t'} \in \hatZsafe \cup \Zreturnable}}\right| \pireturn, s_0 = s'\right] \\
& > 0. 
\end{align*}

Here, the first inequality simply follows from the fact that one possible way to reach $\tilde{s}$ from $s$, is by first taking a step to $s'$. In the second step, we were able to replace $\pireturn'$ with $\pireturn$ by a similar logic as before. Specifically, for a given random seed, if an agent starts from
$s'$ to reach $\tilde{s}$ following $\pireturn$ without ever visiting $s$, it should do the same under $\pireturn'$ too. 

Finally, we have that both the above terms are strictly positive. Hence, we establish that $\pireturn'$, with non-zero probability, allows the agent to return to $\tilde{s}$, while never visiting any state outside $\hatZsafe \cup \Zreturnable \cup \{(s,a)\}$.

\subparagraph{Proof for $\Zclosed$ is reachable.} 

For the second part of the proof, we will show that $\forall s \in \Zclosed$,
\begin{equation}
\exists \pireach \in \Pi(\Zclosed \cup \hatZsafe),  \ s.t. \forall \tilde{s} \in \hatZsafe \cup \Zclosed  \  \Pr[\exists t, s_t = s | s_0 = \tilde{s} ,\pireach] > 0.
\label{eq:reachable}
\end{equation}



We will prove this by induction. Consider the last outer iteration of the algorithm and consider the first inner while loop where we populate $\Zreachable$. Note that since this is the last outer iteration, at the end of this while block, $\Zreachable$ is exactly equal to $\Zclosed$ that is output at the end of the algorithm. (Thus, during the run of this while loop, we always have that $\Zreachable \subset \Zclosed$.) 
In this while loop, consider the instant at which  some $(s',a')$ is about to be added to $\Zreachable$.
We will assume by induction that for all $\tilde{s}$ that are currently in $\hatZsafe \cup \Zreachable$,
Equation~\ref{eq:reachable} holds i.e., $\exists \pireach$ such that from anywhere in $\hatZsafe \cup \Zreachable$, we can reach $s$ with non-zero probability.  

As the base case, because of how we have initialized $\Zreachable$, we have that for all $s \in \hatZsafe \cup \Zreachable$,  $s \in \hatZsafe$. Thus, for this case, the induction assumption holds from the fact that we have proven returnability to $\hatZsafe$. 

Now, let us turn to the point when the algorithm is about to add $(s',a')$ to $\Zreachable$. Then, note that this means that there exists $(s,a) \in \hatZsafe \cup \Zreachable$ such that $\del{T}(s,a,s') > 0$. Since $\del{\epsilon}_T(s,a) < \tau/2$ (by our initial choice of $\Zcandidate$) and since $\del{T}(s,a,s') > 0$, by Lemma~\ref{lem:del-support}, we have that $T(s,a,s') > 0$. 

Next, consider the policy $\pireach$ guaranteed by our induction assumption, to reach $s$ from anywhere in $\hatZsafe \cup \Zclosed$.  Then, define a policy $\pireach'$ which is identical to $\pireach$ on all states, except that $\pireach'(s) = a$. Note that since $(s,a) \in \hatZsafe \cup \Zreachable \subset \hatZsafe \cup \Zclosed$, and since $\pireach \in \Pi(\hatZsafe \cup \Zclosed)$, we have $\pi'_{\text{reach}} \in \Pi(\hatZsafe \cup \Zclosed)$.

Now, we can show that $\pi'_{\text{reach}}$ reaches $s'$ from any $\tilde{s} \in \hatZsafe \cup \Zclosed$  because of the following:
\begin{align*}
\Pr\left[\left. \exists t, s_t=s'\right| \pireach', s_0 = \tilde{s}\right] &\geq T(s,a,s')Pr\left[\left. \exists t, s_{t-1}=s \right| \pireach', s_0=\tilde{s}\right] \\
&\geq T(s,a,s')Pr\left[\left. \exists t, s_{t-1}=s \right| \pireach, s_0=\tilde{s}\right] \\
&>  0.
\end{align*}

Here, the first inequality comes from the fact that one way to reach $s'$ is by traveling to $s$ and then taking the action $a$. In the next inequality, we make use of the fact that, for a given random seed, if the agent follows $\pireach$ from $\tilde{s}$ to visit $s$ for the first time, it would follow the same steps to reach $s$ even under $\pireach'$, since the policies would agree until then. Finally, we have from our induction assumption that the probability term in the penultimate line is strictly positive; since the transition probability is strictly positive too, the last inequality holds.

This proves that $\pireach' \in \Pi(\hatZsafe \cup \Zclosed)$ reaches $s'$ from anywhere in $\hatZsafe \cup \Zclosed$ with non-zero probability. 


\subparagraph{Proof for communicatingness.} Finally, we will wrap the above results to establish communicatingness. From the above results, we have that $\forall s' \in \hatZsafe \cup \Zclosed$:
\[
   \exists \pivisit \in \Pi(\hatZsafe \cup \Zclosed), \ s.t. \forall s \in \hatZsafe \cup \Zclosed \ \ \Pr\left[\left. {\exists t, \; s_t=s'}\right| \pivisit, s_0=s\right] > 0.
\]

To establish communicatingness, we need to show that this probability is in fact $1$.
To do this, we will first note that $\pivisit \in \Pi(\hatZsafe \cup \Zclosed)$ and since $\hatZsafe \cup \Zclosed$ is closed (as proven in Lemma~\ref{lem:hatZsafe_safe}), then use Lemma~\ref{lem:communicating_positive_prob} to establish communicatingness.

\end{proof}

\begin{corollary}
\label{cor:hatZsafe_subset}
Assume our confidence intervals are admissible.
 Whenever Algorithm~\ref{alg:ase} calls Algorithm \ref{alg:compute_safe_set}, in the final step of Algorithm \ref{alg:compute_safe_set}, $\hatZsafe \cup \Zclosed \subset \Zsafe$.
\end{corollary}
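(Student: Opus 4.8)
The plan is to show that every state-action pair in $\hatZsafe \cup \Zclosed$ admits a safe return path to $Z_0$ in the sense of Definition~\ref{def:Zsafe}, which is precisely the membership condition for $\Zsafe$. Most of the groundwork is already in place: assuming the confidence intervals are admissible, Lemma~\ref{lem:hatZsafe_safe} gives that $\hatZsafe \cup \Zclosed$ is safe (hence closed, with all rewards non-negative), and Lemma~\ref{lem:hatZsafe_communicating} gives that it is communicating. Moreover, since $\hatZsafe$ is initialized to $Z_0$ and only ever grows, we always have $Z_0 \subseteq \hatZsafe \subseteq \hatZsafe \cup \Zclosed$. So the remaining task is just to convert communicatingness into an explicit return policy that lands inside $Z_0$.

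First I would fix a state $s^\ast \in Z_0$, which exists since $\sinit \in Z_0$ by Assumption~\ref{as:z_0}. By communicatingness of $\hatZsafe \cup \Zclosed$ and Definition~\ref{def:communicating}, there is a policy $\pi_{s^\ast} \in \Pi(\hatZsafe \cup \Zclosed)$ reaching $s^\ast$ with probability $1$ from every starting state in the set. I would also fix any $\pi_0 \in \Pi(Z_0)$, which is nonempty since every $s \in Z_0$ has an action keeping the pair in $Z_0$. Then, for an arbitrary target pair $(s,a) \in \hatZsafe \cup \Zclosed$, I would build $\pireturn$ as the (non-stationary) policy that takes action $a$ at the initial step and thereafter follows the stationary rule ``use $\pi_0$ whenever the current state lies in $Z_0$, and use $\pi_{s^\ast}$ otherwise.'' Since $(s,a)$ and every pair selected by $\pi_0$ or $\pi_{s^\ast}$ lies in $\hatZsafe \cup \Zclosed$, we get $\pireturn \in \Pi(\hatZsafe \cup \Zclosed)$, so by closedness (Fact~\ref{fact:closed_policy}) every visited pair stays in the safe set and thus earns non-negative reward, which discharges the ``$\forall t,\ R(s_t,a_t)\geq 0$'' requirement for free.

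The main step is to argue that $\pireturn$ realizes a pair inside $Z_0$ with probability $1$. The key observation is that $\pireturn$ agrees with $\pi_{s^\ast}$ on every state outside $Z_0$, so the trajectory law of $\pireturn$ coincides with that of $\pi_{s^\ast}$ up to the first visit to a $Z_0$ state; consequently the first-hitting time of $Z_0$ has the same distribution under both policies. After the initial override, the agent sits at some $s_1 \in \hatZsafe \cup \Zclosed$, and $\pi_{s^\ast}$ reaches $s^\ast \in Z_0$ from $s_1$ with probability $1$, hence visits $Z_0$ with probability $1$; therefore so does $\pireturn$. At that first visit the current state is in $Z_0$, so $\pireturn$ plays a $\pi_0$-action, making the realized pair $(s_t,a_t)$ an element of $Z_0$ and fulfilling Definition~\ref{def:Zsafe}. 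Thus $(s,a) \in \Zsafe$, and since $(s,a)$ was arbitrary, $\hatZsafe \cup \Zclosed \subseteq \Zsafe$.

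I expect the only delicate point to be the distinction between reaching a $Z_0$ \emph{state} and realizing a $Z_0$ \emph{state-action pair}: communicatingness guarantees only the former, so the coupling argument above — splicing in $\pi_0$ exactly at the first $Z_0$ visit and checking that this splice cannot disturb the probability-$1$ hitting guarantee, precisely because $\pireturn$ and $\pi_{s^\ast}$ differ only on $Z_0$ — is what bridges the gap. The one-step override to action $a$ is needed solely to meet the $(s_0,a_0)=(s,a)$ condition and is harmless, since after a single step the agent lands in $\hatZsafe \cup \Zclosed$, from where the hitting argument applies verbatim.
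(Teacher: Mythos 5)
Your proof is correct and follows essentially the same route as the paper's: it combines Lemma~\ref{lem:hatZsafe_safe} (safety), Lemma~\ref{lem:hatZsafe_communicating} (communicatingness), and $Z_0 \subseteq \hatZsafe \cup \Zclosed$ to exhibit a return policy meeting Definition~\ref{def:Zsafe}. The paper's version is terser---it does not spell out the splice into a $Z_0$-preserving policy that converts hitting a $Z_0$ \emph{state} into realizing a $Z_0$ \emph{state-action pair}---so your extra care on that point only makes the same argument more complete.
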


\begin{proof}
Note that $Z_0 \subset \hatZsafe \cup \Zclosed$ by construction. Then,  since we have established communicatingness of $\hatZsafe \cup \Zclosed$ in Lemma~\ref{lem:hatZsafe_communicating}, for every $(s,a) \in \hatZsafe \cup \Zclosed$, there should exist a policy $\pireturn$ that travels from $(s,a)$ and goes to any states in $Z_0$ with probability $1$. Furthermore since we established safety in Lemma~\ref{lem:hatZsafe_safe}, this also means that every state-action pair visited in this path has non-negative reward. Thus, by definition of $\Zsafe$, the claim follows. 
\end{proof}

Although our definition of communicating, Def~\ref{def:communicating}, seems strict since its guarantee must hold with probability $1$, we show here that this is no stronger than having the guarantee simply hold with positive probability.
This lemma helps us prove that our definition of communicating is equivalent to that of the standard definition (see Fact~\ref{fact:communicating_single_policy}) as well as help prove that the safe set we construct is indeed communicating (see Lemma~\ref{lem:hatZsafe_communicating}).

\begin{lemma} \label{lem:communicating_positive_prob}
If there exists a closed set of state-action pairs $Z$ such that $\forall s' \in Z$:
\[
   \exists \ \pivisit \in \Pi(Z), \ s.t. \forall s \in Z \ \ \Pr\left[\left. {\exists t, \; s_t=s'}\right| \pivisit, s_0=s\right] > 0
\]
then it must be the case that this probability is, in fact, equal to $1$; specifically, $\forall s' \in Z$:
\[
   \exists \ \pivisit \in \Pi(Z), \ s.t. \forall s \in Z \ \ \Pr\left[\left. {\exists t, \; s_t=s'}\right| \pivisit, s_0=s\right] = 1
\]
\end{lemma}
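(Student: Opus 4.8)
The plan is to keep the very same stationary policy $\pivisit$ supplied by the hypothesis and show that, \emph{for this policy}, the probability of never reaching $s'$ is zero. The crucial structural observation is that $\pivisit$ is a stationary (deterministic) policy with $\pivisit \in \Pi(Z)$ and $Z$ is closed, so by Fact~\ref{fact:closed_policy} the agent started anywhere in $Z$ remains in $Z$ for all time. Hence $\pivisit$ induces a time-homogeneous Markov chain on the \emph{finite} set $Z$, which is exactly the property that will let a pointwise-positive hitting probability be boosted to $1$.

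First I would convert the ``positive probability'' hypothesis into uniform bounds using finiteness. For each $s \in Z$, since $\Pr[\exists t, s_t = s' \mid \pivisit, s_0 = s] > 0$ and this equals the increasing limit of the finite-horizon hitting probabilities, there is a finite horizon $t_s$ with $\Pr[\exists t \le t_s, s_t = s' \mid \pivisit, s_0 = s] > 0$. Taking $T := \max_{s \in Z} t_s$ and $p := \min_{s \in Z} \Pr[\exists t \le T, s_t = s' \mid \pivisit, s_0 = s]$ — a minimum of finitely many strictly positive numbers, hence strictly positive — gives a single horizon $T$ and a uniform lower bound $p > 0$ such that from every state in $Z$ the chain hits $s'$ within $T$ steps with probability at least $p$.

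Next I would set up a geometric-decay recursion. Define $q^{(k)}(s) := \Pr[\forall t \le kT, s_t \ne s' \mid \pivisit, s_0 = s]$ and $Q_k := \max_{s \in Z} q^{(k)}(s)$. The previous step gives $Q_1 \le 1 - p$. Conditioning on the state at time $T$ and invoking the Markov property (valid since $\pivisit$ is stationary), any trajectory avoiding $s'$ through step $(k+1)T$ must first avoid $s'$ through step $T$, land in some $\tilde{s} \ne s'$ at time $T$, and then avoid $s'$ for a further $kT$ steps; summing over the landing states $\tilde{s}$ and using that the total ``avoid for the first $T$ steps'' mass is at most $1-p$ while each residual factor is at most $Q_k$, yields $q^{(k+1)}(s) \le (1-p)\, Q_k$, and therefore $Q_k \le (1-p)^k$.

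The conclusion is then immediate: for every $s \in Z$ we have $\Pr[\forall t, s_t \ne s' \mid \pivisit, s_0 = s] = \lim_{k \to \infty} q^{(k)}(s) \le \lim_{k \to \infty} (1-p)^k = 0$, so the hitting probability equals $1$, with the witnessing policy being the very same $\pivisit$. I do not expect a genuinely hard step here — the content is a standard finite-state Markov-chain hitting argument. The only points requiring care are (i) extracting the uniform horizon $T$ and uniform lower bound $p$ from the merely pointwise-positive hypothesis, which rests essentially on $|Z| < \infty$, and (ii) writing the Markov-property conditioning cleanly so that the ``avoid $s'$ for the first $T$ steps'' factor (bounded by $1-p$) cleanly separates from the residual avoidance probability (bounded by $Q_k$).
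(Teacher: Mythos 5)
Your proposal is correct and follows essentially the same route as the paper's proof: both extract a uniform horizon and uniform hitting probability $p>0$ from finiteness of $Z$, use closedness of $Z$ together with Fact~\ref{fact:closed_policy} to keep the time-$T$ landing state inside $Z$, and apply the Markov decomposition at that horizon. The only cosmetic difference is the final step — you iterate the decomposition to get geometric decay $Q_k \le (1-p)^k \to 0$, while the paper applies it once to the infinite-horizon avoidance probability and concludes from the self-referential inequality $\max \le (1-p)\max$ that the maximum is zero.
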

\begin{proof}
From the above, we have that for a given $s' \in Z$, there must exists a constant $H \geq 0$ and a constant $p > 0$ such that:
\begin{equation}
    \forall s \in Z \ \ \Pr\left[\left. {\exists t \leq H, \; s_t=s'}\right| \pivisit, s_0=s\right] \geq p.
\label{eq:p}
\end{equation}

We will start by equating the probability of never visiting $s'$ by decomposing the trajectory into a prefix of $H$ steps and the rest, and then applying the Markov property, as follows:
\begin{align*}
\Pr[\forall t, s_t \neq s' \ | \pivisit, s_0 = s] & = \sum_{s'' \in S} \Pr[\forall t, s_t \neq s' \ | \pivisit, s_0 = s''] \Pr\left[\left. \substack{\forall t \leq H, s_t \neq s' \\ s_H = s''} \ \right| \pivisit, s_0 = s\right]. 
\end{align*}
Note that since $\pivisit \in \Pi(Z)$ and since $Z$ is closed, we have from Fact~\ref{fact:closed_policy} that $s''$, which is the $H$th state in the trajectory, satisfies $s'' \in Z$.
Therefore, we can restrict the summation to $Z$ (the remaining terms would zero out).
Hence, 
\begin{align*}
& \Pr[\forall t, s_t \neq s' \ | \pivisit, s_0 = s] \\
& = \sum_{s'' \in Z} \Pr[\forall t, s_t \neq s' \ | \pivisit, s_0 = s''] \Pr\left[\left. \substack{\forall t \leq H, s_t \neq s' \\ s_H = s''} \ \right| \pivisit, s_0 = s\right] \\
& \leq  \left(\max_{s'' \in Z} \Pr[\forall t, s_t \neq s' \ | \pivisit, s_0 = s'']\right) \left( \sum_{s'' \in Z} \Pr\left[\left. \substack{\forall t \leq H, s_t \neq s' \\ s_H = s''} \ \right| \pivisit, s_0 = s\right]   \right) \\
& =  \left(\max_{s'' \in Z} \Pr[\forall t, s_t \neq s' \ | \pivisit, s_0 = s'']\right) \left(  \Pr\left[\left. {\forall t \leq H, s_t \neq s' } \ \right| \pivisit, s_0 = s\right]   \right) \\
& \leq  \left(\max_{s'' \in Z} \Pr[\forall t, s_t \neq s' \ | \pivisit, s_0 = s'']\right) (1-p)
\end{align*}

Since the above inequality holds for any $s \in Z$, we can apply a $\max_{s \in Z}$ on the left hand side and rearrange to get:
\[
\max_{s'' \in Z} \Pr[\forall t, s_t \neq s' \ | \pivisit, s_0 = s''] \cdot p \leq 0 
\]

Since $p > 0$ (see Equation~\ref{eq:p}), this means that the first term here is equal to zero.
In other words, with probability $1$, $\exists t$ such that $s_t = s'$ when the agent starts from any $s$ and follows $\pivisit$, as claimed.
\end{proof}

Finally, we want to show that Algorithm~\ref{alg:compute_safe_set} computes the largest possible $\hatZsafe$ that still retains safety and communicatingness.
This is necessary since, if this were not true, we could get into a situation where we know enough to ensure we can perform the optimal policy, but our agent remains trapped in $\hatZsafe$ forever.

\begin{lemma}
   \label{lem:safe_set_large_enough} Assume that the confidence intervals are admissible. 
   Consider some call of  Algorithm \ref{alg:compute_safe_set} while executing Algorithm~\ref{alg:ase}.
   Consider $(\tilde{s},\tilde{a}) \notin \hatZsafe$ such that $\tilde{s} \in \hatZsafe$. Let $\exists \pireturn$ such that starting at $(\tilde{s},\tilde{a})$, $\pireturn$ reaches a state in $\hatZsafe$ with probability $1$. Let $\tilde{Z}$ be the set of state-action pairs visited by the agent starting $(\tilde{s},\tilde{a})$ following $\pireturn$ before reaching $\hatZsafe$. Formally, let:
   \[
   \tilde{Z} = \{(s,a) \in S \times A : \Pr\left[\left. \exists t \geq 0 \ \substack{(s_t, a_t) = (s,a) \\ \forall t' < t \ s_{t'} \notin \hatZsafe} \ \right| (s_0, a_0) = (\tilde{s}, \tilde{a}), \pireturn\right] > 0 \}.
    \]
    In the final step of Algorithm \ref{alg:compute_safe_set}, if $\forall (s,a) \in \tilde{Z} \setminus \hatZsafe$, $\del{\epsilon}_T(s,a)  < \tau/2$, then $\tilde{Z} \subset \hatZsafe \cup \Zclosed$.
\end{lemma}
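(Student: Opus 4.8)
The plan is to track the set $\tilde{Z}' := \tilde{Z} \setminus \hatZsafe$ through the pruning performed by Algorithm~\ref{alg:compute_safe_set} and argue it is never discarded, so that at termination $\tilde{Z}' \subseteq \Zclosed$; combined with $\tilde{Z} \cap \hatZsafe \subseteq \hatZsafe$ this gives $\tilde{Z} \subseteq \hatZsafe \cup \Zclosed$. I would maintain the invariant that $\tilde{Z}' \subseteq \Zcandidate$ at the start of every outer iteration. The base case is immediate: each $(s,a) \in \tilde{Z}'$ has $\del{\epsilon}_T(s,a) < \tau/2$ by hypothesis, lies on a (non-negative-reward) return path so $R(s,a) \geq 0$, and satisfies $(s,a) \notin \hatZsafe$; hence it enters the initial $\Zcandidate$. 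The inductive step --- showing $\tilde{Z}' \subseteq \Zclosed$ at the end of an outer iteration given $\tilde{Z}' \subseteq \Zcandidate$ at its start --- is the heart of the argument and reduces to showing $\tilde{Z}'$ survives each of the three inner loops. Since at the fixed point $\Zcandidate = \Zclosed$, the invariant then yields $\tilde{Z}' \subseteq \Zclosed$ for the final set.

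Throughout, the workhorse is Lemma~\ref{lem:del-support}: because every $(s,a) \in \tilde{Z}'$ has $\del{\epsilon}_T(s,a) < \tau/2$, the $\del{T}$-support of $(s,a)$ coincides with its true support, so I can freely convert the positive-probability transitions of $\pireturn$ into $\del{T}(s,a,s') > 0$ conditions and conversely. For the reachability loop I would induct along a positive-probability trajectory of $\pireturn$ from $(\tilde{s},\tilde{a})$ to a given $(s,a) \in \tilde{Z}'$: the trajectory starts at $\tilde{s} \in \hatZsafe$, every intermediate pair again lies in $\tilde{Z}' \subseteq \Zcandidate$, and each one-step transition is $\del{T}$-positive, so the forward closure the loop computes adds all of $\tilde{Z}'$ to $\Zreachable$. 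For the returnability loop I would instead induct on $d(s,a)$, the least number of $\pireturn$-steps (over positive-probability paths) in which $(s,a)$ first reaches $\hatZsafe$ --- finite because, by the Markov property, every pair visited along $\pireturn$ before $\hatZsafe$ inherits the probability-one return guarantee of $(\tilde{s},\tilde{a})$. A pair at distance $1$ transitions $\del{T}$-positively into $\hatZsafe$; a pair at distance $k$ transitions into some $(s',\pireturn(s')) \in \tilde{Z}'$ at distance $k-1$, already in $\Zreturnable$ by induction.

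The closedness loop is where I expect the real difficulty, since it prunes $\Zclosed$ using membership in $\Zclosed$ itself, creating a circularity. I would break it with a ``first removed'' argument: suppose for contradiction some element of $\tilde{Z}'$ is removed, and consider the first such removal. At that instant all of $\tilde{Z}'$ is still present (the loop starts from $\Zclosed = \Zreturnable \supseteq \tilde{Z}'$ and no $\tilde{Z}'$ element has yet left). For the pair $(s,a)$ being checked, every true next state $s'$ --- equivalently, every $\del{T}$-next state, by Lemma~\ref{lem:del-support} --- arises by following $\pireturn$, so either $s' \in \hatZsafe$, giving a covering action immediately, or $s' \notin \hatZsafe$, in which case $(s',\pireturn(s'))$ is itself in $\tilde{Z}'$ and hence still in $\Zclosed$. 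Either way $(s,a)$ has all next states covered and is \emph{not} removed, a contradiction.

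The two places I would watch most carefully are exactly this closedness circularity and the treatment of the pairs where the return path first touches $\hatZsafe$: under the reading that $\tilde{Z}$ collects the pairs visited \emph{before} reaching $\hatZsafe$ (so every $(s,a) \in \tilde{Z}$ has $s \notin \hatZsafe$), the next-state dichotomy above is clean, and one should confirm this is the reading that makes closedness survival go through --- otherwise a boundary pair whose state already lies in $\hatZsafe$ but whose action exits safety could spuriously fail the closedness test. I would also verify once that the return path visits only non-negative-reward pairs (as it does whenever $\pireturn \in \Pi(\Zsafe)$, the setting in which the lemma is applied), since that is what licenses the base-case candidate membership.
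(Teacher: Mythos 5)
Your proposal is correct and follows essentially the same route as the paper's proof: the same outer-iteration invariant ($\tilde{Z}\setminus\hatZsafe \subseteq \Zcandidate$ at the start, $\tilde{Z} \subseteq \hatZsafe \cup \Zclosed$ at the end), survival of $\tilde{Z}$ through the three pruning loops via Lemma~\ref{lem:del-support}, and the identical ``first removal'' contradiction for the closedness loop. The only differences are cosmetic---you argue the reachability and returnability loops by direct induction (along trajectories, and on distance to $\hatZsafe$) where the paper argues by contradiction with a good/bad partition of $\tilde{Z}$---and the two subtleties you flag (the boundary reading of $\tilde{Z}$ and the non-negative-reward condition needed for initial membership in $\Zcandidate$) are genuine ones that the paper's own write-up glosses over.
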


\begin{proof}
Informally, we will show that in each iteration of the algorithm, after the first inner while block, we have $\tilde{Z} \subset \hatZsafe \cup \Zreachable$; and using this, we will show that after the second inner while block, we have $\tilde{Z} \subset \hatZsafe \cup \Zreturnable$; and finally, because of this, after the third inner while block, $\tilde{Z} \subset \hatZsafe \cup \Zclosed$. Below, we prove these three statements, and finally wrap them up to prove the main claim.

\subparagraph{Proof for $\tilde{Z} \subset \hatZsafe \cup \Zreachable$.} 
Assume there exists $(\tilde{s},\tilde{a})$ as specified in the lemma statement. Then, formally, we will show that if $\tilde{Z} \subset \hatZsafe \cup \Zcandidate$ before the beginning of the first inner while loop, then $\tilde{Z} \subset \hatZsafe \cup \Zreachable$ at the end of the loop. 

We will prove this by contradiction. Assume for the sake of contradiction that there exists a non-empty $\tildeZbad \subset \tilde{Z}$ such that $\tildeZbad$ has no intersection with $\hatZsafe \cup \Zreachable$.
Let $\tildeZgood = \tilde{Z} \setminus \tildeZbad$.
First we note why $\tildeZgood$ is non-empty.
Since $\tilde{Z} \setminus \hatZsafe \subset \Zcandidate$ (from our initial assumption),  since $(\tilde{s}, \tilde{a}) \in \tilde{Z}\setminus \hatZsafe$,  and since $\tilde{s} \in \hatZsafe$ (from lemma statement), 
when initializing $\Zreachable$ with  $\{(s,a) \in \Zcandidate: s \in \hatZsafe\}$,  we would add $(\tilde{s}, \tilde{a})$ to $\Zreachable$.  Thus, $\tildeZgood$ must contain at least $(\tilde{s},\tilde{a})$.

Next, we argue that there must exist some $(s',a') \in \tilde{Z}_{\text{bad}}$ such that there exists an $(s,a) \in \tilde{Z}_{\text{good}}$ for which $T(s,a,s') > 0$ and $s \notin \hatZsafe$. If not, then starting from $(\tilde{s},\tilde{a}) \in \tilde{Z}_{\text{good}}$, the agent can never hope to reach $\tildeZbad$ before visiting a state $\hatZsafe$, contradicting the fact that $\tildeZbad$ is a subset of state-action pairs that it visits before reaching $\hatZsafe$.

Now, consider such an $(s',a')$ (which was not added to $\Zreachable$) and its predecessor $(s,a)$, which belongs to  $\hatZsafe \cup \Zreachable$ because it belongs to $\tildeZgood$. In some iteration of this while loop,
we must have examined $(s,a) \in \hatZsafe \cup \Zreachable$. Since $s \notin \hatZsafe$, we also have $(s,a) \notin \hatZsafe$. Since $(s,a) \in \tildeZgood$, and $\tildeZgood \subset \tilde{Z}$, this further means that $(s,a) \in \tilde{Z} \setminus \hatZsafe$. From our lemma statement, we then have that the confidence interval of $(s,a)$ is less than $\tau/2$. Hence $\del{\epsilon}_T(s,a) < \tau/2$. Then, from Lemma~\ref{lem:del-support}, since $T(s,a,s') > 0$, we have that $\del{T}(s,a,s') > 0$.
Then, since $(s',a') \in \Zcandidate$, we would have added $(s',a')$ to $\Zreachable$ in this iteration, contradicting the fact that 
we never added it to $\Zreachable$ in the first place. Thus, $\tildeZbad$ must be empty, implying that $\tilde{Z} \subset \hatZsafe \cup \Zreachable$.

\subparagraph{Proof for $\tilde{Z} \subset \hatZsafe \cup \Zreturnable$.} Formally, we will show that if $\tilde{Z} \subset \hatZsafe \cup \Zreachable$ before the beginning of the second while block, then $\tilde{Z} \subset \hatZsafe \cup \Zreturnable$ at the end of the block. 

At the end of the second block, let us define $\tildeZbad := \tilde{Z} \setminus (\hatZsafe \cup \Zreturnable)$. We want to show that $\tildeZbad$ is empty, but assume on the contrary it is not. First we argue that there must exist $(s,a) \in \tildeZbad$ such that one of its next states belongs to $\hatZsafe \cup \Zreturnable$. If this was not the case, then whenever the agent enters $\tildeZbad$, it will never be able to return to $\hatZsafe$.  This is however in contradiction to the definition of $\tilde{Z}$.

For the rest of the discussion, consider such an $(s,a) \in \tildeZbad$ such that $\exists (s',a') \in \hatZsafe \cup \Zreturnable$ for which $T(s,a,s') > 0$. Since $(s,a) \in \tildeZbad$ and $\Zbad \subset \tilde{Z} \setminus \hatZsafe$, it means that $(s,a) \in \tilde{Z} \setminus \hatZsafe$. Then,
from our lemma statement we have $\del{\epsilon}_T(s,a) \leq \tau/2$. 
Now, from Lemma~\ref{lem:del-support}, we have that since the confidence intervals are admissible and since $T(s,a,s') > 0$, it must be the case that $\del{T}(s,a,s') > 0$. Then, in the iteration of the while loop during which $(s',a')$ is present in $\hatZsafe \cup \Zreturnable$, $(s,a)$ would in fact be added to $\Zreturnable$. This, however, contradicts our assumption that $(s,a) \in \tildeZbad$. Therefore, $\tildeZbad$ should in fact be empty. Thus, $\tilde{Z} \subset \hatZsafe \cup \Zreturnable$ at the end of this while block.

\subparagraph{Proof for $\tilde{Z} \subset \hatZsafe \cup \Zclosed$.} Formally, we will show that if $\tilde{Z} \subset \hatZsafe \cup \Zreturnable$ before the beginning of the third while block, then $\tilde{Z} \subset \hatZsafe \cup \Zclosed$ at the end of the block. 

Assume on the contrary that there exists $(s,a) \in \tilde{Z}$  such that $(s,a) \notin \hatZsafe \cup \Zclosed$ at the end of the third block. Since $\Zclosed$ is initialized with all of $\tilde{Z}$ contained in it,  consider the \textit{first} such $(s,a)$ that is removed from $\Zclosed$ during the course of this second inner iteration. Now, just before the moment at which $(s,a)$ is removed, 
by design of the algorithm, we would have that there exists $s'$ such that $\del{T}(s,a,s') > 0$ and $s' \notin \hatZsafe \cup \Zclosed$. Note that at this point, we also still have $\tilde{Z} \subset \hatZsafe \cup \Zclosed$. Therefore, this means that $s' \notin Z$.
Now, we have that $(s,a) \in \tilde{Z} \setminus \hatZsafe$, which means, by the lemma statement, $\del{\epsilon}_T(s,a) \leq \tau/2$. Then,
since $\del{T}(s,a,s') > 0$, from Lemma~\ref{lem:del-support}, we have $T(s,a,s') > 0$. However, this contradicts the fact, if $(s,a) \in \tilde{Z}$, then every next state of $(s,a)$ must lie in $\tilde{Z}$. Thus, no $(s,a)$ belonging to $\tilde{Z}$ must have been removed from $\hatZsafe \cup \Zclosed$ during this while block, implying that $\tilde{Z} \subset \hatZsafe \cup \Zclosed$ at the end of this block.

\subparagraph{Proof of main claim.} From the above arguments, we have that whenever the outer iteration begins with $\tilde{Z} \subset \Zcandidate \cup \hatZsafe$ it ends with $\tilde{Z} \subset \Zcandidate \cup \Zclosed$.  
Now,  at the beginning of Algorithm~\ref{alg:compute_safe_set}, we must have $\tilde{Z} \setminus \hatZsafe \subset \Zcandidate$ due to the fact that all elements of $\tilde{Z}$ have confidence intervals at most $\tau/2$. In other words, $\tilde{Z} \subset \Zcandidate \cup \hatZsafe$. 
Then, from the above arguments, we have that $\tilde{Z} \subset \hatZsafe \cup \Zclosed$ at the end of the first iteration. Since $\Zcandidate$ at the beginning of the second outer iteration is equal to $\Zclosed$ from the end of the previous outer iteration, we again have $\tilde{Z} \subset \Zcandidate \cup \hatZsafe$ at the beginning of the second iteration. Thus, by a similar argument we have that the algorithm preserves the condition that $\tilde{Z} \subset \Zclosed \cup \hatZsafe$ at the end of every outer iteration, proving our main claim.

\end{proof}
\subsection{Proofs about Algorithm~\ref{alg:z_explore} and Algorithm~\ref{alg:z_goal}}

In the next few lemmas, we prove results about Algorithm~\ref{alg:z_explore} and Algorithm~\ref{alg:z_goal}. Recall that Algorithm~\ref{alg:z_explore} takes as input a set of edge state-action pairs (which are state-action pairs that do not belong to $\hatZsafe$ but whose state belongs to $\hatZsafe$) and outputs a set of elements from $\hatZsafe$ which need to be explored in order to learn the return paths from the edges. Also recall that the idea of Algorithm~\ref{alg:z_goal} is to return an updated $\optZgoal$ (the set of states in the optimistic goal path) and $\Zexplore$ (by making a call to Algorithm~\ref{alg:z_explore}).

In Lemma~\ref{lem:z_explore_terminates} we demonstrate that Algorithm~\ref{alg:z_explore} terminates in polynomial time.

Lemma~\ref{lem:z_explore_K} helps establish that running Algorithm~\ref{alg:z_explore} allows the agent progress without getting stuck. More concretely, in Lemma~\ref{lem:z_explore_K} we argue that the elements of $\Zexplore$ do not belong to $K$. That is, the elements of $\Zexplore$ are those that have not already been explored (as otherwise, the agent may be stuck perpetually in exploring what has already been explored).

In Lemma~\ref{lem:correctness_z_explore}, we establish that Algorithm~\ref{alg:z_explore} works correctly (and hence, so does
Algorithm~\ref{alg:z_goal}). In particular, we show that
Algorithm~\ref{alg:z_explore} does not terminate with an empty $\Zexplore$ when some of the edge state-action pairs of $\hatZsafe$ are indeed safe. If we did not have this guarantee, then it's possible that even though there are some edge state-action pairs are safe, our agent may be stuck without exploring any state-action pair within $\hatZsafe$. On the other hand, with this guarantee, we can be confident that our agent will explore to learn the return path of such edge states, and then establish their safety, using which it can then expand $\hatZsafe$ in the future.  As a corollary of this guarantee, we show that Algorithm~\ref{alg:z_goal} always ensures that $\hatZunsafe$ contains no element that actually belongs to $\Zsafe$.

Finally, in Lemma~\ref{lem:correctness_z_goal}, we prove that Algorithm~\ref{alg:z_goal} terminates in polynomial time, guaranteeing that either $\Zexplore$ is non-empty (which means the agent can explore $\hatZsafe$ to learn a return path and expand $\hatZsafe$) or that $\optZgoal$ has been updated in a way that $\optZgoal \subset \hatZsafe$ (which means that the agent can stop exploring, and instead, start exploiting).



Below, we prove our result about the run-time complexity of Algorithm \ref{alg:z_explore}.

\begin{lemma} \label{lem:z_explore_terminates}
 Algorithm \ref{alg:z_explore} terminates in $\text{poly}(|S|, |A|)$ time.
\end{lemma}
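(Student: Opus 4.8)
The plan is to bound separately (i) the number of executed iterations of the outer while loop and (ii) the work performed inside a single iteration, and then to multiply the two bounds. Both factors will turn out to be polynomial in $|S|$ and $|A|$. The algorithm is a breadth-first search whose frontier sets $\Znext^0, \Znext^1, \ldots$ are each subsets of $S \times A$, so each has at most $|S|\,|A|$ elements. For the work within one iteration: the inner for loop ranges over $\Znext^L$, and for each $(s,a)$ it either scans all $(\tilde{s},\tilde{a}) \in \hatZsafe$ collecting those with $\Delta((s,a),(\tilde{s},\tilde{a})) < \tau/4$, or scans the at most $|S|\,|A|$ candidate next state-actions $(s',a')$ with $\del{T}(s,a,s') > 0$ and removes those lying in $\Zreturn \cup \hatZsafe \cup \hatZunsafe$. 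Since $\Delta$ and $\del{T}$ are given and every set involved has size at most $|S|\,|A|$, one full iteration costs $\mathrm{poly}(|S|,|A|)$ time.

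The substance of the proof is bounding the iteration count. The key monotonicity fact is that $\Zreturn$ only grows: every pair that is processed (i.e., that appears in some frontier $\Znext^L$) is added to $\Zreturn$ at the top of the loop body, and a pair is ever inserted into a later frontier only if it currently lies outside $\Zreturn$. I would use this to show each state-action pair belongs to at most two of the frontier sets. Once $(s,a)$ has been processed in iteration $L$, it resides permanently in $\Zreturn$; hence when any later frontier $\Znext^{L'}$ with $L' > L+1$ is constructed during iteration $L'-1 > L$, the set difference with $\Zreturn$ excludes $(s,a)$. The only way $(s,a)$ can reappear is to be enqueued into $\Znext^{L+1}$ during the very iteration $L$ in which it is processed, which occurs when a predecessor appearing earlier in the for-loop ordering inserts $(s,a)$ before $(s,a)$ itself is reached and placed in $\Zreturn$. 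Thus each pair appears in at most two consecutive frontiers.

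Consequently the total number of processing events $\sum_L |\Znext^L|$ is at most $2|S|\,|A|$. Because the while condition requires $\Znext^L \neq \emptyset$, every executed iteration processes at least one element, so the number of iterations is itself at most $2|S|\,|A|$; combining this with the per-iteration bound yields the claimed $\mathrm{poly}(|S|,|A|)$ running time. The one step demanding care — and the one I expect to be the main obstacle — is precisely the ``at most two frontiers'' claim: because a pair enters $\Zreturn$ only when it is \emph{processed} rather than when it is first \emph{enqueued}, a careless BFS argument would leave open the possibility of a pair re-entering the frontier indefinitely. Nailing down that the in-loop update order forces at most a single extra appearance, and never more, is the crux, while the remaining bookkeeping is routine counting.
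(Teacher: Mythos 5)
Your proof is correct and follows essentially the same route as the paper's: both arguments bound the number of while-loop iterations through the monotone growth of $\Zreturn$, and both observe that the work inside a single iteration is polynomial. The difference is in the accounting, and here your version is actually the more careful one. The paper asserts that every execution of the inner for loop strictly increases $|\Zreturn|$, giving a bound of $|S|\,|A|$ iterations; as you point out, this is not literally guaranteed, since a pair $(s',a')$ lying in $\Znext^L$ can be enqueued into $\Znext^{L+1}$ by a predecessor processed earlier in the same for loop, before $(s',a')$ itself enters $\Zreturn$ --- so an iteration whose frontier consists entirely of such already-returned pairs adds nothing new to $\Zreturn$. Your ``each pair appears in at most two consecutive frontiers'' claim repairs exactly this corner case: once $(s,a)$ is processed it sits permanently in $\Zreturn$, so any insertion attempted in a strictly later iteration is blocked by the set difference with $\Zreturn$, leaving at most one extra appearance. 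This yields the slightly weaker but still polynomial bound of $2|S|\,|A|$ processing events, hence at most $2|S|\,|A|$ iterations. In short, your proof matches the paper's approach while tightening the one step where the paper's own argument is loose.
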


\begin{proof}
Recall that Algorithm~\ref{alg:z_explore} executes an iteration of a while loop whenever $Z^L_{\text{next}} \neq \emptyset$ and $\Zexplore = \emptyset$. Also recall that inside the while loop, the algorithm executes a for loop that iterates over all elements of $Z^L_{\text{next}}$.
Hence, during the while loop, since $Z^L_{\text{next}}$ is non-empty, we must also execute at least one iteration of the inner for loop. Now, note that, by design of the algorithm, $\Znext^L$ is populated only with elements that do not belong to $\Zreturn$. Since the for loop adds all these elements to $\Zreturn$, every call to the for loop corresponds to increasing the cardinality of $\Zreturn$ by at least one. Thus, there can be at most as many executions of the for loop as there are state-action pairs, $|S| \times |A|$. By extension, the while loop can be executed at most $|S| \times |A|$ times, after which it should terminate.
\end{proof}

Next, we show that the state-action pairs marked for exploration by Algorithm~\ref{alg:z_explore} have not already been explored well before.

\begin{lemma} \label{lem:z_explore_K}
 Algorithm \ref{alg:z_explore} returns $\Zexplore$ such that for every $(\tilde{s},\tilde{a}) \in \Zexplore$, $(\tilde{s}, \tilde{a}) \in K^c$ where $K = \{(s, a) \in S \times A : n(s, a) \geq m \}$ when $m \geq O\left( \frac{|S|}{\tau^2} + \frac{1}{\tau^2} \ln \frac{|S||A|}{\tau^2 \delta}\right)$.
\end{lemma}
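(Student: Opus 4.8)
The plan is to argue by contradiction, exploiting the fact that the \emph{only} place Algorithm~\ref{alg:z_explore} ever inserts into $\Zexplore$ is the single line executed inside the branch guarded by $\del{\epsilon}_T(s,a) > \tau/2$, where it adds every $(\tilde{s},\tilde{a}) \in \hatZsafe$ with $\Delta((s,a),(\tilde{s},\tilde{a})) < \tau/4$. So suppose, for contradiction, that some $(\tilde{s},\tilde{a}) \in \Zexplore \cap K$. I would fix the pair $(s,a) \in \Znext^L$ whose processing caused $(\tilde{s},\tilde{a})$ to be added; by the guard of that branch we then have both $\del{\epsilon}_T(s,a) > \tau/2$ and $\Delta((s,a),(\tilde{s},\tilde{a})) < \tau/4$. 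The goal is to contradict the first inequality.

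The key step is the ``transfer'' rule of Algorithm~\ref{alg:tighter_ci}, which sets $\del{\epsilon}_T(s,a)$ to the minimum of $\hat{\epsilon}_T(s,a)$ and $\min_{\tilde{s},\tilde{a}}[\hat{\epsilon}_T(\tilde{s},\tilde{a}) + \Delta((s,a),(\tilde{s},\tilde{a}))]$; in particular $\del{\epsilon}_T(s,a) \leq \hat{\epsilon}_T(\tilde{s},\tilde{a}) + \Delta((s,a),(\tilde{s},\tilde{a}))$ for our chosen $(\tilde{s},\tilde{a})$. Since $(\tilde{s},\tilde{a}) \in K$ means $n(\tilde{s},\tilde{a}) \geq m$, I would plug this count into the Hoeffding width of Eq~\ref{eq:eps_T} and show that the stated $m$ forces $\hat{\epsilon}_T(\tilde{s},\tilde{a}) \leq \tau/4$. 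Combining these with $\Delta((s,a),(\tilde{s},\tilde{a})) < \tau/4$ yields $\del{\epsilon}_T(s,a) \leq \hat{\epsilon}_T(\tilde{s},\tilde{a}) + \Delta((s,a),(\tilde{s},\tilde{a})) < \tau/4 + \tau/4 = \tau/2$, contradicting $\del{\epsilon}_T(s,a) > \tau/2$. Hence $\Zexplore \cap K = \emptyset$, which is exactly the claim.

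The only genuinely delicate part is the quantitative bound in that second step, namely choosing $m$ so that $\hat{\epsilon}_T(\tilde{s},\tilde{a}) = \sqrt{2[\ln(2^{|S|}-2) - \ln \delta_T]/n(\tilde{s},\tilde{a})} \leq \tau/4$ whenever $n(\tilde{s},\tilde{a}) \geq m$. Rearranging, this asks for $32[\ln(2^{|S|}-2) - \ln\delta_T]/\tau^2 \leq m$, which is complicated by the fact that $\delta_T = \delta/(2|S||A|m)$ itself depends on $m$, making the requirement implicit. Since $-\ln\delta_T = \ln(2|S||A|m/\delta)$ grows only logarithmically in $m$ while the left side grows linearly, a standard fixed-point argument shows the inequality holds for $m = O((|S| + \ln(|S||A|/(\tau^2\delta)))/\tau^2)$: the summand $|S|/\tau^2$ absorbs $\ln(2^{|S|}-2) = O(|S|)$, and the residual $\ln m \approx \ln(|S|/\tau^2)$ is folded into the logarithm, reproducing the stated bound. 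This routine log-versus-linear balancing is the main (indeed the only) obstacle; everything else is structural bookkeeping about which branch of Algorithm~\ref{alg:z_explore} populates $\Zexplore$.
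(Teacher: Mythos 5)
Your proof is correct and follows essentially the same route as the paper's: a contradiction argument combining the guard $\del{\epsilon}_T(s,a) > \tau/2$, the transfer inequality $\del{\epsilon}_T(s,a) \leq \hat{\epsilon}_T(\tilde{s},\tilde{a}) + \Delta((s,a),(\tilde{s},\tilde{a}))$ from Algorithm~\ref{alg:tighter_ci}, and the bound $\hat{\epsilon}_T(\tilde{s},\tilde{a}) \leq \tau/4$ forced by the choice of $m$ (the paper invokes Lemma~\ref{lem:size_of_m} where you unwind the Hoeffding width of Eq~\ref{eq:eps_T} directly, including the implicit dependence of $\delta_T$ on $m$). If anything, your constants are tidier than the paper's own write-up, which loosely writes $\Delta((s,a),(\tilde{s},\tilde{a})) < \tau/2$ and concludes $\del{\epsilon}_T(s,a) \leq \tau/4$, whereas the algorithm's actual threshold is $\tau/4$ and the correct conclusion is $\del{\epsilon}_T(s,a) < \tau/2$, exactly as you state.
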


\begin{proof}
  Assume on the contrary that there exists $(\tilde{s},\tilde{a}) \in \Zexplore$ such that $(\tilde{s}, \tilde{a}) \in K$. By design of  Algorithm \ref{alg:z_explore}, we have that there exists $(s,a) \notin \hatZsafe$ which resulted in the addition of $(\tilde{s}, \tilde{a})$ to $\Zexplore$. In particular, we would have that $\del{\epsilon}_T(s,a) > \tau/2$ and $\Delta((s,a), (\tilde{s}, \tilde{a})) < \tau/2$. Since $(\tilde{s}, \tilde{a}) \in K$, we also have $\hat{\epsilon}_T(\tilde{s},\tilde{a}) \leq \tau/4$ (and this follows from Lemma~\ref{lem:size_of_m}). Then, we would have $\del{\epsilon}_T(s,a) \leq \hat{\epsilon}_T(\tilde{s},\tilde{a})+ \Delta((s,a), (\tilde{s}, \tilde{a}))$, implying $\del{\epsilon}_T(s,a) \leq \tau/4$ which is a contradiction. Thus, the above claim is correct.
\end{proof}

Below, we show that as long as there is a edge to $\hatZsafe$ with a safe return path to $\hatZsafe$, 
Algorithm \ref{alg:z_explore} will return with a non-empty $\Zexplore$.

\begin{lemma} \label{lem:correctness_z_explore}
Assume our confidence intervals are admissible. During a run of Algorithm~\ref{alg:ase}, at the end of every call to Algorithm \ref{alg:z_explore}, 
we will have $\Zexplore = \emptyset$ only if $\forall \ (\tilde{s}, \tilde{a}) \in Z_{\text{edge}}$, $(\tilde{s},\tilde{a}) \notin \Zsafe$. As a corollary of this, we always have that $\hatZunsafe \cap \Zsafe = \emptyset$.
\end{lemma}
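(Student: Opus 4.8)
The plan is to prove the main statement in its contrapositive form: assuming there is a genuinely safe edge $(\tilde{s},\tilde{a}) \in Z_{\text{edge}} \cap \Zsafe$, I will show that the call to Algorithm~\ref{alg:z_explore} must terminate with a non-empty $\Zexplore$. I would carry out this argument under the inductive invariant that $\hatZunsafe \cap \Zsafe = \emptyset$ holds at the start of the current call; this invariant is what the corollary asserts, and I would establish it simultaneously with the main statement by induction on the number of times $\hatZunsafe$ is modified. The invariant is needed inside the main argument because the breadth-first search in Algorithm~\ref{alg:z_explore} prunes away any successor lying in $\hatZunsafe$, and I must guarantee that no pair on the true return path is ever pruned for this reason.

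First I would observe that $\hatZsafe$ is a safe set containing $Z_0$ (Lemma~\ref{lem:hatZsafe_safe}), so Assumption~\ref{as:sim} applies with $Z = \hatZsafe$: there is a return policy $\pireturn \in \Pi(\Zsafe)$ that reaches $\hatZsafe$ with probability $1$ from $(\tilde{s},\tilde{a})$, and every $(s,a)$ in the induced return set $\tilde{Z}$ with $(s,a)\notin\hatZsafe$ has an analogous pair in $\hatZsafe$ at distance at most $\tau/4$. Since $\pireturn\in\Pi(\Zsafe)$ and $\Zsafe$ is closed (Fact~\ref{fact:Zsafe}), Fact~\ref{fact:closed_policy} gives $\tilde{Z}\subset\Zsafe$, which together with the invariant yields $\tilde{Z}\cap\hatZunsafe=\emptyset$. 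I then split on whether every pair of $\tilde{Z}\setminus\hatZsafe$ has a tight interval ($\del{\epsilon}_T < \tau/2$). If they all do, then by the completeness guarantee of the safe-set expansion (Lemma~\ref{lem:safe_set_large_enough}, invoked on the preceding call to Algorithm~\ref{alg:compute_safe_set}, whose output $\hatZsafe$ is a fixed point of the expansion), the entire return set — including the edge $(\tilde{s},\tilde{a})$ — would already lie inside $\hatZsafe$, contradicting that $(\tilde{s},\tilde{a})$ is an edge. Hence at least one pair of $\tilde{Z}\setminus\hatZsafe$ must have a loose interval.

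The crux is then to show the BFS actually reaches such a loose pair. I would fix a positive-probability return trajectory $(\tilde{s},\tilde{a})=(s_0,a_0),(s_1,a_1),\dots$ that passes through a loose pair, and let $(s_j,a_j)$ be the first loose pair along it, so each earlier $(s_i,a_i)$, $i<j$, lies in $\tilde{Z}\setminus\hatZsafe$ and is tight. By induction on $i$ I would show every $(s_i,a_i)$ is processed by the BFS (added to $\Zreturn$): the base case is $(s_0,a_0)\in\Znext^0 = Z_{\text{edge}}$; for the step, a processed tight pair $(s_i,a_i)$ has, by Lemma~\ref{lem:del-support}, $\del{T}$-support equal to its true support, so the genuine successor $s_{i+1}$ is enumerated and $(s_{i+1},a_{i+1})$ — which is neither in $\hatZsafe$ nor, by the invariant, in $\hatZunsafe$ — is fed to the next BFS level (if it is already in $\Zreturn$ it was processed earlier, which only helps). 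Upon reaching $(s_j,a_j)$, which is loose and in $\tilde{Z}\setminus\hatZsafe$, the algorithm adds its analogous $\hatZsafe$-pair (guaranteed by Assumption~\ref{as:sim}) to $\Zexplore$, so $\Zexplore\neq\emptyset$. If $\Zexplore$ turns non-empty earlier, the conclusion holds immediately; otherwise the BFS cannot stall before $(s_j,a_j)$, precisely because each intermediate tight pair supplies the next level.

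Finally, the corollary follows by closing the induction: $\hatZunsafe$ is initialized to $\{(s,a):R(s,a)<0\}$, which is disjoint from the (non-negative-reward) set $\Zsafe$, and the sole modification appends $Z_{\text{edge}}$ only when $\Zexplore=\emptyset$, which by the main statement contains no element of $\Zsafe$ — so the invariant is preserved and the mutual induction closes. The step I expect to be the main obstacle is the Case~2 reachability argument: since the BFS over-approximates the return set (branching over all actions rather than only $\pireturn$'s), I must argue carefully that it nonetheless tracks a true return trajectory up to its first loose pair, which rests entirely on the exact-support recovery of Lemma~\ref{lem:del-support} at every tight pair and on the invariant preventing $\hatZunsafe$ from severing the trajectory. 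A secondary technicality to reconcile is the strict-versus-nonstrict threshold ($\Delta<\tau/4$ in Algorithm~\ref{alg:z_explore} versus $\Delta\le\tau/4$ in Assumption~\ref{as:sim}).
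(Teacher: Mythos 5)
Your proposal is correct and follows essentially the same route as the paper's proof: the same mutual induction on the invariant $\hatZunsafe \cap \Zsafe = \emptyset$, the same application of Assumption~\ref{as:sim} to obtain $\pireturn$ and $\tilde{Z}$, the same tight/loose partition of $\tilde{Z}\setminus\hatZsafe$ with Lemma~\ref{lem:safe_set_large_enough} forcing a loose pair, and the same trajectory-tracking induction through the BFS via Lemma~\ref{lem:del-support}, ending when the first loose pair forces an analogous element of $\hatZsafe$ into $\Zexplore$. The strict-versus-nonstrict threshold mismatch you flag is a genuine (minor) inconsistency in the paper's own algorithm statement, which its proof also glosses over, so it does not distinguish your argument from theirs.
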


\begin{proof}
We will prove this by induction. Specifically, since Algorithm~\ref{alg:z_goal} is the only block where we call Algorithm \ref{alg:z_explore} and modify $\hatZunsafe$,
we will consider a particular iteration of the while loop in Algorithm~\ref{alg:z_goal}. Then, we will assume that in all the previous iterations of this while loop, when 
Algorithm \ref{alg:z_explore} was called, it satisfied the above guarantee. Additionally, we will assume that $\hatZunsafe$ satisfies $\hatZunsafe \cap \Zsafe = \emptyset$ in the beginning of this loop. Then, we will show that the guarantee about Algorithm \ref{alg:z_explore} is satisfied even when we call it in this loop, and that, by the end of this loop, $\hatZunsafe$ continues to satisfy $\hatZunsafe \cap \Zsafe = \emptyset$.

As the base case, in the first iteration, since we have never called Algorithm~\ref{alg:z_explore} before, the induction hypothesis about Algorithm~\ref{alg:z_explore} is trivially satisfied. Furthermore, since $\hatZunsafe$ is initialized to be empty set, we again trivially have $\hatZunsafe \cap \Zsafe = \emptyset$ in the beginning of this loop. 

Now, consider any arbitrary iteration of the while loop of Algorithm~\ref{alg:z_goal}. Through the next few paragraphs below, we will argue why the call to Algorithm \ref{alg:z_explore} in this loop, satisfies the above specified guarantee. In the final paragraph, we provide a simple argument showing why $\hatZunsafe \cap \Zsafe = \emptyset$  at the end of the loop.

\subparagraph{Proof for claim about Algorithm \ref{alg:z_explore}.}
Assume that during this particular call to Algorithm \ref{alg:z_explore}, there exists $(\tilde{s}, \tilde{a}) \in Z_{\text{edge}}$ such that $(\tilde{s}, \tilde{a}) \in \Zsafe$. 
To prove the above claim, we only need to show that in this case Algorithm \ref{alg:z_explore} will result in a non-empty $\Zexplore$. So, for the sake of contradiction, we will assume that $\Zexplore$ is empty after the execution of Algorithm \ref{alg:z_explore}. 

The outline of our idea is to make use of the fact that by Assumption~\ref{as:sim} we are guaranteed a return path from $(\tilde{s},\tilde{a})$ that is sufficiently similar to state-action pairs in $\hatZsafe$. We will then show that we can pick a particular trajectory from this return path which visits a `bad' state-action pair -- a state-action pair whose counterpart in $\hatZsafe$ has not been explored sufficiently. Then, under the assumption that $\Zexplore$ remains empty, we will argue that Algorithm \ref{alg:z_explore} will visit all the state-action pairs in this trajectory, and finally, when it encounters the bad state-action pair, the algorithm will add the counterpart of this bad pair to $\Zexplore$, reaching a contradiction.


First, let us apply Assumption~\ref{as:sim} to $(\tilde{s}, \tilde{a})$ which guarantees a return path from $(\tilde{s},\tilde{a})$ because it is an edge state-action i.e., $\tilde{s} \in \hatZsafe$ and $(\tilde{s}, \tilde{a}) \notin \hatZsafe$. But to apply this assumption, we must establish that $Z_0 \subset \hatZsafe$. This is indeed true as it follows from how Algorithm~\ref{alg:ase} initializes $\hatZsafe$ with $Z_0$ and every call to Algorithm~\ref{alg:compute_safe_set} only adds elements to $\hatZsafe$. 

Now, consider the $\pireturn$ guaranteed by Assumption \ref{as:sim}. Starting from $(s,a)$, 
and following $\pireturn$, the agent returns to $Z_0$ with probability $1$. Furthermore, if we define the set of state-action pairs visited by $\pireturn$ on its way to $\hatZsafe$ as:
\[\tilde{Z} = \{(s,a) \in S \times A : \Pr\left[\left. \exists t \geq 0 \ \substack{(s_t, a_t) = (s,a) \\ \forall t' < t \ s_{t'} \notin \hatZsafe} \ \right| (s_0, a_0) = (\tilde{s}, \tilde{a}), \pireturn\right] > 0 \},\] 
then we are given that every element of $\tilde{Z} \setminus \hatZsafe$ corresponds to an element $(s',a') \in \hatZsafe$ such that $\Delta((s,a), (s',a')) \leq \tau/4$.

To make our discussion easier, let us partition the elements of $\tilde{Z} \setminus \hatZsafe$ into two sets $\tildeZgood$ and $\tildeZbad$ as follows, depending on whether or not the corresponding $(s',a')$ has been explored sufficiently well or not:
\[
\tildeZgood = \{ (s,a) \in \tilde{Z} \setminus \hatZsafe \ | \ \exists (s',a') \in S \times A \text{ s.t. } \Delta((s,a), (s',a')) + \hat{\epsilon}_T(s',a') \leq \tau/2 \}
\]
and
\[
\tildeZbad = \{ (s,a) \in \tilde{Z} \setminus \hatZsafe \ | \ \forall (s',a') \in S \times A \text{ s.t. } \Delta((s,a), (s',a')) + \hat{\epsilon}_T(s',a') > \tau/2 \}.
\]
Note that every element of $\tilde{Z} \setminus \hatZsafe$ belongs to exactly one of $\tildeZgood$ and $\tildeZbad$.  Also note that for all $(s,a) \in \tildeZgood$, $\del{\epsilon}_T(s,a) \leq \tau/2$ since $\Delta((s,a), (s',a')) + \hat{\epsilon}_T(s',a') \leq \tau/2$. However, for all $(s,a) \in \tildeZbad$,  since there exists no sufficiently explored $(s',a')$ that is also sufficiently smaller, $\del{\epsilon}_T(s,a) > \tau/2$. 

Next, we argue that there must exist at least one element in $\tildeZbad$. If this was not the case, then we would have that all elements in $\tilde{Z} \setminus \hatZsafe$ belong to $\tildeZgood$ and therefore have a confidence interval at most $\tau/2$. Then, from Lemma~\ref{lem:safe_set_large_enough},  we will have that when Algorithm~\ref{alg:ase} executed Algorithm~\ref{alg:compute_safe_set} just before calling Algorithm~\ref{alg:z_goal}, $\hatZsafe$ is updated in a way that $\tilde{Z} \subset \hatZsafe$, which would imply that $(\tilde{s},\tilde{a}) \in \hatZsafe$. However, this contradicts our assumption in the beginning that that $(\tilde{s},\tilde{a})$ is an edge state-action pair that does not belong to $\hatZsafe$.

Since $\tildeZbad$ is non-empty, and since $\tildeZbad \subset \tilde{Z}$, there should exist a trajectory of $\pireturn$ starting from $(\tilde{s}, \tilde{a})$ that passes through an element of $\tildeZbad$ before visiting $\hatZsafe$. Let $(s_0,a_0), (s_1, a_1), \hdots, (s_n,a_n)$ be one such trajectory, where $(s_0, a_0) = (\tilde{s}, \tilde{a})$. Let $(s_n, a_n)$ be the first element in this trajectory that belongs to $\tildeZbad$. Since all the elements in this trajectory preceding $(s_n, a_n)$ belong to $\tilde{Z}$ but not $\hatZsafe$ or $\tildeZbad$, all these elements must belong to $\tildeZgood$. 

We will now use the fact that, under our initial assumption, Algorithm~\ref{alg:z_explore} returns with an empty $\Zexplore$ to argue by induction that, during that run of Algorithm~\ref{alg:z_explore}, all state-action pairs in this trajectory up until and including $(s_n, a_n)$ are added to $\Zreturn$.
(After this, we will reach a contradiction).

For the base case,  consider $(s,a)$. When the while loop condition is executed the first time, $\Zexplore = \emptyset$ by initialization, and $\Znext^L \neq \emptyset$ because it is equal to $Z_{\text{edge}}$ which has at least one state-action pair, namely $(s,a)$. Thus, the while loop will be executed, and every element in $\Znext^L$ will be added to $\Zreturn$. Since $\Znext^L = \Zedge$ at this point, this implies that $(s,a)$ will be added to $\Zreturn$.

Next, for some $i \in [1,n]$, assume by induction that all state-action pairs preceding $(s_k, a_k)$, where $k < i$, have been added to $\Zreturn$; we must prove the same happens to $(s_i, a_i)$. 
Consider the loop when $(s_{i-1}, a_{i-1})$ is examined and added to $\Zreturn$. 
Since $(s_{i-1}, a_{i-1}) \in \tildeZgood$, its confidence interval is at most as large as $\tau/2$; thus, in this loop, we would execute the else branch of the if-condition. As a result of this, we can argue that $(s_i, a_i)$ is added to $\Znext^{L+1}$ from the following four observations. 

First, since the considered trajectory has non-zero probability,  we have  $T(s_{i-1},a_{i-1}, s_i) > 0$. Furthermore, since $\del{\epsilon}_T(s_{i-1}, a_{i-1}) \leq \tau/2$, and since the confidence intervals are admissible, by Lemma~\ref{lem:del-support}, we have $\del{T}(s_{i-1},a_{i-1}, s_i) > 0$. Second, if $(s_i, a_i)$ was part of $\Zreturn$ at this point, we are already done; so let us consider the case that currently $(s_i, a_i) \notin \Zreturn$. Thirdly, since $(s_i, a_i) \in \tilde{Z} \setminus \hatZsafe$, $(s_i, a_i) \notin \hatZsafe$. Finally, from our induction assumption, we have that $\hatZunsafe \cap \Zsafe = \emptyset$; and since $\pireturn$ is a safe policy, it follows that $(s_i, a_i) \notin \hatZunsafe$.
As a result of these four observations, in this else branch, we would add $(s_i, a_i)$ to $\Znext^{L}$. 

Now, consider the instant when the algorithm evaluates the while condition after exiting the for loop that examined $(s_{i-1}, a_{i-1})$. At this point, by our initial assumption, $\Zexplore$ is still empty while $\Znext^L$ is not as it contains $(s_i, a_i)$.
Thus the algorithm would proceed with executing this while loop (as against exiting from it then). Now since $(s_i, a_i) \in \Znext^L$, inside the inner for loop, there must be an iteration when $(s_i, a_i)$ is examined and added to $\Zreturn$, proving our induction statement.

Thus, consider the for loop iteration when $(s_n, a_n)$ is added to $\Zreturn$. Since $(s_n, a_n) \in \tildeZbad$, $\del{\epsilon}_T(s_n, a_n) > \tau/2$. Hence, we would enter the if-branch of the if-else condition. Again, since  $(s_n, a_n) \in \tilde{Z} \setminus \hatZsafe$, from Assumption~\ref{as:sim}, we know there must exist $(s,a) \in \hatZsafe$ such that $\Delta((s,a), (s_n,a_n)) \leq \tau/4$.
As a result, $(s,a)$ will be added to $\Zexplore$ contradicting the fact that Algorithm~\ref{alg:z_explore} exited the while loop without adding any element to $\Zexplore$. Thus our initial assumption must be wrong, proving the main claim about Algorithm \ref{alg:z_explore}.

\subparagraph{Proof for $\hatZunsafe \cap \Zsafe =\emptyset$.}

From the induction assumption, we have that $\hatZunsafe \cap \Zsafe =\emptyset$ in the beginning of this while loop. During this loop, $\hatZunsafe$ is modified by Algorithm~\ref{alg:z_goal} only when the call to Algorithm~\ref{alg:z_explore} returns with an empty $\Zexplore$. In such a case, $\hatZunsafe$ is modified by adding $\Zedge$ to it. Fortunately, from the above discussion, we know that when $\Zexplore$ is empty, $\Zedge$ contains no element from $\Zsafe$. Thus, $\hatZunsafe \cap \Zsafe = \emptyset$ even at the end of the while loop.

\end{proof}


Finally, we show that Algorithm~\ref{alg:z_goal} terminates in finite time ensuring that $\optZgoal$ has been updated in a way that all of it has been established to be safe, or $\Zexplore$ is non-empty. 

\begin{lemma} \label{lem:correctness_z_goal}
Algorithm \ref{alg:z_goal} terminates in $\text{poly}(|S|, |A|)$ time, after which either $\optZgoal \subset \hatZsafe$ or $\Zexplore \neq \emptyset$.
\end{lemma}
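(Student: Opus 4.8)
The plan is to prove termination and the postcondition separately, since the loop in Algorithm~\ref{alg:z_goal} exits only through one of its two \textbf{break} statements: the first yields $\optZgoal \subset \hatZsafe$, and the second (in the \textbf{else} branch) yields $\Zexplore \neq \emptyset$. Thus, once I show the loop executes at most $\text{poly}(|S|,|A|)$ iterations and that each iteration runs in polynomial time, the stated disjunction follows immediately by identifying which \textbf{break} was taken. Each iteration computes $\optpigoal$ by value iteration, $\optZgoal$ via the polynomial-time dynamic program of Algorithm~\ref{alg:compute_z_goal} (horizon $|S|$), $\Zedge$ by a set intersection, and $\Zexplore$ via Algorithm~\ref{alg:z_explore}, which terminates in polynomial time by Lemma~\ref{lem:z_explore_terminates}; so the per-iteration cost is polynomial, and it remains only to bound the iteration count.

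To bound the number of iterations, I would show that every iteration that does \emph{not} break strictly enlarges $\hatZunsafe$. The only such iterations are those with $\Zexplore = \emptyset$, where we add $\Zedge$ to $\hatZunsafe$. Two facts make this addition strict. First, $\Zedge$ is nonempty whenever we reach it: since the iteration did not break at the first test, $\optZgoal \not\subset \hatZsafe$, so some $(s,a) \in \optZgoal \setminus \hatZsafe$ exists. Tracing a positive-probability trajectory of $\optpigoal$ from $\sinit$ to $(s,a)$ under the optimistic transition $\optTgoal$, and letting $(s_j,a_j)$ be its first pair outside $\hatZsafe$, either $j=0$ (so $s_j = \sinit \in \hatZsafe$ by Assumption~\ref{as:z_0}) or $j \geq 1$ with $(s_{j-1},a_{j-1}) \in \hatZsafe$, whence $s_j \in \hatZsafe$ because $\hatZsafe$ is closed under the candidate transition $\optTgoal$ by Corollary~\ref{cor:closed_under_candidate}; in both cases $s_j \in \hatZsafe$ while $(s_j,a_j) \notin \hatZsafe$, so $(s_j,a_j) \in \Zedge$.

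Second, and this is where I expect the main obstacle, I must show $\Zedge$ is disjoint from the current $\hatZunsafe$, so that the addition genuinely grows the set. Since $\Zedge \subset \optZgoal$, it suffices to prove $\optZgoal \cap \hatZunsafe = \emptyset$, i.e.\ the optimistic goal policy never visits the unsafe set with positive probability. I would establish this by a finite-value argument on $\optMgoal$: any policy in $\Pi(\hatZsafe)$ started at $\sinit \in \hatZsafe$ remains in $\hatZsafe$ with probability $1$ (Fact~\ref{fact:closed_policy}, using closedness of $\hatZsafe$ under $\optTgoal$) and therefore accrues only the finite, non-negative rewards of $\hatZsafe$—finite because $\hatZsafe \subset \Zsafe$ (Corollary~\ref{cor:hatZsafe_subset}) together with $\hatZunsafe \cap \Zsafe = \emptyset$ (Lemma~\ref{lem:correctness_z_explore}) give $\hatZsafe \cap \hatZunsafe = \emptyset$, so $\Rgoal = R$ on $\hatZsafe$. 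Hence the optimal value $V^*_{\optMgoal}(\sinit)$ is finite. If $\optpigoal$ had positive probability of reaching a pair in $\hatZunsafe$ at some finite time, the $-\infty$ reward there would force $V^{\optpigoal}_{\optMgoal}(\sinit) = -\infty$, a contradiction; thus $\optZgoal \cap \hatZunsafe = \emptyset$. Combining the two facts, each non-breaking iteration adds at least one genuinely new element to $\hatZunsafe \subset S \times A$, so there are at most $|S||A|+1$ iterations, which yields the polynomial bound and, via the break analysis, the desired postcondition.
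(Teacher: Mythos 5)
Your proof is correct and follows essentially the same route as the paper's: show $\Zedge \neq \emptyset$ whenever $\optZgoal \not\subset \hatZsafe$ (via Corollary~\ref{cor:closed_under_candidate} applied along an optimistic trajectory), show that each non-breaking iteration strictly enlarges $\hatZunsafe$ so the loop runs at most $|S||A|+1$ times, and read off the postcondition from the two break statements. The one place you go beyond the paper is the disjointness $\optZgoal \cap \hatZunsafe = \emptyset$: the paper merely asserts that $\optZgoal$ was ``computed in a way that it does not include any of $\hatZunsafe$'' (and even contains a typo, writing $\Zedge \cap \hatZunsafe \neq \emptyset$ where $= \emptyset$ is meant), whereas your finite-value argument on $\optMgoal$ --- a policy in $\Pi(\hatZsafe)$ stays in $\hatZsafe$ under any candidate transition and earns finite non-negative reward, so the optimistic policy cannot afford positive probability of hitting a $-\infty$ reward --- supplies the justification the paper leaves implicit.
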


\begin{proof}
First, we show that, in Algorithm \ref{alg:z_goal}, whenever the condition $\optZgoal \subset \hatZsafe$ fails, the subsequently computed $Z_{\text{edge}}$ is non-empty.  Assume for the sake of contradiction that even though $\optZgoal \not\subset \hatZsafe$, $Z_{\text{edge}}$ is empty. Now, recall that $\optZgoal$ is the set of all state-action pairs visited starting from $s_0$ following $\optpigoal$ in the MDP $\optMgoal$, with transition probabilities $\opt{T}_{\text{goal}}$. 

Then, consider any trajectory $(s_0, a_0), (s_1, a_1), \hdots$ of non-zero probability under this corresponding policy and transition function. 
Since $s_0 \in \hatZsafe$ and $\Zedge$ is empty, $(s_0,a_0) \in \hatZsafe$. 
Then, since $\opt{T}_{\text{goal}} \in CI(\del{T})$, and since the confidence intervals are admissible, we have from Corollary~\ref{cor:closed_under_candidate} that $s_1 \in \hatZsafe$.
Since $\Zedge$ is empty, by a similar argument, we can establish that $(s_1, a_1) \in \hatZsafe$, and so on for all $(s_t, a_t)$. Since this holds for any trajectory under this policy and transition function, it would mean that $\optZgoal \subset \hatZsafe$, which is a contradiction. Thus, $Z_{\text{edge}}$ is indeed non-empty. 

Next, we show that Algorithm \ref{alg:z_goal} terminates in polynomial time.
Whenever Algorithm \ref{alg:z_goal} does not break out of a loop, then by the design of the algorithm, $\optZgoal \not\subset \hatZsafe$ and $\Zexplore = \emptyset$. In addition to this, $Z_{\text{edge}}$ must have been added to $\hatZunsafe$. Furthermore, from the above argument, since $\optZgoal \not\subset \hatZsafe$, $Z_{\text{edge}}$ must be non-empty. In other words, in each loop that does not break, we take a non-empty subset of $\optZgoal$, namely $Z_{\text{edge}}$ and add it to $\hatZunsafe$. Note that since we computed $\optZgoal$ in a way that it does not include any of $\hatZunsafe$, this also means that $Z_{\text{edge}} \cap \hatZunsafe \neq \emptyset$. Thus, by the end of this loop, we increase the cardinality of $\hatZunsafe$. Since $\hatZunsafe$ cannot be any larger than the finite quantity $|S| \times |A|$, we are guaranteed that no more than $O(|S| \times |A|)$ for loops are run when Algorithm~\ref{alg:z_goal}  is executed. (In fact, we can say something stronger:  no more than $O(|S| \times |A|)$ for loops are run, across multiple calls to Algorithm \ref{alg:z_goal} during the whole run of Algorithm~\ref{alg:ase}). 

Finally, observe that, by design of the algorithm, whenever the algorithm terminates, it must have broken out of the for loop. This is possible only if either $\optZgoal \subset \hatZsafe$ or $\Zexplore \neq \emptyset$, thus proving all of our claim.
\end{proof}



\subsection{Proofs about computing the set of state-actions along the goal path}

Recall that $\optZgoal$ is intended to be the set of state-action pairs visited by the optimistic goal policy under optimistic transitions. To compute this set, we need to know which states have a positive probability of being reached from $\sinit$ following $\optpigoal$, or equivalently the state-actions $(s, a)$ for which $\optrhogoal(s, a) > 0$. Formally, we must enumerate the set
\[
\{(s, a) \in S \times A : \optrhogoal(s, a) > 0\}.
\]

However, computing this as defined is not feasible in finite time (as we must enumerate infinite length trajectories). 
Instead, recall from Equation~\ref{eq:optzgoal} that we can approximate the above set by  only computing the finite-horizon estimate $\optrhogoal(\cdot, \cdot, H)$ for some horizon $H$.
Fortunately, computing $\optZgoal$ does not require a good estimate of $\optrhogoal(\cdot, \cdot)$, but only requires knowing when the $\optrhogoal(\cdot, \cdot)$ is positive or $0$.
Lemma~\ref{lem:rho_horizon_bound} shows that as long as $H \geq |S|$, $\optrhogoal(\cdot, \cdot, H) > 0$ if and only if $\optrhogoal(s, a) > 0$; as a corollary of which we have that $\optZgoal$ is exactly what we intend it to be.


\begin{lemma} \label{lem:rho_horizon_bound}
  For any policy $\pi$, starting state $s \in S$, and state-action pair $(s', a') \in S \times A$, $\rho_{\pi, s}^M(s', a', H) > 0$ if and only if $\rho_{\pi, s}^M(s', a') > 0$ as long as $H \geq |S|$. 
\end{lemma}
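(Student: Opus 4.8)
The plan is to reduce the claim to a purely graph-theoretic reachability statement, exploiting the fact that the lemma only concerns whether $\rho$ is positive or zero, never its exact magnitude. First I would dispose of the action coordinate: since $\pi$ is deterministic, $\Pr[s_t = s', a_t = a' \mid \pi, s_0 = s] = 0$ for every $t$ whenever $a' \ne \pi(s')$, so the infinite-horizon value $\rho_{\pi,s}^M(s',a')$ vanishes there; moreover the recursion in Eq.~\ref{eq:computing_state_distribution} only ever reads the policy-consistent entries $\rho(\tilde{s}',\pi(\tilde{s}'),t)$, so it suffices to analyze the case $a' = \pi(s')$. I would then set up a directed graph $G$ on vertex set $S$ with an edge $u \to v$ exactly when $T(u,\pi(u),v) > 0$, and say $v$ is reachable from $s$ within $k$ steps if $G$ has a walk of length at most $k$ from $s$ to $v$.

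Next I would characterize the infinite-horizon quantity. Writing $P_t(v) := \Pr[s_t = v \mid \pi, s_0 = s]$, the definition gives $\rho_{\pi,s}^M(s',\pi(s')) = (1-\gamma)\sum_{t \ge 0}\gamma^t P_t(s')$, which is strictly positive iff $P_t(s') > 0$ for some $t$, i.e. iff $s'$ is reachable from $s$ in $G$ in some finite number of steps (because $P_t(v) > 0$ exactly when $G$ admits a walk of length $t$ from $s$ to $v$).

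Then I would show by induction on $t$ that the finite-horizon iterate satisfies $\rho_{\pi,s}^M(s',\pi(s'),t) > 0$ iff $s'$ is reachable from $s$ within $t$ steps. The base case $t=0$ is immediate from Eq.~\ref{eq:init_state_distribution}: the iterate is positive only at $s'=s$, the unique vertex reachable within $0$ steps. For the step, Eq.~\ref{eq:computing_state_distribution} writes $\rho(s',\pi(s'),t+1)$ as $\rho(s',\pi(s'),t)$ plus the nonnegative combination $\gamma\sum_{u} T(u,\pi(u),s')\,\rho(u,\pi(u),t)$; since all summands are nonnegative, this is positive iff either $\rho(s',\pi(s'),t) > 0$ (so $s'$ is reachable within $t$ steps) or some $u$ with $u \to s'$ has $\rho(u,\pi(u),t) > 0$ (so $u$ is reachable within $t$ steps, whence $s'$ within $t+1$). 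Both cases are equivalent to $s'$ being reachable within $t+1$ steps, closing the induction. I would stress that this argument uses only nonnegativity and the support pattern of the recursion, so it is completely insensitive to the exact values the iteration produces.

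Finally I would invoke the standard fact that in a graph on $|S|$ vertices any walk can be shortened to a simple path using at most $|S|-1$ edges, so reachability in finitely many steps coincides with reachability within $|S|-1 \le H$ steps once $H \ge |S|$. Chaining the two characterizations yields $\rho_{\pi,s}^M(s',\pi(s'),H) > 0 \iff s' \text{ reachable within } H \text{ steps} \iff s' \text{ reachable (finite)} \iff \rho_{\pi,s}^M(s',\pi(s')) > 0$. The one conceptual hurdle, and the key that makes everything routine, is recognizing at the outset that one must reason about positivity (reachability) rather than about the value of $\rho$; once that reduction is made, the induction and the simple-path bound are both elementary.
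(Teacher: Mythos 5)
Your proof is correct and takes essentially the same route as the paper's: both sides of the equivalence come down to the existence of a positive-probability path from $s$ to $(s',a')$, and the decisive step in each is the pigeonhole (cycle-removal) argument showing any such path can be shortened to length at most $|S|$, which is exactly why $H \geq |S|$ suffices. The differences are presentational — you establish the finite-horizon characterization by induction on the support of the dynamic-programming recursion and handle the action coordinate explicitly, whereas the paper identifies $\rho_{\pi,s}^M(s',a',H)$ with the truncated sum $(1-\gamma)\sum_{t=0}^{H}\gamma^{t}\Pr\left(s_t = s', a_t = a' \mid \pi, s_0 = s\right)$ and gets the easy direction from monotonicity of partial sums of nonnegative terms.
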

\begin{proof}

First we establish sufficiency. That is, if the finite-horizon estimate is positive, then, so is the infinite horizon estimate. Consider the following inequality relating these two quantities:
\begin{align*}
    \rho_{\pi, s}^M(s', a', H)
    &= (1-\gamma) \sum_{t=0}^{H} \gamma^{t} \Pr\left(s_{t}=s', a_{t}=a' | \pi, s_0 = s \right) \\
    &\leq \lim_{H \to \infty} (1-\gamma) \sum_{t=0}^{H} \gamma^{t} \Pr\left(s_{t}=s', a_{t}=a' | \pi, s_0 = s \right) \\
    &= \rho_{\pi, s}^M(s', a')
\end{align*}
The second step uses the fact that $\gamma > 0$ and $\Pr\left(s_{t}=s', a_{t}=a' | \pi, s_0 = s \right) \geq 0$ for all $t$.
Thus, if $\rho_{\pi, s}^M(s', a', H) > 0$, then $\rho_{\pi, s}^M(s', a') > 0$, establishing sufficiency.

Now we will establish necessity i.e., if the infinite horizon estimate was positive, then the same must hold for the finite-horizon estimate. If $\rho_{\pi, s}^M(s', a') > 0$, then there exists at least one sequence of states $(s_0, s_1, \ldots s_n)$ where $s_0 = s$, $s_n = s'$, and $T(s_i, \pi(s_i), s_{i+1}) > 0$ for all $0 \leq i < n$.
Without loss of generality, consider the shortest such sequence.
Now, for the sake of contradiction, assume that $n > |S|$.
By the pigeonhole principle, there exists at least one state that is repeated at least twice. That is, for two indices $j, k$ ($j < k$), we have $s_j = s_k$.
Since $s_j = s_k$ and $T(s_k, \pi(s_k), s_{k+1}) > 0$, then $T(s_j, \pi(s_j), s_{k+1}) > 0$.
Thus, we can construct a shorter sequence  by removing all indices $i$ such that $j < i \leq k$ and this sequence still satisfies the fact that every transition observed has non-zero probability. This contradicts our assertion that this is the shortest such sequence. Thus, $n \leq |S|$.
Given $H \geq |S| \geq n$, we have $\rho_{\pi, s}^M(s', a', H) > 0$, establishing necessity.
\end{proof}

As a straightforward corollary of the above, we have:

\begin{corollary}
\label{cor:optzgoal_correctness}
$\optZgoal$ as computed in Equation~\ref{eq:optzgoal} satisfies:
\[
\optZgoal = \{(s, a) \in S \times A : \optrhogoal(s, a) > 0\}.
\]
when $H \geq |S|$.
\end{corollary}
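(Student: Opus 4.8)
The plan is to directly instantiate Lemma~\ref{lem:rho_horizon_bound} at the specific policy, starting state, and MDP underlying $\optrhogoal$, and then simply match the two set definitions. Recall that $\optrhogoal$ is defined to be $\rho_{\optpigoal, \sinit}^{\optMgoal}$, i.e., the discounted future state distribution of the optimistic goal policy $\optpigoal$ started from $\sinit$ on the optimistic MDP $\optMgoal$. Since $\optMgoal$ shares the same finite state set $S$ as $M$ and has a strictly positive discount factor $\gamma_{\text{goal}}$, the argument of Lemma~\ref{lem:rho_horizon_bound} applies to it without modification: its sufficiency direction used only $\gamma > 0$ and the nonnegativity of the per-step probabilities, while its necessity direction used only the pigeonhole principle over the $|S|$ states together with the fact that transitions along a positive-probability trajectory stay positive. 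None of these steps invoke any property specific to $M$, so I would first record this mild generalization of the lemma to $\optMgoal$.

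Next I would invoke the lemma with $\pi = \optpigoal$, $s = \sinit$, and the underlying MDP taken to be $\optMgoal$. For any $H \geq |S|$ and any $(s,a) \in S \times A$, this yields
\[
\optrhogoal(s, a, H) > 0 \quad \Longleftrightarrow \quad \optrhogoal(s, a) > 0.
\]
Substituting this equivalence into the definition of $\optZgoal$ from Equation~\ref{eq:optzgoal}, namely $\optZgoal = \{(s, a) \in S \times A : \optrhogoal(s, a, H) > 0\}$, immediately gives $\optZgoal = \{(s, a) \in S \times A : \optrhogoal(s, a) > 0\}$, which is exactly the claim.

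There is essentially no hard step here, and the ``main obstacle'' is only a bookkeeping one: Lemma~\ref{lem:rho_horizon_bound} is stated for $M$, so I would be careful to justify why it transfers to $\optMgoal$ before applying it. Once that observation is in place, the corollary follows from a single substitution of the horizon-equivalence into the finite-horizon definition of $\optZgoal$.
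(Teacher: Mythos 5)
Your proposal is correct and takes essentially the same route as the paper: the corollary is obtained by instantiating Lemma~\ref{lem:rho_horizon_bound} at $\pi = \optpigoal$, $s = \sinit$ on $\optMgoal$ and substituting the resulting horizon-equivalence into Equation~\ref{eq:optzgoal}. Your explicit check that the lemma's argument transfers from $M$ to $\optMgoal$ (since it uses only finiteness of $S$, a positive discount factor, and the pigeonhole argument over positive-probability trajectories) is a bookkeeping point the paper leaves implicit, and you handle it correctly.
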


\subsection{Proofs about goal, explore, and switching policies} 
\label{sec:proofs:policies}
This subsection details the key lemmas for proving that ASE is PAC-MDP.
The main idea is to show that, under these policies, our agent either will perform the desired behavior, i.e. act $\epsilon$-optimally or reach a desired state-action set, or reach an insufficiently explored state-action pair (i.e., not in $K$).
Since a state-action pair that is experienced $m$ times is added to $K$, we can bound the number of times we reach a state-action pair outside of $K$.
With this bound and the following lemmas, we can bound the number of times the agent performs undesired behaviors, e.g. acting sub-optimally.
We start by proving this claim for $\optpiexplore$ and $\optpiswitch$ (Lemma \ref{lem:escape}), then for $\optpigoal$ (Lemma~\ref{lem:pigoal_escape} and Lemma \ref{lem:reduction_to_mbie}).

Recall that the $\optpiexplore$ is based on a reward system where the rewards are non-zero only on state-action pairs that are in $\hatZsafe$ and are not sufficiently explored (i.e., not in $K$).
We first show that if we were to follow the $\optpiexplore$ policy, in polynomially many steps, we are guaranteed to obtain a non-zero reward i.e., we are guaranteed to reach a state-action pair not in $K$.  In other words, if we were to follow $\optpiexplore$,  we will definitively obtain a useful sample and learn something new.
Similarly, if we were to follow $\optpiswitch$,  we will definitively return to $\optZgoal$ or learn something new.

\begin{lemma} \label{lem:escape}
Assume that the confidence intervals are admissible.
Consider an MDP $M^\dagger = \langle S, A, T, R^\dagger, \gamma^\dagger \rangle$, which is the same as the true MDP but with different rewards and discount factor. Let $Z$ be a closed, communicating subset of $S \times A$ and $Z^\dagger$ a non-empty subset of $Z$. Let $R^\dagger$ be defined such that:
\[
R^\dagger(s,a) = \begin{cases}
1 & (s,a) \in Z^\dagger \\
0 & (s,a) \in Z \setminus Z^\dagger \\
-\infty & (s,a) \notin Z
\end{cases}
\]
Let $H = \Hcom {\log \frac{16\Hcom}{c}}/{\log \frac{1}{c}}$  and let $\gamma^\dagger = c^{1/H}$ where $c \in (0,1/4]$ is a constant.
Let $\tilde{H} = \max\left(H \frac{1}{\sqrt{8c}},\frac{1}{\sqrt{\tau}} \right) $.
Let $\opt{Q}^\dagger$ denote the optimistic value function of this MDP, and $\optpi^\dagger$ be the optimistic policy i.e., $\optpi^\dagger(s,a) = \arg\max_{a \in A} \opt{Q}^\dagger(s,a)$.  

Then,  for any $\delta > 0$ and $\epsilon \in (0,c/8]$, starting from any state in $Z$ and following $\optpi^\dagger$, the agent will reach a state-action pair either in $Z^\dagger$ or outside of $K = \{(s, a) \in S \times A : n(s, a) \geq m \}$, where $m \geq O \left({\tilde{H}^4}|S|+ {\tilde{H}^4} \ln \frac{|S| | A | \tilde{H}^2}{\delta} \right)$, in at most $O(\frac{H^2}{c})$ time steps, with probability at least $1/2$, provided $\optpi^\dagger \in \Pi(Z)$.
\end{lemma}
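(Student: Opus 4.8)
The plan is to combine an optimism-based lower bound on the value function with a coupling (``simulation'') argument that transfers this bound from the optimistic MDP to the true one, and finally to amplify a constant-per-window escape probability up to $1/2$ over the stated horizon.

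First I would lower bound the optimistic value $\opt{V}^\dagger(s)$ for every $s\in Z$. Since $Z$ is communicating, Assumption~\ref{as:h_communicating} supplies, for a state $\tilde{s}$ with $(\tilde{s},\tilde{a})\in Z^\dagger$, a policy $\pi_{\tilde{s}}\in\Pi(Z)$ that reaches $\tilde{s}$ within $\Hcom$ steps with probability at least $1/2$ under the true transitions; following it until $\tilde{s}$ is hit and then taking $\tilde{a}$ (which earns reward $1$) gives a policy whose true value is at least $\tfrac12(\gamma^\dagger)^{\Hcom}$. Because the confidence intervals are admissible the true transition lies in $CI(\delT)$, and the optimistic value dominates the value of any policy under any candidate transition, so $\opt{V}^\dagger(s)\ge \tfrac12(\gamma^\dagger)^{\Hcom}$. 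Using $\gamma^\dagger=c^{1/H}$ and $\Hcom\le H$ one checks $(\gamma^\dagger)^{\Hcom}=c^{\Hcom/H}\ge c$, hence $\opt{V}^\dagger(s)\ge c/2$ on all of $Z$. As in the safety proof, the $-\infty$ rewards off $Z$ force both $\optpi^\dagger$ and the maximizing optimistic transition to keep the trajectory inside the closed set $Z$, so all of these value manipulations stay within $Z$.

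Next I would set up the explore-or-exploit dichotomy through a coupling on the event $\mathcal{E}_h$ that the agent reaches $Z^\dagger\cup K^{c}$ within $h$ steps. Running $\optpi^\dagger$ in both the true and the optimistic MDP, I would couple the two trajectories until they first leave $K$; until that time every visited pair lies in $K$, where $n(s,a)\ge m$ forces $\del{\epsilon}_T(s,a)\le\eta$ for a tiny $\eta$, and since both $\opt{T}^\dagger$ and $T$ lie within the width-$\del{\epsilon}_T$ interval around $\delT$, the per-step $L_1$ discrepancy is at most $2\eta$. A telescoping bound then gives $\big|\Pr_{\text{true}}[\mathcal{E}_h]-\Pr_{\text{opt}}[\mathcal{E}_h]\big|\le 2h\eta$. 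To lower bound $\Pr_{\text{opt}}[\mathcal{E}_h]$, note reward is earned only on $Z^\dagger$, so $\opt{V}^\dagger(s,h)\le \tfrac{1}{1-\gamma^\dagger}\Pr_{\text{opt}}[\text{reach }Z^\dagger\text{ within }h]$; truncating the horizon so the discounted tail falls below $c/4$ (which needs $h=O(H\log(H/c))$ since $\tfrac{1}{1-\gamma^\dagger}=\Theta(H)$) yields $\opt{V}^\dagger(s,h)\ge c/4$ and hence $\Pr_{\text{opt}}[\mathcal{E}_h]\ge (1-\gamma^\dagger)\,c/4=\Omega(c/H)$. Choosing $m$ as in the statement makes $\eta\le O(1/\tilde{H}^2)$, so $2h\eta$ is dominated by $\Pr_{\text{opt}}[\mathcal{E}_h]$ and $\Pr_{\text{true}}[\mathcal{E}_h]\ge \Omega(c/H)$ for each window.

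Finally, because $\optpi^\dagger\in\Pi(Z)$ and $Z$ is closed, the agent lies in $Z$ at the start of every window, so the per-window success probability $\Omega(c/H)$ applies afresh by the Markov property; chaining $N=O(H/c)$ windows makes the probability of never reaching $Z^\dagger\cup K^{c}$ at most $(1-\Omega(c/H))^{N}\le 1/2$, for a total budget of $O(H^2/c)$ steps (absorbing the logarithmic factor into the $O(\cdot)$). The main obstacle is the coupling step: one must keep the error coming from imperfect confidence intervals \emph{inside} $K$ strictly separate from the event of leaving $K$ --- the latter is a desired outcome and must be counted as success rather than folded into an error term --- and must verify that the required interval width $\eta=O(1/\tilde{H}^2)$, with the $1/\sqrt{\tau}$ floor in $\tilde{H}$ guaranteeing $\eta<\tau/2$ so that Lemma~\ref{lem:del-support} pins down the supports and the candidate constraints stay consistent, is exactly what $m=O(\tilde{H}^4|S|+\tilde{H}^4\ln(|S||A|\tilde{H}^2/\delta))$ delivers through the $L_1$ Hoeffding bound.
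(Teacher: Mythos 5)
Your overall strategy is the same as the paper's: lower-bound the optimistic value on $Z$ via Assumption~\ref{as:h_communicating} plus optimism (Lemma~\ref{lem:optimistic_q}), upper-bound that value by the probability of reaching $Z^\dagger \cup K^c$ (so that leaving $K$ counts as success, not error), and amplify a per-window escape probability over $O(H/c)$ windows. Replacing the paper's simulation and escape lemmas (Lemmas~\ref{lem:epsilon_close} and~\ref{lem:escape_prob}) by a direct trajectory coupling is a legitimate substitution. However, there is a genuine quantitative gap at the step where you assert that ``$2h\eta$ is dominated by $\Pr_{\text{opt}}[\mathcal{E}_h]$.'' Taking the statement's $\gamma^\dagger = c^{1/H}$ at face value, you correctly note that the discounted tail forces a window length $h = \Theta\bigl(H\log(H/c)\bigr)$. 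But then the coupling error is
\[
2h\eta \;=\; \Theta\Bigl(H\log(H/c)\cdot \tfrac{c}{H^2}\Bigr) \;=\; \Theta\Bigl(\tfrac{c\log(H/c)}{H}\Bigr),
\]
using $\eta \le O(1/\tilde{H}^2) \le O(c/H^2)$, whereas your guaranteed signal is only $\Pr_{\text{opt}}[\mathcal{E}_h] \ge (1-\gamma^\dagger)\,c/4 = \Theta(c/H)$ (up to the constant $\ln(1/c)$). The error exceeds the signal by a factor $\Theta\bigl(\log(H/c)\bigr)$, so for $\Hcom$ (hence $H$) large the bound $\Pr_{\text{true}}[\mathcal{E}_h] \ge \Pr_{\text{opt}}[\mathcal{E}_h] - 2h\eta$ is vacuous: the per-window bound $\Omega(c/H)$ does not follow, and the amplification collapses. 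Relatedly, even if it did, your total step count is $N\cdot h = O\bigl(H^2\log(H/c)/c\bigr)$; a $\log(H/c)$ factor cannot simply be ``absorbed'' into $O(H^2/c)$.

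The root cause is the horizon inflation, and it is worth seeing how the paper sidesteps it: the paper's own proof computes with $\gamma^\dagger = c^{1/\Hcom}$ (it states this explicitly when it evaluates $(\gamma^\dagger)^H = c/(16\Hcom)$), under which the tail is already at most $c/8$ at horizon $H$. Its windows therefore have length exactly $H$, its simulation error $\beta H^2$ stays a constant fraction of $c$, and the per-window probability $c/(8H)$ closes the argument with $O(H^2/c)$ steps. This is an inconsistency between the lemma statement and the paper's proof rather than a defect unique to your attempt; but as written, your argument makes a domination claim that is false with the stated $m$, $\tilde{H}$, and $\gamma^\dagger$. To repair it you would need either the paper's choice $\gamma^\dagger = c^{1/\Hcom}$ (keeping windows of length $H$), or an $m$ larger than stated by polylogarithmic factors so that $\eta \ll c/(hH)$ over your longer windows.
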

\begin{proof}
Consider any $s \in Z$. We will first upper bound the optimistic value $\opt{V}^\dagger(s)$ and then  derive a lower bound on it, and then relate these two bounds together to prove our claim. Note that if we let $\opt{M}^\dagger$ be the same MDP as $M^\dagger$, but with the optimistic transitions (the transitions $\opt{T}^\dagger \in CI(\del{T})$ which maximize the optimistic Q-values), then $\opt{V}^\dagger(\cdot) = \opt{V}_{\opt{M}^\dagger}^{\optpi^\dagger}(\cdot)$. 

We will begin by upper bounding the finite-horizon value of $\optpi^\dagger$ on
$M^\dagger$ (and then relate it to its value on $\opt{M}^\dagger$).
Let $s_0, s_1, s_2, \hdots,$ denote the random sequence of states visited by the agent by following $\optpi^\dagger$ from $s_0=s$ on $M^\dagger$.   Then, we have:
\begin{align*}
V_{M^\dagger}^{\optpi^\dagger}(s, H)
&= \mathbb{E} \left [ \sum_{i=0}^{H} (\gamma^\dagger)^i R^\dagger(s_i, \optpi^\dagger(s_i))\right ] \\
&\leq \mathbb{E} \left [ \sum_{i=0}^{H}  R^\dagger(s_i, \optpi^\dagger(s_i))\right ]\\
&\leq \Pr(\exists \ i \leq H : (s_i, \optpi^\dagger(s_i)) \in Z^\dagger) \cdot H
\end{align*}
The first inequality follows from the fact that $\gamma^\dagger < 1$. The second inequality follows from the fact that every trajectory of $\optpi^\dagger$ that experiences a positive cumulative reward, must experience some state-action pair in $Z^\dagger$; and such a trajectory can at best experience a reward of $1$ at each timestep.

In the next step we will upper bound the finite-horizon optmistic value of following $\optpi^\dagger$ in $\opt{M}^\dagger$. To do this, define $M'$ to be an MDP that is identical to $M^\dagger$ on $(s,a) \in K$, and identical to $\opt{M}^\dagger$ everywhere else. Then,
\begin{align*}
V_{\opt{M}^\dagger}^{\optpi^\dagger}(s, H)
&\leq V_{M'}^{\optpi^\dagger}(s, H) + \frac{c}{8} \\
&\leq V_{M^\dagger}^{\optpi^\dagger}\left(s, H\right) + H \Pr(\exists \ i \leq H : (s_i, \optpi^\dagger(s_i)) \not \in K) + \frac{c}{8} \\
&\leq H \Pr(\exists \ i \leq H : (s_i, \optpi^\dagger(s_i)) \in Z^\dagger) +  H \Pr(\exists \ i \leq H : (s_i,\optpi^\dagger(s_i)) \not \in K) + \frac{c}{8} \\
&\leq 2 H \Pr(\exists \ i \leq H : (s_i, \optpi^\dagger(s_i)) \in Z^\dagger \cup K^c) + \frac{c}{8}.
\end{align*}
Here, the first inequality follows from Lemma \ref{lem:epsilon_close} and \ref{lem:size_of_m}.
Specifically, from Lemma \ref{lem:size_of_m}, we have when  $m \geq O \left({\tilde{H}^4}|S|+ {\tilde{H}^4} \ln \frac{|S| | A | \tilde{H}^2}{\delta} \right)$, the width of the confidence interval of $(s,a) \in K$ is at most $1/\tilde{H}^2 \leq c/(8H^2)$.
Then, Lemma~\ref{lem:epsilon_close} can be used to bound the difference in their value functions by $c/8$. 

Note that to apply Lemma~\ref{lem:epsilon_close} we must also ensure that for all $(s,a) \in K$, the support of the next state distribution is the same under $M'$ and $\opt{M}^\dagger$. To see why this is true, observe that for all $(s,a) \in K$, the confidence interval is at most $1/\tilde{H}^2 \leq \tau/2$.
Then, since the transition probabilities for $(s,a) \in K$ in $\opt{M}^\dagger$ and $M'$ correspond to $\opt{T}^\dagger \in CI(\del{T})$ and $T$ respectively, Lemma~\ref{lem:del-support} implies that the support of these state-action pairs are indeed the same for these two transition functions. Also note that Lemma~\ref{lem:epsilon_close} implies that these value functions are either close to each other or both equal to $-\infty$; even in the latter case, the above inequalities would hold (although, we will show in the remaining part of the proof that these quantities are lower bounded by some positive value).

The second inequality follows from Lemma \ref{lem:escape_prob}. Note that in order to apply 
Lemma \ref{lem:escape_prob}, we must establish that, with probability $1$, the agent  experiences only non-negative rewards, when it starts from $s$ and follows $\optpi^\dagger$ for $H$ steps. This is indeed true because we know that $Z$ is closed and $\optpi^\dagger \in \Pi(Z)$. Then, we know from Fact~\ref{fact:closed_policy} that the agent always remains in $Z$, which means it experiences only non-negative rewards.

The third inequality above uses the upper bound on $V_{M^\dagger}^{\pi}(s, H)$ that we derived previously.

Having established the above inequalities, we are now ready to upper bound the optimistic value using Lemma~\ref{lem:horizon_bound} as follows:
\begin{align*}
\opt{V}_{\opt{M}^\dagger}^{\optpi^\dagger}(s) & \leq \opt{V}_{\opt{M}^\dagger}^{\optpi^\dagger}(s, H) + \frac{(\gamma^{\dagger})^{H+1}}{1-\gamma^{\dagger}}  \\
&\leq  2 H \Pr(\exists \ i \leq H : (s_i, \optpi^\dagger(s_i)) \in Z^\dagger \cup K^c) + \frac{c}{8} +  \frac{(\gamma^{\dagger})^{H+1}}{1-\gamma^{\dagger}} 
\numberthis\label{eq:escape:upper-bound}
\end{align*}

Next, as the second part of our proof, we will derive a lower bound on the optimistic value using Assumption \ref{as:h_communicating}.
Choose some $(s', a') \in Z^\dagger$ (which is given to be non-empty).
Since $Z$ is given to be communicating, by Assumption 3, we know that 
there exists a policy $\picom$ which has a probability of at least $\frac{1}{2}$ of reaching $s'$ from $s$ in $\Hcom$ steps, while visiting only state-action pairs in $Z$. Without loss of generality, let us assume that $\picom(s') = a'$ (since, regardless of what the action at $s'$ is, it guarantees reachability of $s'$ from everywhere else.).

Let us lower bound the value of this policy. Let $s_0, s_1, s_2, \hdots,$ denote the random sequence of states visited by the agent by following $\picom$ from $s_0=s$ on $M^\dagger$.   Then:
\begin{align*}
V^{\picom}_{M^\dagger}(s) & \geq  
V^{\picom}_{M^\dagger}(s, \Hcom) \\
& = \mathbb{E} \left [ \sum_{i=0}^{\Hcom} (\gamma^\dagger)^i R^\dagger(s_i, \picom(s_i)) \right] \\
& \geq  (\gamma^\dagger)^{\Hcom} \mathbb{E} \left [ \sum_{i=0}^{\Hcom} R^\dagger(s_i, \picom(s_i)) \right]  \\
&\geq (\gamma^\dagger)^{\Hcom} \Pr (\exists \ i : (s_i,\picom(s_i)) \in Z^\dagger) \\
&\geq (\gamma^\dagger)^{\Hcom} \frac{1}{2}.
\end{align*}

Here, the first step follows from Fact~\ref{fact:closed_policy} which says that, since 
$\picom \in \Pi(Z)$ and $Z$ is closed, $\picom$ only visits state-action pairs in $Z$, all of which have non-negative $R^\dagger$ reward; as a result, truncating the value function to $\Hcom$ steps only maintains/decreases the value. 

The third step follows from the fact that $\gamma^\dagger < 1$. The fourth step
comes from the fact that, since $\picom$ takes only state-action pairs in $Z$, $R^\dagger(s_i, \picom(s_i)) \in \{0, 1\}$; then, every trajectory with a total non-zero reward has a reward of at least $1$. The last step follows from the guarantee of Assumption~\ref{as:h_communicating}.

Now, as the final step in our proof we note that if our confidence intervals are admissible, by Lemma \ref{lem:optimistic_q}, we know that $V_{\opt{M}^\dagger}^{\optpi^\dagger}(s) \geq V_{M^\dagger}^{*}(s)$. Furthermore, $V_{M^\dagger}^{*}(s)$ must be lower bounded by the value of $\picom$ on $M^\dagger$, which we just lower bounded. Now, equating this with the upper bound from Equation~\ref{eq:escape:upper-bound}, we get:
\begin{align*}
2 H \Pr(\exists \ i \leq H : (s_i, \optpi^\dagger(s_i)) \in Z^\dagger \cup K^c) + \frac{c}{8} +  \frac{(\gamma^{\dagger})^{H+1}}{1-\gamma^{\dagger}}  &\geq (\gamma^\dagger)^{\Hcom} \frac{1}{2} \\
\Pr(\exists \ i \leq H : (s_i, \optpi^\dagger(s_i)) \in Z^\dagger \cup K^c) &\geq \frac{1}{2H}\left( (\gamma^\dagger)^{\Hcom} \frac{1}{2} - \frac{c}{8} - \frac{(\gamma^{\dagger})^{H+1}}{1-\gamma^{\dagger}} \right) \\
& \geq \frac{1}{2H}\left( \frac{3c}{8}-  \frac{(\gamma^{\dagger})^{H+1}}{1-\gamma^{\dagger}} \right) \numberthis\label{eq:escape:prob-bound} \\
\end{align*}

Here, in the last step, we make use of the fact that  $\gamma^\dagger = c^{1/\Hcom}$. Next, we will upper bound the last term by making use of the inequality: if $c < 1$, then $\forall x > 0, c^{x} \leq 1- x(1-c)$. Then, we get: 
\begin{align*}
    \frac{1}{1-\gamma^\dagger} = \frac{1}{1-c^{1/\Hcom}} \leq \frac{\Hcom}{1-c}
\end{align*}
Furthermore, since $\log (16\Hcom/c) = (H/\Hcom) \log(1/c)$, by applying $\exp(\cdot)$ on both sides, we have:

\begin{align*}
\frac{c}{16\Hcom} =  c^{H/\Hcom}=  
 (\gamma^\dagger)^{H}
\end{align*}

From the above two inequalities, we have:
\[
\frac{(\gamma^\dagger)^{H+1}}{1-\gamma^\dagger} \leq \frac{(\gamma^\dagger)^{H}}{1-\gamma^\dagger} \leq \frac{c}{16(1-c)} \leq \frac{c}{8}
\]

In the first step above, we make use of $\gamma^\dagger < 1$ and in the second step, $c < 1/2$. Plugging this back in Equation~\ref{eq:escape:prob-bound}, we get:

  \begin{align*}
    \Pr(\exists \ i \leq H : (s_i, \optpi^\dagger(s_i)) \in Z^\dagger \cup K^c)\geq \frac{c}{8H} \\
  \end{align*}

  In other words, following policy $\optpi^\dagger$ for $H$ steps in $M^\dagger$, the agent will reach a state-action pair either in $Z^\dagger$ or outside $K$ with probability at least $c/(8H)$. Then, by the Hoeffding bound applied to multiple subsequent trajectories each of $H$ timesteps, we would have that with probability at least $1/2$, following policy $\optpigoal$ for $O(H \cdot \frac{H}{c})$ timesteps, the agent will reach an element either in $Z$ or outside $K$.

\end{proof}

In Lemma~\ref{lem:pigoal_escape}, we show that once our algorithm begins following the optimistic goal policy $\optpigoal$, it will continue to do so until it learns something new.
(Since we can bound the number of times the agent learns something new, observe that this also means that, eventually, the agent will follow $\optpigoal$ for all time.)

\begin{lemma}
\label{lem:pigoal_escape}
Assume the confidence intervals are admissible. Then, during the run of Algorithm~\ref{alg:ase}, if the agent is currently following $\optpigoal$, then it will continue to do so until it experiences a state-action pair outside $K = \{(s, a) \in S \times A : n(s, a) \geq m \}$ when $m \geq O\left( \frac{|S|}{\tau^2} + \frac{1}{\tau^2} \ln \frac{|S||A|}{\tau^2 \delta}\right)$.
\end{lemma}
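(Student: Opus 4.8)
The plan is to argue by induction over timesteps that, as long as the agent never experiences a state-action pair outside $K$, the two conditions that trigger $\optpigoal$ in Algorithm~\ref{alg:ase} --- namely (a) $\optZgoal \subset \hatZsafe$ and (b) $s_t \in \optZgoal$ --- are both preserved. The starting observation is that $\optpigoal$, $\optZgoal$, and $\hatZsafe$ are recomputed only inside the block guarded by $n(s_t,a_t) < m$, i.e.\ only when $(s_t,a_t) \notin K$. Hence, while the agent experiences only pairs in $K$, all of these quantities --- and in particular the membership test $\optZgoal \subset \hatZsafe$ --- remain frozen, so condition (a) continues to hold automatically and only condition (b) is at risk of failing.

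Suppose then that at time $t$ the agent follows $\optpigoal$, so $s_t \in \optZgoal$ and $\optZgoal \subset \hatZsafe$. Since $\optZgoal = \{(s,a) : \optrhogoal(s,a) > 0\}$ and $\optpigoal$ is deterministic, the pair taken at $s_t$ is $(s_t, \optpigoal(s_t)) \in \optZgoal$, and this is exactly $(s_t, a_t)$. If $(s_t, a_t) \notin K$ we are immediately done. Otherwise $(s_t,a_t) \in K$, no recomputation occurs, and it remains only to show that the true next state $s_{t+1}$ (drawn from $T(s_t,a_t,\cdot)$) again lies in $\optZgoal$, which is defined through the \emph{optimistic} transitions $\opt{T}_{\text{goal}}$. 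The key is therefore to match the true support with the optimistic support at $(s_t,a_t)$.

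To do this I would first use that $(s_t,a_t) \in K$ together with the hypothesized lower bound on $m$ to conclude (via Lemma~\ref{lem:size_of_m} and the fact that $\del{\epsilon}_T \le \hat{\epsilon}_T$ by the $\arg\min$ in Algorithm~\ref{alg:tighter_ci}) that $\del{\epsilon}_T(s_t,a_t) \le \tau/4 < \tau/2$. Since $T(s_t,a_t,s_{t+1}) > 0$ and the intervals are admissible, Lemma~\ref{lem:del-support} gives $\del{T}(s_t,a_t,s_{t+1}) > 0$; indeed, tracing its proof, the transferred estimate satisfies $\del{T}(s_t,a_t,s_{t+1}) > \tau/2$. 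Because $\opt{T}_{\text{goal}} \in CI(\del{T})$, requirement~1 of Definition~\ref{def:ci} bounds the $L_1$ deviation by $\del{\epsilon}_T(s_t,a_t) \le \tau/4$, so
\[
\opt{T}_{\text{goal}}(s_t,a_t,s_{t+1}) \ge \del{T}(s_t,a_t,s_{t+1}) - \tau/4 > \tau/2 - \tau/4 = \tau/4 > 0 .
\]
Thus the genuine next state is reachable under $\opt{T}_{\text{goal}}$ as well. Combined with $\optrhogoal(s_t,a_t) > 0$ and the definition of the discounted future state distribution, this yields $\optrhogoal(s_{t+1}, \optpigoal(s_{t+1})) > 0$, i.e.\ $s_{t+1} \in \optZgoal$. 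Together with the frozen condition (a), this shows the agent again follows $\optpigoal$ at time $t+1$, completing the induction.

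The main obstacle is precisely this support-matching step: $\optZgoal$ is computed under optimistic transitions, yet the agent moves under the true transitions, so one must rule out the true next state escaping the optimistically computed goal set. The quantitative margin argument above --- forcing $\opt{T}_{\text{goal}}$ to place at least $\tau/4$ mass on each genuine next state by combining the $>\tau/2$ transferred estimate with the $\le \tau/4$ confidence width --- is what closes this gap, and it is exactly where the choice of $m$ (hence the lemma's hypothesis on $m$) is consumed.
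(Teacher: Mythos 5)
Your proof is correct and follows essentially the same route as the paper's: the policy, $\optZgoal$, and $\hatZsafe$ are only recomputed when a pair outside $K$ is experienced, so the argument reduces to showing that the true next state of a pair $(s_t,a_t) \in K \cap \optZgoal$ stays in $\optZgoal$, which both you and the paper do via Lemma~\ref{lem:size_of_m}, Lemma~\ref{lem:del-support}, and the fact that $\optTgoal \in CI(\del{T})$. Your explicit $\tau/4$-margin computation forcing $\optTgoal(s_t,a_t,s_{t+1}) > 0$ spells out a step the paper states tersely (it invokes Lemma~\ref{lem:del-support} directly for the conclusion about $\optTgoal$, even though that lemma only concerns $\del{T}$), but the underlying argument is identical.
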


\begin{proof}

Assume that during a run of Algorithm~\ref{alg:ase}, the agent is currently at $s$ and takes the action $\optpigoal(s)$. By design of Algorithm~\ref{alg:ase}, we have $s \in \optZgoal$. 
Recall that $\optZgoal$ is the set of all state-action pairs that can be visited by the agent if it were to following $\optpigoal$ starting from $s_0$ under the optimistic transitions $\optTgoal$. More formally, we have from Corollary~\ref{cor:optzgoal_correctness} that $\optZgoal =  \{(s, a) \in S \times A : \optrhogoal(s, a) > 0 \}$.

Now, to prove our claim, we only need to argue that if $(s, \optpigoal(s)) \in K$, then the next state $s'$ belongs to $\optZgoal$. Then, by design of Algorithm~\ref{alg:ase}, the agent will take $\optpigoal$ even in the next state, proving our claim.   
To argue this, observe that since $(s, \optpigoal(s)) \in K$ and $m \geq O\left( \frac{|S|}{\tau^2} + \frac{1}{\tau^2} \ln \frac{|S||A|}{\tau^2 \delta}\right)$, by Lemma~\ref{lem:size_of_m}, the confidence interval of $(s, \optpigoal(s))$ has width at most $\tau/2$. Then, from Lemma~\ref{lem:del-support}, since $T(s, \optpigoal(s), s') > 0$ and since $\optTgoal \in CI(\del{T})$,  we have $\optTgoal(s, \optpigoal(s), s') > 0$. Thus, since we know that $s \in \optZgoal$, by definition of $\optZgoal$,  $s'$ should also belong to $\optZgoal$. 
\end{proof}

In the following lemma, we show that in the MDP $\Mgoal$ (which is the same as the original MDP but with the unsafe state-action pairs set to $-\infty$ rewards), when we follow the optimistic goal policy $\optpigoal$, we either take a near-optimal action (with respect to $\Mgoal$) or we experience an action outside of  $K$ with sufficient probability in the next $H$ steps.

\begin{lemma} \label{lem:reduction_to_mbie}
Assume the confidence intervals are admissible. 
Consider any instant when the agent has taken a trajectory $p_t$ and is at state $s_t$, and the Algorithm~\ref{alg:ase} instructs the agent to follow $\optpigoal$.
Let $\Pr(A_M)$ be the probability that starting at this step, the Algorithm~\ref{alg:ase} leads the agent out of $K =\{(s,  a) \in S \times A : n(s, a) \geq m \}$ in $H$ steps, conditioned on $p_t$.
Then, for any $\epsilon, \delta \in (0, 1)$, and for $H=O\left(\frac{1}{1-\gamma} \ln \frac{1}{\epsilon (1-\gamma)}\right)$ and $m \geq O\left( \frac{1}{\epsilon^{2}(1-\gamma)^{4}} \left ( \frac{|S|}{\tilde{\epsilon}}+ \frac{1}{\tilde{\epsilon}^2}\ln \frac{ |S||A|}{\tilde{\epsilon}^2 \delta} \right) \right)$, where $\tilde{\epsilon} =O\left( \min\left(\frac{\tau}{2}, \frac{\epsilon (1-\gamma)^2}{3}\right)  \right)$, we have:
\[  V_{M}^{\mathcal{A}}(p_t) \geq V_{\Mgoal}^*(s_t) - \frac{\epsilon}{2} - 2\frac{\Pr(A_M)}{1-\gamma}. \]
\end{lemma}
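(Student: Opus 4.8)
The plan is to adapt the standard ``implicit explore-or-exploit'' argument from the MBIE analysis \citep{strehl2008analysis} to our optimistic goal policy. The backbone is a chain of inequalities relating (i) the agent's true value $V_M^{\mathcal{A}}(p_t)$, (ii) the value $V_{\Mgoal}^{\optpigoal}(s_t)$ of the \emph{fixed} policy $\optpigoal$ evaluated on $\Mgoal$ under the true transitions, (iii) the optimistic value $\opt{V}_{\optMgoal}^{\optpigoal}(s_t)$, and finally (iv) the benchmark $V_{\Mgoal}^*(s_t)$. Two ingredients make this chain work. First, Lemma~\ref{lem:pigoal_escape} guarantees that once the agent is instructed to follow $\optpigoal$ it keeps doing so until it leaves $K$; hence, conditioned on \emph{not} escaping $K$ within the horizon, the agent's trajectory is distributed exactly as that of the stationary policy $\optpigoal$ run on the true dynamics. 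Second, on $K$ the sample count is large enough (by the choice of $m$ and Lemma~\ref{lem:size_of_m}) that the confidence intervals have width at most $\tilde{\epsilon}\le\tau/2$, so by Lemma~\ref{lem:del-support} the optimistic and true transitions share the same support there, and the two can be compared via the simulation-type bound of Lemma~\ref{lem:epsilon_close}.

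Concretely, I would first truncate everything to the horizon $H = O\!\left(\tfrac{1}{1-\gamma}\ln\tfrac{1}{\epsilon(1-\gamma)}\right)$ using Lemma~\ref{lem:horizon_bound}, so that the discounted tail beyond $H$ contributes at most a small fraction of $\epsilon$ (and, since the agent never leaves $\hatZsafe$, its rewards are non-negative, giving $V_M^{\mathcal{A}}(p_t)\ge V_M^{\mathcal{A}}(p_t,H)$). The induced inequality (Lemma~\ref{lem:escape_prob}, instantiated on the real MDP) then yields
\[
V_M^{\mathcal{A}}(p_t,H) \;\ge\; V_{\Mgoal}^{\optpigoal}(s_t,H) \;-\; \frac{\Pr(A_M)}{1-\gamma},
\]
because the two processes agree until the first escape from $K$ and differ by at most $V_{\max}=1/(1-\gamma)$ on the escaping event, whose probability is $\Pr(A_M)$. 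A second application of the same escape bound, now comparing $V_{\Mgoal}^{\optpigoal}$ against the fully optimistic value $\opt{V}_{\optMgoal}^{\optpigoal}$, contributes another $\Pr(A_M)/(1-\gamma)$: off $K$ the optimistic transitions are unconstrained and can only be reconciled crudely (agree until escape, differ by at most $V_{\max}$ afterwards), whereas \emph{on} $K$ the discrepancy is the genuinely small simulation-lemma error of Lemma~\ref{lem:epsilon_close}, which the choice of $\tilde{\epsilon}$ drives below $\epsilon/2$. Finally, optimism (Lemma~\ref{lem:optimistic_q}, valid since admissibility makes the true transition a candidate) gives $\opt{V}_{\optMgoal}^{\optpigoal}(s_t)\ge V_{\Mgoal}^*(s_t)$. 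Chaining the steps and absorbing the truncation and simulation errors into $\epsilon/2$ produces exactly
\[
V_M^{\mathcal{A}}(p_t) \;\ge\; V_{\Mgoal}^*(s_t) - \frac{\epsilon}{2} - 2\,\frac{\Pr(A_M)}{1-\gamma}.
\]

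The main obstacle is the bookkeeping around the \emph{two} appearances of the escape probability and around the $-\infty$ rewards of $\Mgoal$. For the former, the delicate point is that $\Pr(A_M)$ is defined for the agent acting on the true MDP $M$, yet the second escape term arises from a trajectory evaluated under $\optMgoal$; I would reconcile these by observing that the two processes coincide on $K$ up to the transition discrepancy already bounded there, so the escape probabilities agree up to the same $O(\tilde{\epsilon})$ error already folded into $\epsilon/2$. For the latter, I must ensure that every value function I manipulate is finite: since $\optpigoal$ is the optimistic policy and state-action pairs in $\hatZunsafe$ carry reward $-\infty$, $\optpigoal$ avoids them, so on the non-escaping branch all rewards lie in $[0,1]$; and Lemma~\ref{lem:epsilon_close} is invoked in its form guaranteeing that the two value functions are either within the stated error or both equal to $-\infty$, the latter case being excluded here. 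Verifying that the stated $m$ indeed forces $\tilde{\epsilon}=O\!\left(\min(\tau/2,\ \epsilon(1-\gamma)^2/3)\right)$, and hence that the cumulative simulation and truncation error stays below $\epsilon/2$, is the remaining routine computation.
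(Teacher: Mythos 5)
Your outline follows the same MBIE-style recipe as the paper, but your choice of intermediate quantity breaks the chain. You route everything through $V_{\Mgoal}^{\optpigoal}$, the value of the fixed policy $\optpigoal$ evaluated under the \emph{true} transitions with the $\Mgoal$ rewards. The problem is what happens after the trajectory leaves $K$: off $K$ the true dynamics are unknown and essentially arbitrary, and they can carry $\optpigoal$ into a state where \emph{every} action lies in $\hatZunsafe$ (e.g., an a priori dangerous state where all rewards are negative; such states exist by construction of $\hatZunsafe$). At such a state $\optpigoal$ cannot ``avoid'' $\hatZunsafe$, so with positive probability the continuation reward is $-\infty$ and hence $V_{\Mgoal}^{\optpigoal}(s_t)$ (and its $H$-truncation) equals $-\infty$. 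Your defense that ``$\optpigoal$ avoids $\hatZunsafe$'' is only valid under the \emph{optimistic} transitions --- that is exactly what optimism buys, and it says nothing about $\optpigoal$'s behavior under the true dynamics outside $K$. Once $V_{\Mgoal}^{\optpigoal}=-\infty$, your second inequality, $V_{\Mgoal}^{\optpigoal} \geq \opt{V}_{\optMgoal}^{\optpigoal} - \Pr(A_M)/(1-\gamma) - O(\epsilon)$, is false, since the right-hand side is finite (the optimistic value dominates $V^*_{\Mgoal}$, which dominates the value of the safe-optimal policy). Note also that the paper's Lemma~\ref{lem:escape_prob}, which you invoke for this step, explicitly requires the policy to receive non-negative rewards with probability $1$ in the MDP being bounded --- precisely the hypothesis that fails here. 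Finally, even if rewards were clipped to $[-1,1]$, your accounting is off: in the second comparison there is no positivity guarantee for $\optpigoal$'s post-escape continuation under true dynamics, so that branch costs $2\Pr(A_M)/(1-\gamma)$ (the gap between $-1/(1-\gamma)$ and $+1/(1-\gamma)$), not $\Pr(A_M)/(1-\gamma)$; together with your first comparison this yields $3\Pr(A_M)/(1-\gamma)$, exceeding the bound you must prove.

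The idea your proposal is missing is the paper's hybrid MDP $\Mgoal'$, defined to have the \emph{true} transitions on $K$ and the \emph{optimistic} transitions off $K$ (with rewards $\Rgoal$ throughout). With this intermediate, (i) the comparison of $\Mgoal'$ against $\optMgoal$ needs \emph{no} escape term at all: the two MDPs are identical off $K$ and differ only on $K$, where the confidence intervals are tight and the next-state supports agree by Lemma~\ref{lem:del-support}, so the simulation bound of Lemma~\ref{lem:epsilon_close} applies cleanly (its ``both $-\infty$'' branch handling the degenerate case); and (ii) the escape probability enters exactly once, in comparing the algorithm on $M$ with $\optpigoal$ on $\Mgoal'$, where the algorithm's side is bounded below by $-1/(1-\gamma)$ (true rewards of $M$, never $-\infty$) and the $\Mgoal'$ side is bounded above by $+1/(1-\gamma)$ --- which is precisely where the clean $2\Pr(A_M)/(1-\gamma)$ comes from. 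This construction also dissolves the ``escape probabilities under true versus optimistic dynamics'' reconciliation you flag as delicate: once $\Mgoal'$ is in place, the only escape event that ever appears is $A_M$ itself, because the two processes being compared coincide trajectory-by-trajectory until the first exit from $K$.
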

\begin{proof}
While we follow the general outline of the proof of Theorem 1 from \citet{strehl2008analysis}, we note that there are crucial differences for incorporating safety (such as dealing with rewards of $-\infty$).

At the outset, we establish two useful inequalities. 
First, since $H=O\left(\frac{1}{1-\gamma} \ln \frac{1}{\epsilon(1-\gamma)}\right)$, by Lemma \ref{lem:horizon_bound}, we have that:
\begin{equation} \label{eq:epsilon_optimal:truncate}
 V_{\optMgoal}^{\optpigoal}(s_t, H)\geq V_{\optMgoal}^{\optpigoal}(s_t) - \frac{\epsilon}{3}
\end{equation}

Secondly, let $\Mgoal'$ be an MDP that is equivalent to $\Mgoal$ for all state-action pairs in $K$ and equal to $\optMgoal$ otherwise. (Note that all these MDPs have the same reward function, namely $\Rgoal$.)
We claim that for all $a$:
%
\begin{equation} \label{eq:epsilon_optimal:finite-approximate}
Q_{\Mgoal'}^{\optpigoal}(s_t, a, H) \geq Q_{\optMgoal}^{\optpigoal}(s_t, a, H) - \frac{\epsilon}{3}
\end{equation}

Let us see why this inequality holds.
From Lemma \ref{lem:size_of_m}, we have when $m \geq O\left( \frac{1}{\epsilon^{2}(1-\gamma)^{4}} \left ( \frac{|S|}{\tilde{\epsilon}}+ \frac{1}{\tilde{\epsilon}^2}\ln \frac{ |S||A|}{\tilde{\epsilon}^2 \delta} \right) \right)$, the width of the confidence interval of $(s,a) \in K$ is at most $O(\tilde{\epsilon}) = \frac{\epsilon}{3} \frac{(1-\gamma)^2}{\gamma}$. Then, 
Lemma~\ref{lem:epsilon_close} can be used to bound the difference in the value functions by $\epsilon/3$ as above.

Note that to apply Lemma~\ref{lem:epsilon_close},  we must also ensure that for all $(s,a) \in K$, the support of the next state distribution is the same under $\Mgoal'$ and $\optMgoal$. To see why this is true, observe that for all $(s,a) \in K$, the width of the confidence interval is at most $O(\tilde{\epsilon}) = \tau/2$. Then, since the transition probability for $(s,a) \in K$ in $\opt{M}^\dagger$ and $M'$ correspond to $\opt{T}^\dagger \in CI(\del{T})$ and $T$ respectively, Lemma~\ref{lem:del-support} implies that the support of these state-action pairs are indeed the same for these two transition functions. Also note that Lemma~\ref{lem:epsilon_close} implies that these value functions are either $\epsilon_1$-close to each other or both equal to $-\infty$; even in the latter case, the above inequality would hold (although, to be precise, this latter case does not really matter since Theorem~\ref{thm:pac_and_safe} eventually shows that these quantities are lower bounded by a positive quantity).\\

Having established the above inequalities, we now begin lower bounding the value of the algorithm.
First, since the rewards $R(\cdot, \cdot)$ are bounded below by $-1$, and since $H=O\left(\frac{1}{1-\gamma} \ln \frac{1}{\epsilon (1-\gamma)}\right)$, by Lemma~\ref{lem:horizon_bound}, we can lower bound the infinite-horizon value of the algorithm by its finite-horizon value as
\begin{equation}\label{eq:epsilon_optimal:algorithm-truncate}
  V_{M}^{\mathcal{A}}(p_t) \geq V_{M}^{\mathcal{A}}(p_t, H) - \frac{\epsilon}{3}.
\end{equation}

Next, we claim to bound the value of following the algorithm for the next $H$ steps as follows:
%
%
\begin{align*} 
  V_{M}^{\mathcal{A}}(p_t, H) & \geq V_{\Mgoal^{\prime}}^{\optpigoal}(s, H) - 2\frac{\Pr\left(A_{M}\right) }{1-\gamma} \numberthis\label{eq:epsilon_optimal:escape}.
\end{align*}


In other words, we have lower bounded the value of following the algorithm on $M$, in terms of following $\optpigoal$ on $\Mgoal'$. Let us see why this is true. For the sake of convenience, let us define two cases corresponding to the above inequality: Case A, where the agent follows Algorithm~\ref{alg:ase} to take actions for $H$ steps starting from $s_t$ in MDP $M$ and Case B, where the agent follows the fixed policy $\optpigoal$ to take actions for $H$ steps starting from $s_t$ in MDP $\Mgoal'$. 

Now, recall that we are considering a state $s_t$ where Algorithm~\ref{alg:ase} currently follows $\optpigoal$, and will continue to follow it until it takes an action in $K^c$. Then, consider a particular random seed for which, in Case A, the agent does not reach $K^c$. 

We argue that for this random seed, the agent will see the same cumulative discounted reward over $H$ steps in both Case A and Case B. To see why, recall that $\Mgoal'$ and $M$ share the same transition functions on $K$. Next, since in Case A, the agent does not escape $K$, by Lemma~\ref{lem:pigoal_escape}, we know that the agent follows only $\optpigoal$ for $H$ steps. Thus in both these cases, the agent experiences the same sequence of state-action pairs for $H$ steps. It only remains to argue that these state-action pairs have the same rewards in both cases. To see why, recall that by design of Algorithm~\ref{alg:ase}, since the agent does not escape $K$, all these state-action pairs would belong to $\optZgoal$ which in turn is a subset of $\hatZsafe$. Now the MDP $M$ and $\Mgoal'$ share the same rewards on $\hatZsafe$; this is because, their rewards differ only in $\hatZunsafe$, and we know $\hatZunsafe \cup \Zsafe = \emptyset$ (Lemma~\ref{lem:correctness_z_explore}).
Thus, in both cases, the agent experiences the same sequence of rewards. 

As a result, the value functions in Case A and B differ only due to trajectories where the algorithm either leads the agent to $K^c$ in $H$ steps or leads the agent to negative rewards any time in the future. Hence, we can upper bound the difference $V_{\Mgoal^{\prime}}^{\optpigoal}(s_t, H) - V_{M}^{\mathcal{A}}(p_t, H)$ in terms of  $\Pr(A_{M})$ multiplied by the maximum difference between the respective cumulative rewards. We know that the cumulative reward in Case A is at least $-1/(1-\gamma)$, because the rewards $R(\cdot, \cdot)$ are bounded below by $-1$. On the other hand, in Case B, the cumulative reward experienced is at most $1/(1-\gamma)$, assuming the agent receives a reward of $1$ for each step. From this, we establish Equation~\ref{eq:epsilon_optimal:escape}.  

Subsequently, we further lower bound $V_{\Mgoal^{\prime}}^{\optpigoal}(s_t, H)$ as follows:
\begin{align*}
  V_{\Mgoal'}^{\optpigoal}(s_t, H) & \geq V_{\optMgoal}^{\optpigoal}\left(s_t, H\right) - \frac{\epsilon}{3} \\
   & \geq V_{\optMgoal}^{\optpigoal}\left(s_t\right) - 2\frac{\epsilon}{3}\\
   & \geq V_{\Mgoal}^{*}\left(s_t\right) - \epsilon
\end{align*}
Here, the first inequality comes from Equation~\ref{eq:epsilon_optimal:finite-approximate}. 
The second inequality comes from Equation~\ref{eq:epsilon_optimal:truncate}.  The last step comes from Lemma \ref{lem:optimistic_q} (given admissibility).
Then, by combining the above inequality with Equations~\ref{eq:epsilon_optimal:escape} and~\ref{eq:epsilon_optimal:algorithm-truncate}, we get our final result.
\end{proof}



\subsection{Supporting lemmas for showing PAC-MDP}

The following lemmas are necessary for proving our algorithm is PAC-MDP.
Note that most of these lemmas are similar to lemmas from \citet{strehl2008analysis}.
However, because we construct MDPs with infinitely negative rewards, additional care must be taken to ensure that these properties still hold.

Here we provide a quick description of the lemmas detailed in this section.
We start with Lemma~\ref{lem:epsilon_close}, which shows that if two MDPs have sufficiently similar transition functions, the optimal $Q$-values on these two MDPs must also be similar.
This, together with Lemma~\ref{lem:size_of_m}, allows us to show that, if we have sufficiently explored the state-space, we can accurately estimate the optimal policy on any MDP.
Next we show, in Lemma~\ref{lem:escape_prob}, that the difference between the value functions of the true and an estimated MDP for a given policy is proportional to the probability of reaching an under-explored state-action pair, i.e. a state-action pair outside of $K$.
This allows us to claim that either the probability of reaching an element outside of $K$ is sufficiently large, or our estimated value function is sufficiently accurate.
Lemma~\ref{lem:horizon_bound} bounds the difference between the finite horizon and infinite horizon value functions, allowing us to consider only finite length trajectories. 
Lemma~\ref{lem:optimistic_q} simply shows that our optimistic value function always over-estimates the true value function (given that our confidence intervals are admissible).

\begin{lemma} \label{lem:epsilon_close}
  Let $M_1 = \langle S, A, T_1, R^\dagger, \gamma^\dagger \rangle$ and $M_2 = \langle S, A, T_2, R^\dagger, \gamma^\dagger \rangle$ be two MDPs with identical rewards that either belong to $[0,1]$ or equal $-\infty$ and $\gamma^\dagger<1$. Let $K$ be a subset of state-action pairs such that 
  \begin{enumerate}
  \item for all $(s,a) \notin K$, $T_1(s, a, \cdot) = T_2(s, a, \cdot)$,
  \item for all $(s,a) \in K$, $\|T_1(s,a,\cdot) - T_2(s,a,\cdot) \|_1 \leq \beta$ and
  \item for all $(s,a) \in K$, the next state distribution $(s,a)$ has identical support under both $T_1$ and $T_2$.
  \end{enumerate}
  Then, for any (stationary, deterministic) policy $\pi$, and for any $(s,a)$ and any $H \geq 0$, we have that, either:
  $$ \left|Q_{M_1}^{\pi}(s, a, H)-Q_{M_2}^{\pi}(s, a, H)\right| \leq \min \left( \frac{\gamma^\dagger \beta}{(1-\gamma^\dagger)^{2}}, \beta H^2 \right).$$
  or
    $$Q_{M_1}^{\pi}(s, a, H)=Q_{M_2}^{\pi}(s, a, H) = -\infty.$$

\end{lemma}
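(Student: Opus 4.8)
The plan is to prove the stronger \emph{dichotomous} statement by induction on $H$: for every state-action pair $(s,a)$, either $Q^\pi_{M_1}(s,a,H)$ and $Q^\pi_{M_2}(s,a,H)$ are both $-\infty$, or both are finite and satisfy the claimed bound. Carrying the dichotomy explicitly is essential, because the infinite rewards make naive subtraction ($-\infty$ minus $-\infty$) meaningless, and the entire purpose of hypothesis~3 (identical support on $K$, together with exact agreement off $K$) is to guarantee that the two MDPs agree on exactly which trajectories under $\pi$ ever visit a $-\infty$-reward pair. First I would record the fixed-policy Bellman recursion $Q^\pi_M(s,a,H) = R^\dagger(s,a) + \gamma^\dagger \sum_{s'} T_M(s,a,s')\,Q^\pi_M(s',\pi(s'),H-1)$ with $Q^\pi_M(s,a,0)=0$, and note that since $R^\dagger$ is shared, the reward term cancels upon subtraction.

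For the dichotomy, the base case $H=0$ is immediate (both values equal $0$, so the difference is $0$, matching $\beta H^2=0$). In the inductive step, if $R^\dagger(s,a)=-\infty$ then both values are $-\infty$; otherwise the common support $\mathrm{supp}(s,a)$ is identical across $M_1,M_2$ by hypotheses~1 and~3, so by the inductive hypothesis each successor $Q^\pi_{M_i}(s',\pi(s'),H-1)$ is simultaneously finite or infinite for both $i$. If some successor in the support is $-\infty$, then both $Q$-values at $(s,a)$ are $-\infty$; otherwise both are finite and I pass to the quantitative estimate. This is the step I expect to be the main obstacle: threading the support hypothesis through the recursion so the finite/infinite status propagates in lockstep, rather than silently assuming all rewards are finite as the standard simulation lemma does.

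In the finite sub-case I would bound $\Delta_H := \sup|Q^\pi_{M_1}(s,a,H)-Q^\pi_{M_2}(s,a,H)|$ (the supremum over finite pairs) via the usual add-and-subtract decomposition, writing $V_i(s',\cdot)=Q^\pi_{M_i}(s',\pi(s'),\cdot)$:
\begin{align*}
Q^\pi_{M_1}(s,a,H)-Q^\pi_{M_2}(s,a,H)
&= \gamma^\dagger\sum_{s'} T_1(s,a,s')\big[V_1(s',H-1)-V_2(s',H-1)\big]\\
&\quad + \gamma^\dagger\sum_{s'} \big[T_1(s,a,s')-T_2(s,a,s')\big]V_2(s',H-1).
\end{align*}
The first sum is at most $\gamma^\dagger\Delta_{H-1}$ by the inductive bound applied to each support successor. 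The second sum ranges effectively only over $\mathrm{supp}(s,a)$ (where $T_1-T_2$ is supported, by hypotheses~1 and~3, and where all successor values are finite), so Hölder's inequality bounds it by $\gamma^\dagger\|T_1(s,a,\cdot)-T_2(s,a,\cdot)\|_1\,\max_{s'\in\mathrm{supp}(s,a)}|V_2(s',H-1)| \le \gamma^\dagger\beta\min\!\big(\tfrac{1}{1-\gamma^\dagger},\,H-1\big)$, using hypotheses~1--2 for the $L_1$ factor and the fact that a horizon-$(H-1)$ value of $[0,1]$-rewards lies in $[0,\min(\tfrac{1}{1-\gamma^\dagger},H-1)]$.

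Finally I would close the two bounds by unrolling $\Delta_H \le \gamma^\dagger\Delta_{H-1} + \gamma^\dagger\beta\min(\tfrac{1}{1-\gamma^\dagger},H-1)$ from $\Delta_0=0$. Using the $\tfrac{1}{1-\gamma^\dagger}$ factor gives $\Delta_H\le \tfrac{\gamma^\dagger\beta}{1-\gamma^\dagger}\sum_{t=0}^{H-1}(\gamma^\dagger)^t\le \tfrac{\gamma^\dagger\beta}{(1-\gamma^\dagger)^2}$ (the recursion in fact closes with equality in the worst case); using the $H-1$ factor together with $\gamma^\dagger\le 1$ gives $\Delta_H\le \beta\sum_{j=0}^{H-1}j = \beta\tfrac{(H-1)H}{2}\le \beta H^2$. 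Since each inequality holds for every finite pair, the difference is bounded by their minimum, which completes the induction and hence the lemma.
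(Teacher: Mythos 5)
Your proposal is correct and follows essentially the same route as the paper's proof: induction on $H$, propagating the finite/$-\infty$ dichotomy through the common next-state support (hypotheses 1 and 3), the standard add-and-subtract decomposition of the $Q$-difference, and bounding the cross term by $\gamma^\dagger\beta$ times a bound on the finite successor values to obtain both the $\frac{\gamma^\dagger\beta}{(1-\gamma^\dagger)^2}$ and $\beta H^2$ estimates. The only (immaterial) differences are bookkeeping: the paper carries the sharper invariant $\frac{\gamma^\dagger\beta}{1-\gamma^\dagger}\cdot\frac{1-\gamma^{\dagger H}}{1-\gamma^\dagger}$ and proves $\beta H^2$ by a separate induction, whereas you unroll the recursion $\Delta_H \leq \gamma^\dagger\Delta_{H-1} + \gamma^\dagger\beta\min\bigl(\tfrac{1}{1-\gamma^\dagger}, H-1\bigr)$, which in fact yields the slightly tighter $\beta H(H-1)/2$.
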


\begin{proof}
First, we note that for any $(s,a)$ such that $R^\dagger(s,a) = -\infty$, $Q_{M_1}^{\pi}(s, a, H) = Q_{M_2}^{\pi}(s, a, H) = -\infty$
 for all $H$. Hence, for the rest of the discussion, we will consider $(s,a)$ such that $R^\dagger(s,a) \neq -\infty$.

We prove our claim by induction on $H$. For $H=1$, for all $(s,a)$, $Q_{M_1}^{\pi}(s, a, H) = Q_{M_2}^{\pi}(s, a, H)= R^\dagger(s,a)$.
  
Consider any arbitrary $H$.
First, we show that, if there exists a next state $s'$ in the support of $T_1(s, a, \cdot)$ such that $Q_{M_1}^\pi(s',\pi(s'),H-1) = -\infty$, then $Q_{M_1}^{\pi}(s, a, H) = Q_{M_2}^{\pi}(s, a, H) = -\infty$.
Note that, by conditions 1 and 3 of the Lemma statement, we have that for all $(s,a)$, regardless of whether in $K$ or not, the support of the next state distribution is identical between $T_1$ and $T_2$. Hence, if there exists a next state $s'$ in the support of $T_1$ such that $ Q_{M_1}^\pi(s',\pi(s'),H-1) = -\infty$, then $s'$ would belong even to the support of $T_2$ and, thus, we would have that $ Q_{M_2}^\pi(s',\pi(s'),H-1) = -\infty$.
Hence, by definition of Q-values, we would have $Q_{M_1}^{\pi}(s, a, H) = Q_{M_2}^{\pi}(s, a, H) = -\infty$.

Now consider a case where none of the next states $s'$ in the support of $T_1$ (and $T_2$, without loss of generality) have Q-value $Q_{M_1}^\pi(s',\pi(s'),H-1) = -\infty$. We will prove by induction that in this case, 
\[ \left|Q_{M_1}^{\pi}(s, a, H)-Q_{M_2}^{\pi}(s, a, H)\right| \leq \frac{\gamma^\dagger \beta}{1-\gamma^\dagger} \cdot \frac{1-\gamma^{\dagger H}}{1-\gamma^\dagger}.\]

Note that when $H=0$, the right hand side above resolves to zero, which is indeed true. 

Consider any $H>0$. Now, observe that for all $(s,a)$, regardless of whether in $K$ or not, we have $\|T_1(s,a,\cdot) - T_2(s,a,\cdot) \|_1 \leq \beta$. Besides, since $R^\dagger(s,a) \neq -\infty$, we can upper bound the difference in the $Q$ values as follows:
\begin{align*}
  &|Q_{M_1}^\pi(s,a,H) - Q_{M_2}^\pi(s,a, H)| \\
  &= \gamma^\dagger \left| \sum_{s'} T_1(s,\pi(s'),s') Q_{M_1}^\pi(s',\pi(s'),H-1) -  \sum_{s'} T_2(s,\pi(s'),s')  Q_{M_2}^\pi(s',\pi(s'),H-1)   \right| \\
  & \leq \gamma^\dagger \left| \sum_{s'} T_1(s,\pi(s'),s') \left( Q_{M_1}^\pi(s',\pi(s'),H-1) -  Q_{M_2}^\pi(s',\pi(s'),H-1)   \right)\right| \\ 
  & \,\,\, + \gamma^\dagger \left| \sum_{s'} \left(T_1(s,\pi(s'),s') -  T_2(s,\pi(s'),s')\right)  Q_{M_2}^\pi(s',\pi(s'),H-1)   \right| \\
  & \leq \gamma^\dagger \left(  \frac{\gamma^\dagger \beta}{1-\gamma^\dagger} \cdot \frac{1-\gamma^{\dagger H-1}}{1-\gamma^\dagger}\right) + \gamma^\dagger \frac{\beta}{1-\gamma^\dagger} \\
  &= \frac{\gamma^\dagger\beta}{1-\gamma^\dagger} \left( \frac{\gamma^\dagger(1-\gamma^{\dagger H-1})}{1-\gamma^\dagger}+1\right) \\
  &= \frac{\gamma^\dagger \beta}{1-\gamma^\dagger} \cdot \frac{1-\gamma^{\dagger H}}{1-\gamma^\dagger}
\end{align*}

Here, the second step follows by a simple algebraic rearrangement that decomposes the difference in the Q-values, in terms of the difference in the transitions and the difference in the next-state Q-values. 
In the third step, for the first term, we make use of the fact that in this case, the next state $s'$ has a Q-value that is not $-\infty$; this means that $(s', \pi(s'))$ does not have any next states with Q-value $-\infty$  and therefore the induction assumption holds. By applying the induction assumption for $H-1$, we get the first term. For the second term, we make use of the fact that the total sum of transition probabilities equals $1$. Furthermore, we also make use of the fact the maximum magnitude of the Q-value is at most $\frac{1}{1-\gamma^\dagger}$ if it is not $-\infty$; this is because, if the Q-value is not $-\infty$, it is a discounted summation of expected rewards that lie between $0$ and $1$.

Hence, our induction hypothesis is true. Our main upper bound can then be established by noting that $1-\gamma^{\dagger H} < 1$.

To prove our other upper bound, we consider the induction hypothesis:
\[ \left|Q_{M_1}^{\pi}(s, a, H)-Q_{M_2}^{\pi}(s, a, H)\right| \leq \beta H^2.\]
Then, in the third step above, we would instead have:
\begin{align*}
  |Q_{M_1}^\pi(s,a,H) - Q_{M_2}^\pi(s,a, H)| 
  & \leq \gamma^\dagger \left(  \beta (H-1)^2 \right)     + \gamma^\dagger \beta H   \\
  & \leq  \beta (H-1)^2 + \beta H \\
  & \leq \beta H^2.
\end{align*}
  
To get the first term on the right hand side, we again make use of the induction assumption. For the second term, we simply upper bound the sum of the maximum discounted rewards to be $H$. Finally, we make use of the fact that $\gamma^\dagger < 1$ and $H^2 - (H-1)^2 \geq 2H-1 \geq H$ when $H>0$.
\end{proof}

\begin{lemma} \label{lem:size_of_m}
Suppose that as input to Algorithm~\ref{alg:ase}, we set $\delta_T = \delta / (2 |S| |A| m)$ and all confidence intervals computed by our algorithm are admissible.
   Then, for any $\beta > 0$, there exists an $m=O \left(\frac{|S|}{\beta^{2}}+\frac{1}{\beta^{2}} \ln \frac{|S| |A|}{\beta \delta} \right)$ such that $\| \hat{T}(s, a, \cdot) = T(s, a, \cdot) \|_1 \leq \beta$ holds for all state-action pairs $(s, a)$ that have been experienced at least $m$ times.
\end{lemma}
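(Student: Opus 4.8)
The plan is to reduce this to a purely deterministic calculation, since the probabilistic content is already packaged into the admissibility hypothesis. By definition, admissibility of the confidence interval for an experienced pair $(s,a)$ means the true transition lies within the $L_1$ ball of radius $\hat{\epsilon}_T(s,a)$ around the empirical estimate, i.e. $\|\hat{T}(s,a,\cdot) - T(s,a,\cdot)\|_1 \le \hat{\epsilon}_T(s,a)$. Hence it suffices to exhibit an $m$ of the claimed order for which $\hat{\epsilon}_T(s,a) \le \beta$ holds whenever $n(s,a) \ge m$. Because Algorithm~\ref{alg:ase} caps each count at $m$, "experienced at least $m$ times" means $n(s,a)=m$, and since the width in Equation~\ref{eq:eps_T} is monotonically decreasing in the count, I only need to verify $\hat{\epsilon}_T \le \beta$ at $n=m$.

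First I would substitute the prescribed $\delta_T = \delta/(2|S||A|m)$ into Equation~\ref{eq:eps_T} and square both sides, so that $\hat{\epsilon}_T \le \beta$ becomes
\[
\beta^2 m \ge 2\left[\ln(2^{|S|}-2) + \ln\tfrac{2|S||A|}{\delta} + \ln m\right].
\]
Linearizing the exponential-in-$|S|$ term via $\ln(2^{|S|}-2) \le |S|\ln 2$, this reduces to the requirement $m \ge \frac{2}{\beta^2}\left[|S|\ln 2 + \ln\frac{2|S||A|}{\delta} + \ln m\right]$. Every term on the right except the last is an explicit constant in $m$, and matching it against the target already produces the two pieces $\frac{|S|}{\beta^2}$ and $\frac{1}{\beta^2}\ln\frac{|S||A|}{\beta\delta}$ of the claimed bound.

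The one genuinely nontrivial step is the self-referential $\ln m$ on the right-hand side, which I would dispatch by the standard log-domination argument. Taking the candidate $m_0 = \frac{C}{\beta^2}\left(|S| + \ln\frac{|S||A|}{\beta\delta}\right)$ for a sufficiently large absolute constant $C$, one checks that $\ln m_0 = O\!\left(\ln\frac{1}{\beta} + \ln|S| + \ln\frac{|S||A|}{\delta}\right)$, so the contribution $\frac{2}{\beta^2}\ln m_0$ is absorbed into the $\frac{|S|}{\beta^2}$ and $\frac{1}{\beta^2}\ln\frac{|S||A|}{\beta\delta}$ terms already present in $m_0$; hence $m_0$ satisfies the displayed inequality, giving $m = O\!\left(\frac{|S|}{\beta^2} + \frac{1}{\beta^2}\ln\frac{|S||A|}{\beta\delta}\right)$ as desired. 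I expect this $\ln m$ term to be the only obstacle, and it is routine bookkeeping rather than a conceptual one: everything else is direct substitution into Equation~\ref{eq:eps_T} and monotonicity of the confidence width in the sample count.
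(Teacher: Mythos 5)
Your proof is correct and is essentially the same argument as the paper's, which simply cites Lemma 5 of \citet{strehl2008analysis}: that cited proof likewise inverts the $L_1$ concentration width (Eq~\ref{eq:eps_T}) under the union-bounded choice $\delta_T = \delta/(2|S||A|m)$, and your handling of the self-referential $\ln m$ term by log-domination is the standard step hidden inside that citation. The only difference is that you make explicit the reduction via admissibility ($\|\hat{T}(s,a,\cdot)-T(s,a,\cdot)\|_1 \le \hat{\epsilon}_T(s,a)$) and the capping of $n(s,a)$ at $m$, both of which are consistent with how the lemma is stated and used in the paper.
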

\begin{proof}
  See Lemma 5 from \citet{strehl2008analysis}.
\end{proof}

\begin{lemma} \label{lem:escape_prob}
   Let $M^\dagger = \langle S, A, R^\dagger, T, \gamma^\dagger \rangle$ be an MDP that is the same as the true MDP $M$, but with arbitrary rewards $R^\dagger$ (bounded above by $1$) and discount factor $\gamma^\dagger < 1$. Let $K$ be some set of state-action pairs. Let $M' = \langle S, A, R^\dagger, T', \gamma^\dagger \rangle$ be an MDP such that $T'$ is identical to $T$ on all elements inside $K$.  
   Consider a policy $\pi$. Let $A_{M^\dagger}$ be the event that a state-action pair not in $K$ is encountered in a trial generated by starting from state $s_1$ and following $\pi$ for $H$ steps in $M^\dagger$ (where $H$ is a positive constant).   If, with probability $1$, the agent starting at $s_1$ and following $\pi$ in $M^\dagger$ will receive only non-negative rewards then, 
   $$ V_{M}^{\pi}\left(s_{1}, H\right) \geq V_{M^{\prime}}^{\pi}\left(s_{1}, H\right)- \min\left(\frac{1}{(1 - \gamma^\dagger)}, H \right) \Pr(A_{M}).$$
\end{lemma}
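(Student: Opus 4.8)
The plan is to exploit the fact that $M^\dagger$ and $M'$ have \emph{identical} transitions on every state-action pair inside $K$, so that the two processes can be coupled to produce the same trajectory until the first time the agent leaves $K$. First I would couple the two $H$-step trajectories generated by following $\pi$ from $s_1$ under $M^\dagger$ and under $M'$, using a shared source of randomness for the next-state draws. Since $T$ and $T'$ agree on $K$ and the policy $\pi$ is fixed, the coupled trajectories coincide exactly up to and including the first step at which a state-action pair outside $K$ is encountered; in particular they coincide for the whole horizon on the complement of the escape event $A_{M^\dagger} = \{(s_i,a_i)\notin K \text{ for some } i \text{ in the first } H \text{ steps}\}$. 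A useful byproduct of this coupling is that $\Pr(A_{M^\dagger})$ is the same whether computed under $M^\dagger$ or under $M'$, because the escape event depends only on the portion of the trajectory up to the first departure from $K$, where the two dynamics are indistinguishable.

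Next I would let $W$ denote the random discounted cumulative reward collected over the $H$-step trajectory, so that $V^{\pi}_{M^\dagger}(s_1,H) = \mathbb{E}_{M^\dagger}[W]$ and $V^{\pi}_{M'}(s_1,H) = \mathbb{E}_{M'}[W]$. Splitting each expectation according to whether $A_{M^\dagger}$ occurs, the coupling makes the contributions on the complement of $A_{M^\dagger}$ identical (the reward sequences literally agree there), so these cancel and leave
\[ V^{\pi}_{M'}(s_1,H) - V^{\pi}_{M^\dagger}(s_1,H) = \mathbb{E}_{M'}\!\left[W\,\mathbf{1}(A_{M^\dagger})\right] - \mathbb{E}_{M^\dagger}\!\left[W\,\mathbf{1}(A_{M^\dagger})\right]. \]
I would then bound the two surviving terms one-sidedly. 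Because $R^\dagger \le 1$, the reward $W$ is \emph{deterministically} at most $\sum_{i}(\gamma^\dagger)^i \le \min\!\left(\tfrac{1}{1-\gamma^\dagger}, H\right)$, whence $\mathbb{E}_{M'}[W\,\mathbf{1}(A_{M^\dagger})] \le \min\!\left(\tfrac{1}{1-\gamma^\dagger},H\right)\Pr(A_{M^\dagger})$; crucially, this uses only the upper bound on $W$, so it remains valid even if $R^\dagger$ equals $-\infty$ somewhere outside $K$. This is exactly where the non-negativity hypothesis is needed: since with probability $1$ the agent following $\pi$ in $M^\dagger$ receives only non-negative rewards, $W \ge 0$ almost surely under $M^\dagger$, and hence $\mathbb{E}_{M^\dagger}[W\,\mathbf{1}(A_{M^\dagger})] \ge 0$. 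Dropping this non-negative term and rearranging yields $V^{\pi}_{M^\dagger}(s_1,H) \ge V^{\pi}_{M'}(s_1,H) - \min\!\left(\tfrac{1}{1-\gamma^\dagger},H\right)\Pr(A_{M^\dagger})$, which is the claimed inequality once $M$ and $A_M$ in the statement are read as $M^\dagger$ and $A_{M^\dagger}$.

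The main obstacle is getting the coupling bookkeeping right in the presence of possibly infinite rewards. Concretely, I must verify that the cancellation on the non-escape event is legitimate, which it is precisely because on that event the agent stays in $K$, where it provably collects only finite, non-negative rewards, so no $-\infty$ term arises there; and I must keep the two escape-term bounds correctly oriented (an upper bound on the $M'$ term, a lower bound of $0$ on the $M^\dagger$ term) so that any $-\infty$ rewards incurred by $M'$ after leaving $K$ can only help, never break, the inequality.
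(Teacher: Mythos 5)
Your proposal is correct and follows essentially the same argument as the paper: couple the two processes via a shared random seed, observe that trajectories (and hence rewards) coincide on the non-escape event, and then bound the escape-event contributions by $\min\left(\frac{1}{1-\gamma^\dagger}, H\right)$ in $M'$ (using $R^\dagger \le 1$) and by $0$ in $M^\dagger$ (using the almost-sure non-negativity hypothesis). Your treatment of the $-\infty$ rewards and of the reading of $M$, $A_M$ as $M^\dagger$, $A_{M^\dagger}$ is also consistent with the paper's (terser) reasoning.
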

\begin{proof}
  This proof is similar to that of Lemma 3 from \citet{strehl2008analysis}.
  The only aspect we need to be careful about is the magnitude of the rewards.
  
  The two MDPs $M^\dagger$ and $M'$ differ only in their transition functions, and moreover, only outside the set $K$. Then, observe that, for a fixed random seed, if the agent were to follow $\pi$
  starting from $s_1$, it would receive the same cumulative reward for $H$ steps in both $M^\dagger$ and $M'$, if it remained in $K$ for all those $H$ steps. In other words, the value of $\pi$ in these two MDPs differs only because of those random seeds which led the agent out of $K$ in $M^\dagger$. 
  Thus, the difference $V_{M^{\prime}}^{\pi}\left(s_{1}, H\right) - V_{M^\dagger}^{\pi}\left(s_{1}, H\right)$
  cannot be any larger than the respective cumulative rewards. Our claim then follows by lower bounding the cumulative reward in $M^\dagger$ and upper bounding the cumulative reward in $M'$. More concretely, note that cumulative reward in $M^\dagger$ can not be any lower than zero, since we assume that the agent receives rewards bounded in $[0,1]$. On the other hand, in $M'$, the agent can receive a reward of $1$ in all $H$ steps, so the value function can be upper bounded by $\min\left(\frac{1}{(1 - \gamma^\dagger)}, H \right)$.


\end{proof}

\begin{lemma} \label{lem:horizon_bound}
   Consider an MDP $M^\dagger = \langle S, A, T^\dagger, R^\dagger, \gamma^\dagger\rangle$ with rewards bounded above by $1$, and a stationary or non-stationary policy $\pi$ and state $s$. Then, for any $H \geq 0$, we have:
   \[
  V_{M^\dagger}^{\pi}(s, H) \geq V_{M^\dagger}^{\pi}(s) - \frac{(\gamma^{\dagger})^{H+1}}{1-\gamma^{\dagger}}.
  \]
  As a corollary of this, for $H \geq \frac{1}{1-\gamma^\dagger} \ln \frac{1}{\epsilon(1-\gamma^\dagger)}$, we have:  \[
  V_{M^\dagger}^{\pi}(s, H) \geq V_{M^\dagger}^{\pi}(s) - \epsilon.
  \]
  By the same argument, in the case that the rewards are bounded below by $-1$,
   \[
  V_{M^\dagger}^{\pi}(s, H) \leq V_{M^\dagger}^{\pi}(s) + \epsilon.
  \]
\end{lemma}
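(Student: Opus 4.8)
The plan is to prove all three bounds by directly controlling the discounted reward tail that the finite-horizon value function discards. Writing out the definitions,
\[
V_{M^\dagger}^{\pi}(s) - V_{M^\dagger}^{\pi}(s, H) = \mathbb{E}\left[ \left. \sum_{t = H+1}^{\infty} (\gamma^\dagger)^t R^\dagger(s_t, a_t) \ \right| \pi, s_0 = s \right],
\]
so the entire lemma reduces to bounding this expected tail. First I would handle the case where rewards are bounded above by $1$. Since $R^\dagger(s_t,a_t) \leq 1$ pointwise, the tail is at most the deterministic geometric sum $\sum_{t=H+1}^{\infty} (\gamma^\dagger)^t = (\gamma^\dagger)^{H+1}/(1-\gamma^\dagger)$. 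This gives $V_{M^\dagger}^{\pi}(s) - V_{M^\dagger}^{\pi}(s, H) \leq (\gamma^\dagger)^{H+1}/(1-\gamma^\dagger)$, which rearranges to the first displayed inequality. (When some rewards equal $-\infty$ so that $V_{M^\dagger}^{\pi}(s) = -\infty$, the inequality holds trivially, since the right-hand side is then also $-\infty$; the upper bound on each reward is all that the argument uses.)

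Next I would derive the corollary by showing the error term is at most $\epsilon$ under the stated horizon. The key step is the standard estimate $\gamma^\dagger \leq e^{-(1-\gamma^\dagger)}$, which follows from $\ln x \leq x-1$ applied at $x = \gamma^\dagger$, giving $\ln \gamma^\dagger \leq -(1-\gamma^\dagger)$. Hence $(\gamma^\dagger)^H \leq e^{-H(1-\gamma^\dagger)}$, and substituting $H \geq \frac{1}{1-\gamma^\dagger}\ln\frac{1}{\epsilon(1-\gamma^\dagger)}$ yields $e^{-H(1-\gamma^\dagger)} \leq \epsilon(1-\gamma^\dagger)$. Combining with $(\gamma^\dagger)^{H+1} \leq (\gamma^\dagger)^{H}$ gives
\[
\frac{(\gamma^\dagger)^{H+1}}{1-\gamma^\dagger} \leq \frac{\epsilon(1-\gamma^\dagger)}{1-\gamma^\dagger} = \epsilon,
\]
which establishes the second bound.

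Finally, for the third claim, I would run the identical tail argument but now using the lower bound $R^\dagger(s_t,a_t) \geq -1$. This lower-bounds the discarded tail by $-\sum_{t=H+1}^{\infty}(\gamma^\dagger)^t = -(\gamma^\dagger)^{H+1}/(1-\gamma^\dagger)$, so that $V_{M^\dagger}^{\pi}(s) - V_{M^\dagger}^{\pi}(s, H) \geq -(\gamma^\dagger)^{H+1}/(1-\gamma^\dagger)$, i.e.\ $V_{M^\dagger}^{\pi}(s, H) \leq V_{M^\dagger}^{\pi}(s) + (\gamma^\dagger)^{H+1}/(1-\gamma^\dagger)$, and the same horizon computation as above converts the error term into $\epsilon$. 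There is no genuine obstacle in this proof; the only points requiring minor care are the convergence of the geometric tail (guaranteed by $\gamma^\dagger < 1$) and the elementary logarithmic inequality used to invert the horizon bound.
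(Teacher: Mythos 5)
Your proposal is correct and follows essentially the same route as the paper's proof: bound the discarded discounted tail by the geometric series $\sum_{t=H+1}^{\infty}(\gamma^\dagger)^t = (\gamma^\dagger)^{H+1}/(1-\gamma^\dagger)$ using the reward bound, then verify that the stated horizon makes this at most $\epsilon$. In fact you fill in two details the paper leaves implicit — the elementary inequality $\ln \gamma^\dagger \leq -(1-\gamma^\dagger)$ used to invert the horizon condition, and the trivial handling of the $-\infty$ case — so the argument is, if anything, more complete than the original.
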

\begin{proof}
  This proof follows the proof of Lemma 2 from \citet{kearns2002near}.
  
  
  Observe that by truncating any trajectory to $H$ steps, the cumulative discounted reward for this trajectory can drop by a value of at most:
  \[
  \sum_{t=H+1}^{\infty}(\gamma^\dagger)^t = \frac{(\gamma^{\dagger})^{H+1}}{1-\gamma^{\dagger}},
  \]
  which happens when it receives a reward of $1$ at every time step after $H$. Thus for any $H$ such that the above quantity is lesser than or equal to $\epsilon$,  we will have $V^{\pi}(s, H) \geq V^{\pi}(s) - \epsilon$. This is indeed true for $H \geq \frac{1}{1-\gamma^\dagger} \ln \frac{1}{\epsilon(1-\gamma^\dagger)}$.
\end{proof}

\begin{lemma} \label{lem:optimistic_q}
  Suppose that all confidence intervals are admissible. Let $M^\dagger= \langle S, A, T, R^\dagger, \gamma^\dagger\rangle$ be the same MDP as $M$ except with arbitrary rewards that are upper bounded by $1$ and discount factor $\gamma^\dagger < 1$. Let $\optpi^\dagger$ denote the optimal policy i.e., $\forall s$, $\optpi^\dagger(s) = \arg\max_{a \in A} \opt{Q}^\dagger(s,a)$. Let $\opt{M}^\dagger$ denote the optimal MDP. Then, for all $H \geq 0$ and for all $(s,a)$, we have:
  \[
   V^{\optpi^\dagger}_{\opt{M}^{\dagger}}(s) \geq  V^{*}_{M^\dagger}(s)
  \]
  
\end{lemma}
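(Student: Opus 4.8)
The plan is to establish the standard \emph{optimism} property: the value obtained under the most favorable candidate transition dominates the true optimal value. The crucial ingredient is that admissibility of the confidence intervals guarantees that the true transition $T$ itself lies in $CI(\del{T})$ --- it satisfies the $L_1$ bound by admissibility, the zero-support conditions by Lemma~\ref{lem:del-support}, and the $Z_0$-closure condition of Definition~\ref{def:ci} by Assumption~\ref{as:z_0}. Consequently, in the optimistic Bellman backup of Equation~\ref{eq:q-values}, where the maximization ranges over all candidate transitions, plugging in $T$ is always a feasible choice, so the optimistic backup can only exceed the backup under $T$.

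Concretely, I would prove by induction on the horizon $t$ that $\opt{Q}^\dagger(s,a,t) \geq Q^*_{M^\dagger}(s,a,t)$ for all $(s,a)$, where $Q^*_{M^\dagger}(\cdot,\cdot,t)$ is the true finite-horizon optimal $Q$-value on $M^\dagger$. The base case holds since both sides agree at $t=0$ (equal to $0$) and at $t=1$ (equal to $R^\dagger$). For the inductive step I would restrict the maximization to the single feasible candidate $T$ and then apply the induction hypothesis together with monotonicity of $\max_{a'}$:
\begin{align*}
\opt{Q}^\dagger(s,a,t)
&= R^\dagger(s,a) + \gamma^\dagger \max_{T^\dagger \in CI(\del{T})} \sum_{s'} T^\dagger(s,a,s') \max_{a'} \opt{Q}^\dagger(s',a',t-1) \\
&\geq R^\dagger(s,a) + \gamma^\dagger \sum_{s'} T(s,a,s') \max_{a'} \opt{Q}^\dagger(s',a',t-1) \\
&\geq R^\dagger(s,a) + \gamma^\dagger \sum_{s'} T(s,a,s') \max_{a'} Q^*_{M^\dagger}(s',a',t-1) = Q^*_{M^\dagger}(s,a,t).
\end{align*}
Taking $t \to \infty$, and using that the optimistic backup is a contraction so that $\opt{Q}^\dagger(s,a,t) \to \opt{Q}^\dagger(s,a)$, yields $\opt{Q}^\dagger(s,a) \geq Q^*_{M^\dagger}(s,a)$.

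Finally I would connect $\opt{Q}^\dagger$ to the value on the optimistic MDP. Since $\opt{M}^\dagger$ is built from the optimistic transitions $\opt{T}^\dagger$ attaining the maximum in Equation~\ref{eq:q-values}, and $\optpi^\dagger$ is greedy with respect to $\opt{Q}^\dagger$, the fixed-point property of the contraction gives $V^{\optpi^\dagger}_{\opt{M}^\dagger}(s) = \max_a \opt{Q}^\dagger(s,a)$. Combining this with $V^*_{M^\dagger}(s) = \max_a Q^*_{M^\dagger}(s,a)$ and the inequality just established gives $V^{\optpi^\dagger}_{\opt{M}^\dagger}(s) \geq V^*_{M^\dagger}(s)$, as claimed.

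The main obstacle is the presence of $-\infty$ rewards, which lets $Q$-values equal $-\infty$ and demands care with the arithmetic and the limit. I would dispatch this by separating the case $Q^*_{M^\dagger}(s,a,t) = -\infty$ (where the inequality is trivial) from the finite case (where every term in the true backup is finite and the manipulations above go through unchanged). One subtlety to verify is that the optimistic policy may route around $-\infty$-reward successors that the true optimal policy cannot avoid; this only \emph{increases} the optimistic value and so cannot violate the inequality, but I must confirm that the contraction limit and the greedy-policy identification $V^{\optpi^\dagger}_{\opt{M}^\dagger} = \max_a \opt{Q}^\dagger$ remain valid in the extended reals.
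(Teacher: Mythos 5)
Your proof is correct, but it is genuinely more self-contained than what the paper does: the paper's entire proof of Lemma~\ref{lem:optimistic_q} is a citation to Lemma 6 of \citet{strehl2008analysis}, plus a one-line remark that the argument survives negative rewards. Your write-up reconstructs that argument (induction on the horizon, using feasibility of the true $T$ inside the maximization of Equation~\ref{eq:q-values}, then passing to the limit and identifying $V^{\optpi^\dagger}_{\opt{M}^\dagger} = \max_a \opt{Q}^\dagger(\cdot,a)$ via the fixed-point property), and in doing so it fills a real gap the citation glosses over: in this paper the candidate set $CI(\del{T})$ is \emph{not} the plain $L_1$ ball of Strehl--Littman, but carries the two extra constraints of Definition~\ref{def:ci} (forced zero-support when $\del{\epsilon}_T(s,a) < \tau$, and $Z_0$-closure), so one must actually verify that the true $T$ satisfies them before "plugging in $T$" is legal. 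You do verify this, which the paper never does explicitly. Your handling of the $-\infty$ rewards (splitting off the $Q^*_{M^\dagger} = -\infty$ case, where the inequality is vacuous) is also the right way to make the extended-real arithmetic rigorous.

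One small imprecision: for the zero-support condition of Definition~\ref{def:ci} you cite Lemma~\ref{lem:del-support}, but that lemma is only stated for $\del{\epsilon}_T(s,a) < \tau/2$, whereas condition 2 uses the threshold $\tau$. The correct justification is direct: if $\del{T}(s,a,s') = 0$ and $\del{\epsilon}_T(s,a) < \tau$, then admissibility gives $T(s,a,s') \leq \|T(s,a,\cdot) - \del{T}(s,a,\cdot)\|_1 \leq \del{\epsilon}_T(s,a) < \tau$, and Assumption~\ref{as:negligible_transitions} then forces $T(s,a,s') = 0$. This is a one-line fix, not a gap in the structure of your argument.
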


\begin{proof}
  See Lemma 6 from \citet{strehl2008analysis}.
  Note that this proof also applies to MDPs with negative rewards.
  
  
  
  
\end{proof}

\section{Experiment Details} \label{appendix:experiment}

\begin{figure}
    \centering
    \includegraphics[width=0.7\textwidth]{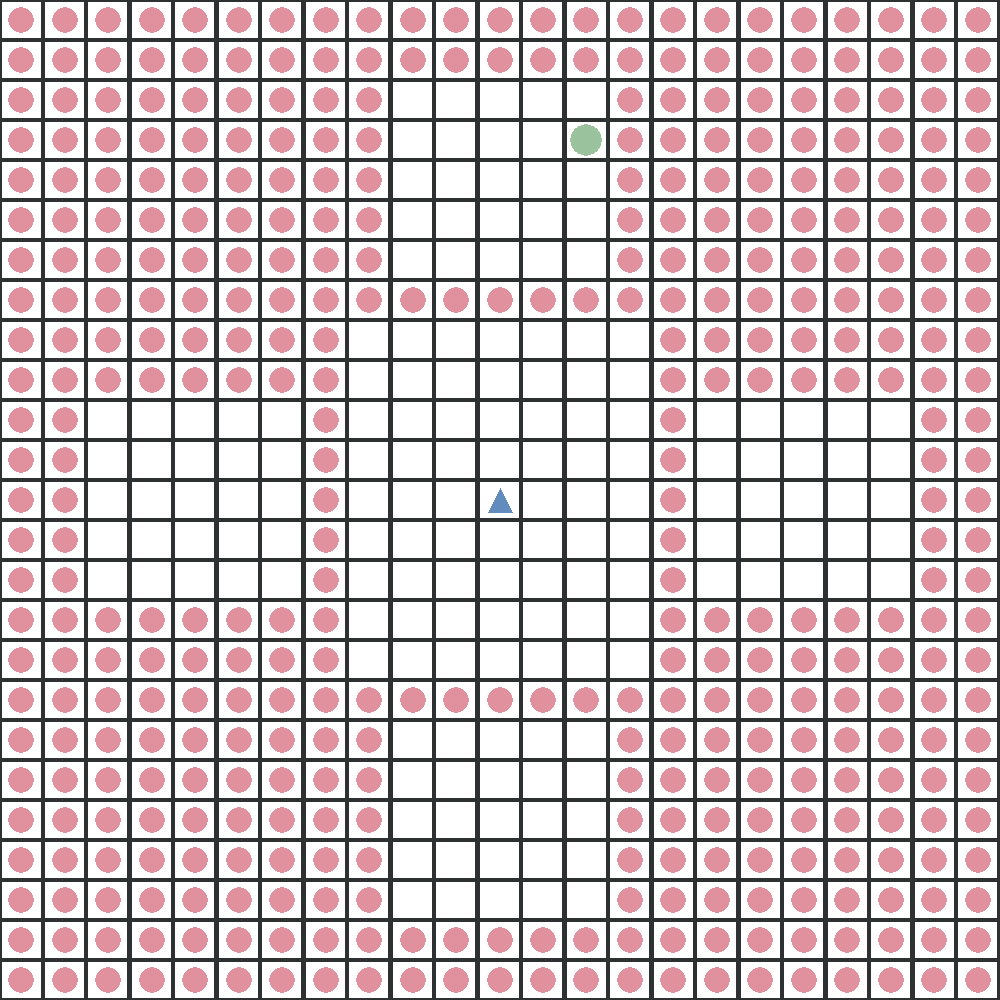}
    \caption{
    Full map of the Unsafe Grid World environment.
    The green circle marks the goal, the blue triangle marks the initial location of the agent $\sinit$, and red circles correspond to dangerous states.
    }
    \label{fig:grid_world_map}
\end{figure}

\subsection{Unsafe Grid World}

The first domain we consider is a grid world domain with dangerous states, where the agent receives a reward of $-1$ for any action and the episode terminates.
The agent starts on a $7 \times 7$ island of safe states and is surrounded by four $5 \times 5$ islands of safe states in all four directions, separated from the center island by a one-state-thick line of dangerous states (see Figure~\ref{fig:grid_world_map}).
The goal is placed on one of the surrounding islands.
The agent can take actions up, down, left, or right to move in those directions one step, or can take actions jump up, jump down, jump left, or jump right to move two steps, allowing the agent to jump over dangerous states.
There is a slipping probability of $60\%$, which causes the agent to fall left or right of the intended target ($30\%$ for either side).

The initial safe set provided to the agent is the whole center island (except for the corners) and all actions that with probability $1$ will keep the agent on the center island.
The distance function $\Delta$ provided to the agent is $\Delta((s, a), (\tilde{s}, \tilde{a})) = 0$ if $a = \tilde{a}$ and $s$ and $\tilde{s}$ are within $5$ steps from each other (in $L_\infty$ norm) and $\Delta((s, a), (\tilde{s}, \tilde{a})) = 1$ otherwise.
The analogous state function $\alpha$ is simply $\alpha((s, \cdot, s'), (\tilde{s}, \cdot)) = (x_{s'} + (x_{\tilde{s}} - x_{s}), y_{s'} + (y_{\tilde{s}} - y_{s}))$, where the subscripts denote the state to which the attribute belongs.

\subsection{Discrete Platformer}

We also consider a more complicated discrete Platformer domain.
The states space consists of tuples $(x, y, \dot{x}, \dot{y})$ where $x, y$ are the coordinates of the agent and $\dot{x}, \dot{y}$ are the directional velocities of the agent.
The actions provided to the agent are the tuple $(\dot{x}_{\text{desired}}, j)$ where $\dot{x}_{\text{desired}}$ is the desired $\dot{x}$ and ranges from $-2$ to $2$, and $j$ is a boolean indicating whether or not the agent should jump.
While on the ground, at every step $\dot{x}$ changes by at most $1$ in the direction of $\dot{x}_{\text{desired}}$ and $\dot{y} \in \{1, 2\}$ if $j = 1$ (otherwise $\dot{y}$ remains unchanged).
While in the air, however, the agent's actions have no effect and gravity decreases $\dot{y}$ by one at every step.
When the agent returns to the ground, $\dot{y}$ is set to $0$.

There are three types of surfaces in the environment: 1) concrete, 2) ice, and 3) sand.
These surfaces change how high the agent can jump.
On concrete, when the agent jumps, $\dot{y} = 2$ with probability $1$; on ice $\dot{y} = 2$ with probability $0.5$ and $\dot{y} = 1$ with probability $0.5$; and on sand $\dot{y} = 1$ with probability $1$.

The environment is arranged into three islands.
The first island has all three surface materials from left to right: sand, ice, then concrete.
The next two islands are just concrete, with the last one containing the goal state (where the reward is $1$).
The regions surrounding these island are unsafe, meaning they produce rewards of $-1$ and are terminal.
The islands are spaced apart such that the agent must be on concrete to make the full jump to the next islands (and visa versa).

The initial safe set provided to the agent is the whole first island and all actions that with probability $1$ will keep the agent on the center island.
The distance function $\Delta$ provided to the agent is $\Delta((s, a), (\tilde{s}, \tilde{a})) = 0$ if $a = \tilde{a}$ and $s$ and $\tilde{s}$ are either both in the air or both on the same type of surface and $\Delta((s, a), (\tilde{s}, \tilde{a})) = 1$ otherwise.
The analogous state function $\alpha$ is simply $\alpha((s, \cdot, s'), (\tilde{s}, \cdot)) = \tilde{s'}$ where $\tilde{s'}$ has the same $y$, $\dot{x}$, and $\dot{y}$ values as $s'$ with the $x$ value shifted by the $x$ difference between $s$ and $\tilde{s}$.

\subsection{Baselines}

Here we describe the details for the baselines we compare against.
We note that all these baselines make use of the distance metric and analogous state function to transfer information between different states, just like our algorithm.
For all of our ``unsafe'' algorithms, we set all negative rewards to be very large to ensure that they converged to the safe-optimal policy.
To improve the runtime of the experiments, the value functions and safe sets are only re-computed every $100$ time steps.

\paragraph{MBIE}
MBIE \citep{strehl2008analysis} is a guided exploration algorithm that always follows a policy that maximizes an optimistic estimate of the optimal value function.
As noted above, one of the motivations of our method was to construct a safe version of MBIE.

\paragraph{R-Max and Safe R-Max}
The next algorithm we compare against is R-Max \citep{brafman2002r}.
This algorithm sets the value function for all state-action pairs that have been seen fewer than $m$ times (for some integer $m$) to be equal to $V_{\max}$, the maximum value the agent can obtain.
In order to ensure that all states are sufficiently explored and still make use of the analogous state function, we set the value of any state-action pair, $(s, a)$, to $V_{\max} = 1$ (since all goal states are terminal) only if there is a state-action pair similar to $(s, a)$ with a transferred confidence interval length greater than some $\epsilon' > 0$.
In mathematical terms, all state-action pairs in $\{(s, a) \in S \times A : \exists (\tilde{s}, \tilde{a}) \in S \times A \text{ where } \del{\epsilon}_T(s,a) < \epsilon' \text{ and } \Delta((s, a), (\tilde{s}, \tilde{a})) < \tau/2 \}$ are set to $V_{\max}$.
Clearly, this requires at most every state to be explored $m$ times, but in most cases decreases the number of times each state-action pair needs to be explored.
In our experiments we set $\epsilon' = \tau/2$ to give R-Max the most generous comparison against ASE, since ASE requires that a state-action only have a confidence interval of $\tau/2$ before it can be marked as safe.
However, note that in many problems $\tau/2$ may be much larger than the desired confidence interval.

Our safe modification of this algorithm, ``Safe R-Max,'' simply restricts the allowable set of actions the agent can take to $\hatZsafe$.

\paragraph{$\epsilon$-greedy and Safe $\epsilon$-greedy}
Another classic algorithm we compare against is $\epsilon$-greedy.
This algorithm acts according to the optimal policy over its internal model at every time step with probability $1 - \epsilon$ and with probability $\epsilon$ the agent takes a random action.
For our experiments we anneal $\epsilon$ between $1$ and $0.1$ for the first $N$ number of steps ($N = \num{5000}$ for the unsafe grid world and $N = \num{20000}$ for the discrete platformer game).
Our safe modification of this algorithm, ``Safe $\epsilon$-greedy,'' simply restricts the allowable set of actions the agent can take to $\hatZsafe$.

\paragraph{Undirected ASE}
We also compare against a modified version of our algorithm ``Undirected ASE.''
This modification changes Algorithm~\ref{alg:z_goal} such that $Z_{\text{edge}} \gets \{(s,a) \in \hatZsafe^c \; | s \in \hatZsafe \}$, removing the use of $\optZgoal$. 
With this change, ``Undirected ASE'' simply tries to expand the safe set in all directions, instead of only along the direction of the optimistic goal policy.
This baseline is to illustrate the efficacy of using our directed exploration method.

\end{document}